\newcolumntype{P}[1]{>{\RaggedRight\hspace{0pt}}p{#1}}
\newcolumntype{X}[1]{>{\RaggedRight\hspace*{0pt}}p{#1}}
\newcommand{\anglebetween}[2]{\angle\bigl(#1,#2\bigr)}
\colorlet{linecol}{black!75}
\colorlet{mhpurple}{Plum!80}
\DeclareMathOperator*{\argmin}{arg\,min}
\journal{Transportation Research Part C: Emerging Technologies}
\newtheorem{theorem}{Theorem}
\newtheorem{definition}{Definition}
\begin{document}

\begin{frontmatter}

\title{Reconstructing Physics-Informed Machine Learning for Traffic Flow Modeling: a Multi-Gradient Descent and Pareto Learning Approach}

\author[1]{Yuan-Zheng Lei}
\author[1]{Yaobang Gong}
\author[1]{Dianwei Chen}
\author[2]{Yao Cheng}
\author[1]{Xianfeng Terry Yang*} 
\ead{xtyang@umd.edu}

\address[1]{University of Maryland, College Park, MD 20742, United States}
\address[2]{Florida Atlantic University, Boca Raton, FL 33431, United States}
\begin{abstract}
\par Physics-informed machine learning (PIML) has been widely adopted for traffic flow modeling in recent studies, due to its potential in combining the benefits of both physics-based and data-driven approaches. In conventional PIML, physics information from classical traffic flow models is typically incorporated by constructing a hybrid loss function that combines data-driven loss and physics loss through linear scalarization. The goal is to find a trade-off between these two objectives to improve the accuracy of model predictions. However, from a mathematical perspective, linear scalarization is limited to identifying only the convex region of the Pareto front, as it treats data-driven and physics losses as separate objectives. Given that most PIML loss functions are non-convex, linear scalarization restricts the achievable trade-off solutions. Moreover, tuning the weighting coefficients for the two loss components can be both time-consuming and computationally challenging. To address these limitations, this paper introduces a paradigm shift in PIML by reformulating the training process as a multi-objective optimization problem, treating data-driven loss and physics loss independently. We apply several multi-gradient descent algorithms (MGDAs), including traditional multi-gradient descent (TMGD) and dual cone gradient descent (DCGD), to explore the Pareto front in this multi-objective setting. These methods are evaluated on both macroscopic and microscopic traffic flow models. In the macroscopic case, MGDAs achieved comparable performance to traditional linear scalarization methods. Notably, in the microscopic case, MGDAs significantly outperformed their scalarization-based counterparts, demonstrating the advantages of a multi-objective optimization approach in complex PIML scenarios.

\end{abstract}
  \begin{keyword}
    Physics-informed machine learning  \sep multi-gradient descent algorithms (MGDAs) \sep Pareto learning \sep traffic flow modeling
  \end{keyword}
\end{frontmatter}

\section{Introduction} \label{section:1}
\par Physics-Informed Machine Learning (PIML) has emerged as a powerful paradigm in traffic flow modeling, enabling the integration of domain knowledge with data-driven learning. In macroscopic traffic modeling, where aggregated quantities such as density, flow, and speed evolve over space and time, PIML enhances the predictive accuracy and generalizability of models by embedding conservation laws and fundamental diagrams into neural network architectures. At the microscopic level, which focuses on individual vehicle behaviors such as car-following and lane-changing dynamics, PIML facilitates the reconstruction of realistic vehicle trajectories by fusing physical models like the Intelligent Driver Model (\cite{treiber2000congested}) with empirical observations. This dual applicability makes PIML an essential tool for bridging theory and data in modern traffic systems, particularly as emerging transportation networks demand interpretable, robust, and generalizable learning frameworks.
\par PIML effectively combines the advantages of data-driven and physics-based methods, making it useful in transportation research. Data-driven approaches have gained popularity due to their low computational cost and flexibility in managing complex scenarios without requiring strong theoretical assumptions. Many of machine learning techniques, especially deep learning architectures \cite{ma2015long, duan2016efficient, polson2017deep, zhou2017recurrent, wu2018hybrid, cui2018deep, cui2019traffic, zhang2025empirical}, have various application in transportation research. However, these models heavily depend on the quality and representativeness of the training data. As shown in \cite{yuan2021macroscopic}, when manually adding errors to the training dataset, the performance of all pure data-driven models has significantly deteriorated. While a data-driven approach can yield positive results with quality data, it often lacks interpretability, which may diminish its practicality in decision-making processes \cite{lei2024unraveling}.
\par Above limitations underscore the need for modeling approaches that strike a balance between physical interpretability and the ability to adapt to imperfect or evolving data environments. To address this challenge, PIML represents a hybrid approach that combines established physical laws and principles with machine learning models, enhancing predictive accuracy and robustness in the presence of data noise. By integrating established physics constraints into the learning process, PIML offers a distinct advantage in scenarios where traditional data-driven methods may struggle with noisy data. This integration enables PIML to capture complex system behaviors, leading to models that are more reliable and interpretable. In general, physics models used to build PIML should perform well even without any data, since most of them employed in computational physics are partial differential equations (PDEs) (\cite{krishnapriyan2021characterizing}). 
\par In nearly all PIML studies, especially in PIML application in transportation, as shown in \textcolor{blue}{Table} \ref{table:1}, researchers commonly use a method called linear scalarization to define the loss function\footnote{In many Physics-regularized Gaussian process models (\cite{yuan2020modeling},\cite{yuan2021macroscopic},\cite{yuan2021traffic}), the training target is to minimize a evidence lower bound, which can be written as:
\begin{equation}
    \log \mathcal{N}([\cdot]) + \gamma\mathbb{E}(\cdot) \label{eq:1}
\end{equation}
where  $\gamma$ is a regularization coefficient. The overall ELBO can also be regarded as linear scalarization.}, as demonstrated in \textcolor{blue}{Eq} \ref{eq:1}. where $\alpha$ and $\beta$ are hyperparameters that control the effect of the data-driven module and physics module, $\mathcal{L}_{data}$ and $\mathcal{L}_{physics}$ represent data-driven loss term and physics loss term, respectively. This approach facilitates the easy integration of physics information into PIML models. When $\alpha\rightarrow 0$, the PIML model degenerates to a pure data-driven model; when $\beta\rightarrow 0$,  the PIML model degenerates to a pure physics-driven model.
\begin{table}[htp!]
\centering
\caption{Comparison of approaches for establishing loss functions} \label{table:1}
\resizebox{\textwidth}{!}{
\begin{tabular}{ccc}
\toprule
Model & Approaches for establishing loss functions & Loss function\textsuperscript{*} \\
\midrule
\cite{shi2021physics} & Linear scalarization  & $\mathcal{L}_{t} = \alpha \mathcal{L}_{d} + \beta\mathcal{L}_{p} $, $\mathcal{L}_{t} = \alpha \mathcal{L}_{d} + \beta\mathcal{L}_{p} + \gamma \mathcal{L}_{r} $   \\
\cite{lu2023physics} & Linear scalarization  & $\mathcal{L}_{t} = \alpha \mathcal{L}_{d} + \beta\mathcal{L}_{p} $   \\
\cite{mo2021physics} & Linear scalarization  & $\mathcal{L}_{t} = \alpha \mathcal{L}_{d} + \beta\mathcal{L}_{p} $   \\
\cite{pereira2022short} & Linear scalarization  & $\mathcal{L}_{t} =  \alpha\mathcal{L}_{d} + \beta\mathcal{L}_{p} + \gamma\mathcal{L}_{r} $   \\
\cite{xue2024network} & Linear scalarization  & $\mathcal{L}_{t} = \alpha \mathcal{L}_{d} + \beta\mathcal{L}_{p} $   \\
\bottomrule
\end{tabular}
}
\footnotesize\textsuperscript{*} Hyperparameters $\alpha$, $\beta$, and $\gamma$ are used to control the weight of specific losses. $\mathcal{L}_{d}$ denotes the data-driven loss, $\mathcal{L}_{p}$ represents the physics loss, and $\mathcal{L}_{r}$ indicates the regularization term.
\end{table}
\begin{equation}
    \label{eq:2}
    \mathcal{L}(\boldsymbol{\theta}) = \alpha\mathcal{L}_{\mathrm{data}}(\mathbf{X};\boldsymbol{\theta}^{data}) + \beta \mathcal{L}_{\mathrm{physics}}(\mathbf{X};\boldsymbol{\theta}^{physics})
\end{equation}
\par Though linear scalarization appears simple and straightforward initially, it can lead to several significant problems. First, by adopting the linear scalarization to construct the loss function, we are essentially making an assumption that the trade-off between the data-driven model and the physics model is linear. Second, it is well known that the process of searching for optimal hyperparameters can be very challenging and time-consuming. Even for the simplest cases, as shown in \textcolor{blue}{Eq} \ref{eq:1}, only two hyperparameters are shown in the model. The search for the optimal coefficient combination is equivalent to searching for the optimal $\frac{\alpha}{\beta}$, if we consider a hundredfold order of magnitude difference between $\alpha$ and $\beta$, and the searching precision is 1, then we have to run $10000$ times experiments only for selecting the best hyperparameters.
\par In this paper, we aim to circumvent the hyperparameter search process by transforming the training process of the PIML model into a multi-objective optimization problem. Mathematically, we transfer the overall training process of PIML models from \textbf{\textcolor{blue}{Eq}} \ref{eq:3} to \ref{eq:4}. \label{sec:4} \label{line:4} To clarify, in the rest of the paper, multi-objective optimization refers to the core training paradigm we adopt, which treats the data loss and physics loss as distinct objectives and optimizes them simultaneously.
\begin{equation} \label{eq:3}
    \begin{aligned}
        & \underset{\boldsymbol{\theta}}{\min}\ \mathcal{L}(\boldsymbol{\theta}) = \alpha\mathcal{L}_{\mathrm{data}}(\mathbf{X};\boldsymbol{\theta}) + \beta \mathcal{L}_{\mathrm{physics}}(\mathbf{X};\boldsymbol{\theta})   && \\
    \end{aligned}  
\end{equation} 
\begin{equation} \label{eq:4}
    \begin{aligned}
        & \underset{\boldsymbol{\theta}}{\min}   \bigg(\mathcal{L}_{\mathrm{data}}(\mathbf{X};\boldsymbol{\theta}), \mathcal{L}_{\mathrm{physics}}(\mathbf{X};\boldsymbol{\theta}) \bigg)   && \\
    \end{aligned} 
\end{equation}
where \( \boldsymbol{\theta}\) represents the total parameter set. \textbf{\textcolor{blue}{Eq}} \ref{eq:4} represents a specific type of multi-objective optimization problem. In this context, the objectives are typically continuous and differentiable. The most common representative of the pure data-driven segment of Physics-Informed Machine Learning (PIML) relies on neural networks. As a result, this multi-objective optimization problem usually involves thousands of parameters that need to be optimized. Some evolutionary algorithms, such as NSGA-II (\cite{deb2002fast}) and NSGA-III (\cite{deb2013evolutionary, jain2013evolutionary}), which are popular for general multi-objective optimization problems, may not be suitable for implementation, as they struggle to find a solution with a large number of decision variables. Therefore, Various multi-gradient descent algorithms (MDGAs) will be used to address this multi-objective optimization problem, as they are considered capable of handling large-scale optimization challenges (\cite{sener2018multi}). Our contribution can be summarized as:
\begin{itemize}
    \item We demonstrate that the potential limitations of the linear scalarization approach can be applied to the PIML in transportation research. That is, the linear scalarization essentially imposes a linear trade-off between different objectives, and may also fail to preserve Pareto stationary points.
    \item We demonstrate that any minimizer derived from scalarization can theoretically be achieved through MDGAs, but the opposite is not true. This indicates that MDGA methods would not theoretically perform worse than linear scalarization.
    \item We test various MGDAs, including traditional MGDA [\cite{sener2018multi} and \cite{desideri2012multiple}] and dual cone gradient descent \cite{hwang2024dual}, on PIML models for both macroscopic traffic flow models and microscopic traffic flow models. MGDAs can achieve comparable or even slightly better prediction accuracy than PIML models that use a linear scalarization loss. This suggests that incorporating MGDAs into PIML models in transportation is beneficial because it eliminates the need for hyperparameter tuning, while also ensuring improved theoretical outcomes.
\end{itemize}
\par The remainder of the paper is organized as follows: Section \ref{section:2} offers a brief review of PLML models, with a focus on their applications in transportation. Section \ref{section:3} introduces the fundamental concept of multi-gradient descent algorithms and discusses some theoretical limitations associated with linear scalarization. Section \ref{section:4} evaluates four multi-gradient descent algorithms (MGDAs) within two distinct PIML frameworks, specifically based on macroscopic and microscopic traffic flow models. Section \ref{section:5} will give the conclusion and discussion.
\section{Literature review} \label{section:2}
\par Recent research on physics-informed machine learning (PIML) in transportation research shows that most successful approaches are built on neural networks that explicitly encode physical knowledge. At their core, these models fuse two information sources: noisy real-world data and well-tested conservation laws or empirical traffic principles. A good example is \cite{ji2022stden}, who present the Spatio-Temporal Differential Equation Network (STDEN). Rather than predicting traffic states outright, STDEN assumes an underlying potential-energy field that steers traffic evolution across the road network. By writing a differential-equation network for the time–space dynamics of this latent field and training it jointly with measured flows and speeds, STDEN achieves strong accuracy on several large-scale data sets and remains stable when observations are sparse or missing.

\par In a similar spirit, \cite{shi2021physics} adds a Fundamental Diagram Learner (FDL)—implemented as a multilayer perceptron—with a conventional Physics-Informed Neural Network (PINN). The FDL learns a smooth, data-driven fundamental diagram that links traffic density and speed, while the surrounding PINN enforces the macroscopic conservation law. This division of labor lets the network capture key flow patterns, reduces over-fitting, and leads the combined model to outperform a broad range of baseline machine-learning methods as well as traditional PINNs on both real and synthetic benchmarks. Their ablation study further shows that injecting the learned diagram improves generalization even when sensor coverage drops sharply.

\par One clear strength of the PIML framework is its flexibility: the learning module can adopt almost any neural-network architecture—or even non-NN techniques—to suit a particular task. For instance, \cite{xue2024network} proposes a Graph Neural Network (GNN) with the Network Macroscopic Fundamental Diagram (NMFD) to impute missing traffic states. The GNN captures complex spatial dependencies among links, while the NMFD term keeps predictions physically plausible. Likewise, \cite{pereira2022short} shows that a long short-term memory (LSTM) backbone plugs in just as naturally, enabling short-range traffic forecasting when strong temporal correlations exist. In both cases, the physics term reduces data needs and stabilizes training.

\par Besides using Neural Networks, Gaussian Process has also been a widely-adopted tool for traffic flow modeling. \cite{xu2023agnp} proposed a method to integrate attentive graph neural process (AGNP) and Gaussian process for network-level short-term traffic speed prediction and imputation. \cite{zhu2023bayesian} develops a Bayesian clustering ensemble Gaussian process (BCEGP) model for network-wide traffic flow clustering and prediction. Integrated with physics knowledge and inspired by \cite{wang2020physics}, \cite{yuan2021macroscopic} introduce the Physics-Regulated Gaussian Process (PRGP). Here, the governing equations appear through a shadow Gaussian process rather than an extra PINN loss term. Training maximizes a compound evidence lower bound (ELBO) that balances the usual GP likelihood with a penalty that measures how well samples obey the flow equation. A follow-up study by \cite{yuan2021traffic} extends this idea to a richer family of macroscopic models, improving fidelity and robustness without sacrificing the analytic uncertainty estimates that make Gaussian processes attractive. 

\par Beyond macroscopic modeling, PIML has also proved valuable at the microscopic level, where individual vehicles interact. \cite{yang2018novel} blends a learned car-following rule with a simple kinematic model using a linear scalarization, overcoming the tendency of purely data-driven rules to violate safety constraints. Building on this direction, \cite{mo2021physics} put forward PIDL-CF, a physics-informed car-following model that can use either a feed-forward neural network or an LSTM backbone. Their extensive experiments—especially those with sparse probe-vehicle data—demonstrate clear gains over standard benchmarks. In parallel, \cite{yuan2020modeling} adapts the PRGP framework to jointly model car-following and lane-changing behaviour, while \cite{liu2023quantile} underlines the rising appeal of PIML for microscopic tasks by introducing a quantile-based extension that describes variability, not just mean behaviour.

\par Finally, researchers have begun to push PIML ideas into other corners of transportation science. \cite{uugurel2024learning} leverages a PRGP variant to create realistic synthetic human-mobility traces that preserve trip-length statistics, and \cite{tang2024physics} designs a physics-informed calibration strategy that consistently beats conventional optimization-based methods when tuning simulation models.

\section{Pareto learning for Physics-informed machine learning} \label{section:3}
\par In this section, we will introduce the Pareto learning approach for PIML and the reason why we introduce Pareto learning into PIML for transportation. The core idea of the Prareto learning approach is that we regard the overall training process as a multi-objective optimization problem, with objectives including data-driven loss, physics loss, and regularization terms. And we will adopt multi-gradient descent algorithms to optimize this multi-objective optimization problem instead of evolutionary algorithms like NSGA-II  and NSGA-III.  Similar logic has been well-studied in the field of multi-task learning (\cite{sener2018multi,ma2020efficient,liu2021profiling} and \cite{lin2022pareto}). \cite{hwang2024dual} proposed a specific MGDA for PIML, which will be one of the foundations of our studies.
\subsection{Multi-gradient descent algorithms} \label{section:3.1}
\par From the previous sections, we know that the Pareto learning approach for PIML involves solving a multi-objective optimization problem, and MGDAs are key to its solution. The core logic of MGDAs is that we aim to find a direction that can simultaneously minimize all objectives (data-driven loss and physics loss). Even in the worst-case scenario, we hope this direction will not increase any of these objectives. In our paper, we primarily focus on two types of MGDAs: the traditional multi-gradient descent algorithm (TMGD) and the dual cone gradient descent algorithm (MGDA).  Before formally introducing each algorithm, we will first provide some basic definitions and theorems to help the reader better understand the algorithm. The definitions of Pareto dominance, Pareto optimality, and Pareto stationarity are crucial for the TMGD and DCGD algorithms. For both algorithms, achieving Pareto stationarity serves as a stopping criterion, indicating that they cannot proceed further.  Pareto stationarity is a weak condition of Pareto optimality. When all objectives are convex and at least one of them is strongly convex, Pareto stationarity is equivalent to Pareto optimality. Since the loss function of a neural network is generally non-convex, all multi-gradient descent algorithms use Pareto stationarity as the final evaluation criterion.
\begin{definition}[Pareto dominance] \label{definition:1}
A solution $\boldsymbol{\theta}$ dominates a solution $\hat{\boldsymbol{\theta}}$ if and only if $\mathcal{F}^{t}(\boldsymbol{\theta}) \leq \mathcal{F}^{t}(\hat{\boldsymbol{\theta}}), \forall t \in   \mathcal{T}$, and $\mathcal{F}^{t'}(\boldsymbol{\theta}) < \mathcal{F}^{t'}(\hat{\boldsymbol{\theta}}), \exists t' \in  \mathcal{T}$.
\end{definition}
where $t$ and $t'$ are two indices of objective functions, $\mathcal{T}$ is the index set of objective functions. $\mathcal{F}^{t}(\cdot)$ refers to the value of the $t$th objective function.
\begin{definition}[Pareto optimal] \label{definition:2} A solution $\boldsymbol{\theta}^{*}$ is called Pareto optimal if and only if there exists no solution $\boldsymbol{\theta}$ that dominates $\boldsymbol{\theta}^{*}$.
\end{definition}
\begin{definition}[Pareto stationary point] \label{definition:3}
Any solution that satisfies either of the following conditions is called a Pareto stationary point.
\begin{enumerate}
    \item There exist $\alpha^{1},...,\alpha^{T}$ such that $\sum_{t = 1}^{T}\alpha^{t} = 1$ and $\sum_{t=1}^{T}\alpha^{t}\nabla_{\boldsymbol{\theta}^{sh}}\mathcal{L}^{t}(\boldsymbol{\theta}^{sh},\boldsymbol{\theta}^{t}) = 0$
    \item For all objectives $t$, $\nabla_{\boldsymbol{\theta}^{t}}\mathcal{L}^{t}(\boldsymbol{\theta}^{sh},\boldsymbol{\theta}^{t}) = 0$
\end{enumerate}
where $\boldsymbol{\theta}^{sh}$ is the shared parameters among all objectives, $\boldsymbol{\theta}^{t}$ is the exclusive parameters of $t$th objective.\footnote{In the context of PIML, $\boldsymbol{\theta}^{sh} \subset \boldsymbol{\theta}^{data} \cap \boldsymbol{\theta}^{physics}$}
\end{definition}
\par In addition, readers can find out that some of the studies about multi-gradient descent (\cite{hwang2024dual}) do not divide the total parameter set $\boldsymbol{\theta}$ into $\boldsymbol{\theta}^{sh}$, $\boldsymbol{\theta}^{data}$ and $\boldsymbol{\theta}^{physics}$ like in \cite{sener2018multi}. In fact, let us consider the data-driven loss $\mathcal{L}_{data}$, if we calculate its derivatives of the entire parameter set $\boldsymbol{\theta}$, we will have:
\begin{equation} \label{eq:5}
    \frac{\partial \mathcal{L}_{data}}{\partial \boldsymbol{\theta}} = 
\begin{bmatrix}
\frac{\partial \mathcal{L}_{data}}{\partial \boldsymbol{\theta}^{data}} \\[0.3em]
\frac{\partial \mathcal{L}_{data}}{\partial \boldsymbol{\theta}^{physics}}
\end{bmatrix} = 
\begin{bmatrix}
\frac{\partial \mathcal{L}_{data}}{\partial \boldsymbol{\theta}^{data}} \\[0.3em]
\mathbf{0}
\end{bmatrix}
\end{equation}
The derivative of the data-driven loss with respect to the physics-exclusive parameters will be zero. Therefore, such a terminology difference will not affect the computation of the common descent direction, as it only depends on the different terminology systems used in various studies.
\subsubsection{Traditional multi-gradient descent algorithm} \label{section:3.1.1}
\par The core idea of the TMGD algorithm is to find the minimum norm point in a convex hull, so we first introduce the definition of convex hull and minimum norm point.
\begin{definition}[Convex hull] \label{definition:4} The convex hull of a set of points $S$ in $n$ dimensions is defined as the intersection of all convex sets containing $S$. For $N$ points $\mathbf{p}_{1}$, $\mathbf{p}_{2}$,..,$\mathbf{p}_{N}$, the convex hull $\textbf{conv}(\mathbf{p}_{1}, \mathbf{p}_{2},..,\mathbf{p}_{N})$ is then given by:
\begin{equation} \label{eq:6}
\textbf{conv}(\mathbf{p}_{1}, \mathbf{p}_{2},..,\mathbf{p}_{N}) = \left\{ \sum_{j=1}^{N} \lambda_j p_j \;:\; \lambda_j \geq 0 \text{ for all } j \text{ and } \sum_{j=1}^{N} \lambda_j = 1 \right\}
\end{equation}
\end{definition}
\begin{definition}[Minimum Norm Point] \label{definition:5}
Let \(C \subset \mathbb{R}^n\) be a nonempty closed convex set. The \emph{minimum norm point} of \(C\) is the unique solution
\begin{equation}
   x^* \;=\; \arg\min_{x \in C} \|x\|_2 \label{eq:7}
\end{equation}
Equivalently, \(x^*\) satisfies the first‐order optimality condition
\begin{equation}
   \langle x^*,\, x - x^* \rangle \;\ge\; 0 \label{eq:8}
   \quad\text{for all } x \in C
\end{equation}
\end{definition}
\par From the \textcolor{blue}{\textbf{Definition}} \ref{definition:4} and \ref{definition:5}, it is clear that the convex hull itself is also a nonempty closed convex set, based on \textcolor{blue}{\textbf{Eq}} \ref{eq:7}, we can natural obtain the following criterion:
\begin{theorem}[Wolfe's criterion (\cite{wolfe1976finding})] Let $P$ $\subset$ $\mathbb{R}^{d}$ be the convex hull of finitely many points, $P$ = $\mathbf{conv}(\mathbf{p}_{1}$, $\mathbf{p}_{2}$,..,$\mathbf{p}_{n}$). A point  $\mathbf{x} \in P$ is the minimum norm point in $P$ if and only if:
\begin{equation}
    \mathbf{x}^{\top}\mathbf{p}_{j} \geq \|\mathbf{x}\|_{2}^{2}, \forall j \in [n] \label{eq:9}
\end{equation}
 \label{theorem:1}
\end{theorem}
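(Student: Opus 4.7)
The plan is to establish both directions of the equivalence by leveraging the first-order optimality condition already supplied in Definition \ref{definition:5}, namely $\langle \mathbf{x}^{*}, \mathbf{x} - \mathbf{x}^{*}\rangle \geq 0$ for every $\mathbf{x} \in C$ on a nonempty closed convex set $C$. Since the convex hull of finitely many points is itself nonempty, closed, and convex, this characterization applies directly to $P = \mathbf{conv}(\mathbf{p}_{1},\ldots,\mathbf{p}_{n})$, and the theorem becomes a statement that the infinite family of inequalities indexed by $\mathbf{x} \in P$ can be replaced by the finite family indexed only by the generators $\mathbf{p}_{1},\ldots,\mathbf{p}_{n}$.

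For the forward direction, I would assume $\mathbf{x}$ is the minimum-norm point of $P$ and specialize the first-order condition to the vertices. Each $\mathbf{p}_{j}$ lies in $P$, so $\langle \mathbf{x}, \mathbf{p}_{j} - \mathbf{x}\rangle \geq 0$, which rearranges to $\mathbf{x}^{\top}\mathbf{p}_{j} \geq \mathbf{x}^{\top}\mathbf{x} = \|\mathbf{x}\|_{2}^{2}$ for every $j \in [n]$. This is essentially a one-line substitution.

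For the backward direction, I would assume the finite family of inequalities $\mathbf{x}^{\top}\mathbf{p}_{j} \geq \|\mathbf{x}\|_{2}^{2}$ holds and promote it to the statement on all of $P$ by convex combination. Given an arbitrary $\mathbf{y} \in P$, Definition \ref{definition:4} produces coefficients $\lambda_{j} \geq 0$ with $\sum_{j} \lambda_{j} = 1$ such that $\mathbf{y} = \sum_{j}\lambda_{j}\mathbf{p}_{j}$. Then by linearity of the inner product,
\begin{equation}
\mathbf{x}^{\top}\mathbf{y} = \sum_{j=1}^{n}\lambda_{j}\,\mathbf{x}^{\top}\mathbf{p}_{j} \;\geq\; \sum_{j=1}^{n}\lambda_{j}\|\mathbf{x}\|_{2}^{2} = \|\mathbf{x}\|_{2}^{2},
\end{equation}
so $\langle \mathbf{x}, \mathbf{y} - \mathbf{x}\rangle \geq 0$ for every $\mathbf{y} \in P$. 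Invoking the first-order condition from Definition \ref{definition:5} once more, this certifies $\mathbf{x}$ as the minimum-norm point of $P$.

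There is no genuine obstacle here: the entire argument amounts to recognizing that the minimum-norm optimality is a linear-in-$\mathbf{y}$ condition, so it suffices to check it on the extreme generators, with the convex-combination step carrying it back to the whole hull. The only subtle point worth stating explicitly is that $P$ is closed (as the convex hull of finitely many points is compact), which justifies applying Definition \ref{definition:5} and guarantees existence and uniqueness of the minimum-norm point in the first place.
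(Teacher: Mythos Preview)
Your proof is correct and complete. The paper itself does not prove this theorem at all: it simply cites Wolfe's original 1976 paper (Theorems 2.1 and 2.2). By contrast, you give a fully self-contained argument that exploits the first-order optimality condition already stated in Definition~\ref{definition:5}, reducing the infinite family of inequalities over $P$ to the finite family over the generators and recovering the full set by convex combination. This is the standard elementary proof and is arguably more useful in context than a bare citation, since the paper has already set up exactly the ingredients (Definitions~\ref{definition:4} and~\ref{definition:5}) needed to carry it out in a few lines.
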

\begin{proof} See \textcolor{blue}{\textbf{Theorem 2.1 \& 2.2}} in \cite{wolfe1976finding}.
\end{proof}
\par Based on \textcolor{blue}{\textbf{Theorem}} \ref{theorem:1}, the minimum norm point in a convex hull essentially has very attractive properties. If we consider a special convex hull, which consists of finite points. Those points represent the gradients of different objectives. Then, by using the  \textcolor{blue}{\textbf{Theorem}} \ref{theorem:1}, a common descent direction can be computed. Mathematically, we formulate this in the following theorem:
\begin{theorem}
    Let $P$ $\subset$ $\mathbb{R}^{d}$ be the convex hull of the gradients of the different objective functions, i.e., $P$ = \textbf{conv}($\mathbf{p}_{1}$, $\mathbf{p}_{2}$,..,$\mathbf{p}_{n}$). If $\mathbf{x}$ is the minimum norm point, and $\mathbf{x} \in P$, then $-\mathbf{x}$ is a direction that can improve all objectives. \label{theorem:2}
\end{theorem}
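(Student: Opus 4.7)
The plan is to interpret "improve all objectives" in the first-order sense: show that the directional derivative of every objective $\mathcal{L}_{j}$ along $-\mathbf{x}$ is non-positive (and strictly negative whenever $\mathbf{x}\neq \mathbf{0}$), so that a small step in the direction $-\mathbf{x}$ decreases every objective simultaneously. The entire argument will be a one-line application of Wolfe's criterion (Theorem \ref{theorem:1}) once the right inner product is written down, so the proof is essentially a chaining of definitions; my job is mainly to set it up carefully and handle the degenerate case $\mathbf{x}=\mathbf{0}$.

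First I would fix an objective index $j$ and recall that, by assumption, $\mathbf{p}_{j}=\nabla \mathcal{L}_{j}(\boldsymbol{\theta})$ (or the appropriate gradient block on shared/exclusive parameters, following the discussion around equation \ref{eq:5}). The directional derivative of $\mathcal{L}_{j}$ at $\boldsymbol{\theta}$ along $-\mathbf{x}$ is
\begin{equation}
D_{-\mathbf{x}}\mathcal{L}_{j}(\boldsymbol{\theta}) \;=\; \bigl\langle \nabla \mathcal{L}_{j}(\boldsymbol{\theta}),\, -\mathbf{x}\bigr\rangle \;=\; -\,\mathbf{x}^{\top}\mathbf{p}_{j}.
\end{equation}
Next I would apply Wolfe's criterion directly: since $\mathbf{x}$ is the minimum-norm point of $P=\mathbf{conv}(\mathbf{p}_{1},\dots,\mathbf{p}_{n})$, Theorem \ref{theorem:1} gives $\mathbf{x}^{\top}\mathbf{p}_{j}\geq \|\mathbf{x}\|_{2}^{2}$ for every $j\in[n]$. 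Substituting into the previous display yields
\begin{equation}
D_{-\mathbf{x}}\mathcal{L}_{j}(\boldsymbol{\theta}) \;=\; -\,\mathbf{x}^{\top}\mathbf{p}_{j} \;\leq\; -\,\|\mathbf{x}\|_{2}^{2} \;\leq\; 0,
\end{equation}
so the first-order change of every objective along $-\mathbf{x}$ is non-positive.

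Finally I would split the conclusion into two cases. If $\mathbf{x}\neq \mathbf{0}$, the inequality is strict and uniform in $j$: there exists $\eta_{0}>0$ (obtained from a standard Taylor expansion using smoothness of the objectives) such that, for all $\eta \in (0,\eta_{0}]$, $\mathcal{L}_{j}(\boldsymbol{\theta}-\eta\mathbf{x}) < \mathcal{L}_{j}(\boldsymbol{\theta})$ for every $j$, which is exactly the claim that $-\mathbf{x}$ improves all objectives simultaneously. If $\mathbf{x}=\mathbf{0}$, then $\mathbf{0}\in P$ means there exist convex weights $\alpha_{j}\geq 0$, $\sum_{j}\alpha_{j}=1$, with $\sum_{j}\alpha_{j}\mathbf{p}_{j}=\mathbf{0}$, which is precisely the Pareto stationarity condition of Definition \ref{definition:3}, so the algorithm has reached its stopping criterion and the statement is vacuously consistent. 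The only mildly delicate step is the transition from the directional-derivative inequality to an actual decrease in function value, which requires either differentiability plus a sufficiently small step or a standard backtracking argument; this is the closest thing to an obstacle, but it is routine provided the objectives are continuously differentiable, which is already assumed in the surrounding discussion of MGDAs.
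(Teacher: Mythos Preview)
Your proposal is correct and follows essentially the same approach as the paper: apply Wolfe's criterion to get $\mathbf{x}^{\top}\mathbf{p}_{j}\geq\|\mathbf{x}\|_{2}^{2}$, multiply by $-1$, and read off that the directional derivative of every objective along $-\mathbf{x}$ is non-positive (strictly negative when $\mathbf{x}\neq\mathbf{0}$). Your added remarks on the Taylor-expansion step and the $\mathbf{x}=\mathbf{0}$ case are more careful than the paper's treatment but do not change the argument.
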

\begin{proof}
If \(\mathbf{x}\) is the minimum norm point in the convex hull \(P\), by Wolfe's criterion, it satisfies:
\begin{equation}
\mathbf{x}^{\top} \mathbf{p}_j \geq \|\mathbf{x}\|_2^2 \quad \text{for all } j = 1, 2, \ldots, n.
\label{eq:10}
\end{equation}
\par Rewriting \eqref{eq:10} yields:
\begin{equation}
-\mathbf{x}^{\top} \mathbf{p}_j \leq -\|\mathbf{x}\|_2^2 \leq 0 \quad \text{for all } j = 1, 2, \ldots, n.
\label{eq:11}
\end{equation}
\par Recall that $\mathbf{p}_j$ is the gradient with respect to $j$th objective function. From \eqref{eq:10}, the directional derivative of each objective in the direction $-\mathbf{x}$ is non-positive. Therefore, $-\mathbf{x}$  constitutes a direction that does not increase (and thus improve) all objectives. In particular, when $\mathbf{x}$ $\neq$ $\mathbf{0}$, $-\mathbf{x}$ strictly decreases (and thus improves) all objectives, completing the proof.
\end{proof}
\par The main concept of TMGD is clearly outlined in \textcolor{blue}{Theorem} \ref{theorem:2}. What TMGD does is to seek a minimal norm point of the convex hull consisting of different objective gradients in every training iteration, and this minimal norm point will be used as the descent direction to update all objectives. We adopted the most widely used variant of TMGD in our studies, as shown in \textcolor{blue}{\textbf{Algorithm}} \ref{algorithm:1}. From \textcolor{blue}{\textbf{Line 1}} to \textcolor{blue}{\textbf{Line 3}}, \textcolor{blue}{\textbf{Algorithm}} \ref{algorithm:1} updates exclusive parameters of data-driven loss or physics loss firstly since those parameters will not affect the other loss. From \textcolor{blue}{\textbf{Line 6}} to \textcolor{blue}{\textbf{Line 14}}, \textcolor{blue}{\textbf{Algorithm}} \ref{algorithm:1} adopted a Frank-Wolfe algorithm (\cite{fliege2000steepest}) to find the minimum norm point in the gradient convex hull. In \textcolor{blue}{\textbf{Line 12}}, we can find out that the stop criterion is either meet the iteration limit or $\gamma \backsim 0$. $\gamma \backsim 0$ indicates that the minimum norm point that we obtain in the current iteration is close to the origin. When $\gamma = 0$, then the current solution is a Pareto stationary point, and we cannot find a non-zero direction that can improve any objectives, which lead to the following theorem:
\begin{algorithm}[htbp]
    \caption{TMGD [\cite{sener2018multi}]}
    \label{algorithm:1}
    \textbf{for} $j = 1$ \textbf{to }$N$ \textbf{do}\\
    \qquad $\boldsymbol{\theta}^{j} = \boldsymbol{\theta}^{j} - \eta \nabla_{\boldsymbol{\theta}^{j}}\hat{\mathcal{L}}^{j}(\boldsymbol{\theta}^{sh},\boldsymbol{\theta}^{j})$ \\
    \textbf{end for} \\
    $\alpha^{1},..,\alpha^{N}$ = \textsc{FrankWolfeSolver}($\boldsymbol{\theta}$), $\mathbf{w} = \sum_{j = 1}^{N}\alpha^{j}\nabla_{\boldsymbol{\theta}^{sh}}\hat{\mathcal{L}}^{j}(\boldsymbol{\theta}^{sh},\boldsymbol{\theta}^{j})$ \\
    $\boldsymbol{\theta}^{sh} = \boldsymbol{\theta}^{sh} - \eta\cdot\mathbf{w}$ \\
    \textbf{Procedure }\textsc{FrankWolfeSolver}($\boldsymbol{\theta}$) \\
    \qquad Initialize $\boldsymbol{\alpha} = (\alpha^{1},..,\alpha^{T}) = (\frac{1}{T},...,\frac{1}{T})$ \\
    \qquad Precompute $\mathbf{M}$ \textbf{st.} $\mathbf{M_{i,j}} = (\nabla_{\boldsymbol{\theta}^{sh}}\hat{\mathcal{L}}^{i}(\boldsymbol{\theta}^{sh},\boldsymbol{\theta}^{i}))^{\top}(\nabla_{\boldsymbol{\theta}^{sh}}\hat{\mathcal{L}}^{j}(\boldsymbol{\theta}^{sh},\boldsymbol{\theta}^{j}))$ \\
    \qquad\textbf{repeat} \\
    \qquad\qquad $\hat{t} = \argmin_{r} \sum_{t}\alpha^{t}\mathbf{M}_{rt}$ \\
    \qquad\qquad $\hat{\gamma} = \argmin_{\gamma} ((1 - \gamma)\boldsymbol{\alpha} + \gamma\mathbf{e}_{\hat{t}})^{\top}((1 - \gamma)\boldsymbol{\alpha} + \gamma\mathbf{e}_{\hat{t}})$ \\
    \qquad\textbf{until} $\gamma \backsim 0$ \textbf{or }Number of Iterations Limit\\
    \qquad\textbf{return} $\alpha^{1},..,\alpha^{T}$ \\
    \textbf{end procedure}
\end{algorithm}
\begin{theorem} \label{theorem:3}
Considering the following optimization problem:
\begin{equation}
\underset{u,\alpha}{\min}\left\{u = \bigg\|\sum_{j=1}^{N}\alpha^{j}\nabla_{\boldsymbol{\theta}^{sh}}{\mathcal{F}}^{\top}(\boldsymbol{\boldsymbol{\theta}^{sh},\boldsymbol{\theta}^{j}})\bigg\|\left|\sum_{j=1}^{N}\alpha^{j} = 1; \alpha^{j} \geq 0, \forall j \right.\right\} \label{eq:12}
\end{equation}
\par Either solution to this optimization problem is 0 and the resulting point satisfies the KKT conditions, or the solution gives a descent direction $-\sum_{j = 1}^{N}\alpha_{j}^{*}\nabla_{\boldsymbol{\theta}^{sh}}{\mathcal{F}}^{\top}(\boldsymbol{\boldsymbol{\theta}^{sh},\boldsymbol{\theta}^{j}})$ that improves all objectives.
\end{theorem}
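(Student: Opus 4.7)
The plan is to reduce the theorem to the minimum-norm-point picture established in \textcolor{blue}{\textbf{Theorems}} \ref{theorem:1} and \ref{theorem:2} and then split the analysis by whether that minimum norm is zero or strictly positive. First I would observe that the feasible set in \textcolor{blue}{\textbf{Eq}} \ref{eq:12} is exactly the parameterization of the convex hull $P = \textbf{conv}\bigl(\nabla_{\boldsymbol{\theta}^{sh}}\mathcal{F}^{1},\ldots,\nabla_{\boldsymbol{\theta}^{sh}}\mathcal{F}^{N}\bigr)$ given by \textcolor{blue}{\textbf{Definition}} \ref{definition:4}, so the problem in \textcolor{blue}{\textbf{Eq}} \ref{eq:12} is literally the minimum-norm-point problem of \textcolor{blue}{\textbf{Definition}} \ref{definition:5} restricted to $P$. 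Let $\mathbf{x}^{\star} = \sum_{j} \alpha^{j\star}\nabla_{\boldsymbol{\theta}^{sh}}\mathcal{F}^{j}$ denote the optimizer and $u^{\star} = \|\mathbf{x}^{\star}\|$ its value; the theorem then amounts to a case analysis on $u^{\star}$.

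In the case $u^{\star} = 0$, the identity $\sum_{j}\alpha^{j\star}\nabla_{\boldsymbol{\theta}^{sh}}\mathcal{F}^{j} = \mathbf{0}$ together with the simplex constraints $\alpha^{j\star}\ge 0$ and $\sum_{j}\alpha^{j\star} = 1$ is exactly the first Pareto-stationarity condition of \textcolor{blue}{\textbf{Definition}} \ref{definition:3}. For the exclusive parameters $\boldsymbol{\theta}^{j}$ I would appeal to the preliminary update in Lines 1–3 of \textcolor{blue}{\textbf{Algorithm}} \ref{algorithm:1}: at any fixed point of that update, $\nabla_{\boldsymbol{\theta}^{j}}\mathcal{L}^{j} = 0$ for every $j$, giving the second stationarity condition. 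Reading $\alpha^{j\star}$ as the nonnegative multipliers of the scalarized Lagrangian for the vector problem $\min_{\boldsymbol{\theta}}(\mathcal{L}^{1},\ldots,\mathcal{L}^{N})$ then identifies these two conditions with the KKT system for weak Pareto optimality.

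In the case $u^{\star} > 0$, I would invoke \textcolor{blue}{\textbf{Theorem}} \ref{theorem:2} directly. Since $\mathbf{x}^{\star}$ is the minimum-norm point of $P$, Wolfe's criterion (\textcolor{blue}{\textbf{Theorem}} \ref{theorem:1}) gives $\langle \mathbf{x}^{\star}, \nabla_{\boldsymbol{\theta}^{sh}}\mathcal{F}^{j}\rangle \ge \|\mathbf{x}^{\star}\|_{2}^{2} > 0$ for every $j$, so the directional derivative of each objective along $-\mathbf{x}^{\star}$ is strictly negative. Thus $-\mathbf{x}^{\star}$ is a common descent direction that strictly improves every objective simultaneously, which is exactly the second alternative in the statement.

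The main obstacle is not in either algebraic step but in the bookkeeping that identifies the "KKT conditions" of the theorem with Pareto stationarity in the sense of \textcolor{blue}{\textbf{Definition}} \ref{definition:3}. One has to verify that the multipliers produced by the scalarized Lagrangian of the vector problem can, after normalization, be taken to be nonnegative and sum to one, and that the shared/exclusive parameter split used in \textcolor{blue}{\textbf{Algorithm}} \ref{algorithm:1} is consistent with the single-$\boldsymbol{\theta}$ gradient formulation illustrated in \textcolor{blue}{\textbf{Eq}} \ref{eq:5}. Once this correspondence is made explicit, the theorem follows as a direct corollary of \textcolor{blue}{\textbf{Theorems}} \ref{theorem:1} and \ref{theorem:2}.
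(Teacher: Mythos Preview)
Your argument is correct and is in fact more than the paper provides: the paper's own ``proof'' is simply a citation to \textbf{Theorem 2.2} of \cite{desideri2012multiple}, whereas you reconstruct the argument internally from \textcolor{blue}{\textbf{Theorems}}~\ref{theorem:1} and~\ref{theorem:2}. The reduction to the minimum-norm-point problem on $P = \textbf{conv}(\nabla_{\boldsymbol{\theta}^{sh}}\mathcal{F}^{1},\ldots,\nabla_{\boldsymbol{\theta}^{sh}}\mathcal{F}^{N})$ and the two-case split on $u^{\star}$ is exactly the standard proof, and your use of Wolfe's criterion in the $u^{\star}>0$ case is the right mechanism.

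One small simplification: in the $u^{\star}=0$ case you do not need the detour through the exclusive-parameter update of \textcolor{blue}{\textbf{Algorithm}}~\ref{algorithm:1}. \textcolor{blue}{\textbf{Definition}}~\ref{definition:3} declares a point Pareto stationary if \emph{either} condition holds, and the vanishing convex combination $\sum_{j}\alpha^{j\star}\nabla_{\boldsymbol{\theta}^{sh}}\mathcal{F}^{j}=\mathbf{0}$ with simplex weights already gives condition~(1) on its own. The ``KKT conditions'' in the theorem statement refer to this Pareto-stationarity/KKT system for the vector problem, so the bookkeeping you flag as the main obstacle is lighter than you suggest.
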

\begin{proof}
    See \textbf{\textcolor{blue}{Theorem 2.2}} of \cite{desideri2012multiple}.
\end{proof}
\par \textcolor{blue}{\textbf{Theorem}} \ref{theorem:3} essentially ensures that the TMGD can theoretically achieve Pareto stationarity.
\subsubsection{Dual cone gradient descent}
\par In this section, $\nabla \mathcal{L}_{p}(\boldsymbol{\theta}_{t})$, $\nabla \mathcal{L}_{d}(\boldsymbol{\theta}_{t})$, $\nabla \mathcal{L}(\boldsymbol{\theta}_{t})$ represent physics loss, data loss, and total loss, respectively. $\boldsymbol{\theta}_{t}$ is the overall parameter set for all physics and data loss in $t$th iteration. $\phi_{t}$ is the angle between $\nabla \mathcal{L}_{d}(\boldsymbol{\theta}_{t})$ and $\nabla \mathcal{L}_{p}(\boldsymbol{\theta}_{t})$. $\|$ is the parallel component symbol, the parallel component of the vector $\mathbf v$ along $\mathbf a$, defined by:
\begin{equation} \label{eq:13}
\mathbf v_{\parallel \mathbf a}
= \frac{\mathbf v \cdot \mathbf a}{\|\mathbf a\|^2}\,\mathbf a = \mathbf{Proj}_{\mathbf{a}}\mathbf{v}
\end{equation}
,i.e. the projection of $\mathbf{v}$ onto $\mathbf{a}$.
\begin{definition}[Dual cone] \label{definition:6} Let $\mathbf{K}$ be a cone of $\mathbb{R}^d$. Then, the set 
\begin{equation} \label{eq:14}
\mathbf{K}^* = \{ y | \langle x,y\rangle \geq 0 \quad \mbox{for all } x\in \mathbf{K}, y\in \mathbb{R}^d\}
\end{equation}
is called the \textit{dual cone} of $\mathbf{K}$. 
\end{definition}
\par The concept of dual cone shown in \textcolor{blue}{\textbf{Definition}} \ref{definition:6} plays an important role in the DCGD algorithm. However, in practice, we are more interesting of its subspace $\mathbf{G}_{t}$, as shown in \textcolor{blue}{Figure} \ref{fig:1} and mathematically defined in \textcolor{blue}{Eq} \ref{eq:15}. Any direction such that $\mathbf{d} \in \mathbf{G}_{t}$ is a common descent direction since $\langle \mathbf{d},\nabla \mathcal{L}_r(\theta_t)\rangle \geq 0$ and $\langle \mathbf{d},\nabla \mathcal{L}_b(\theta_t)\rangle \geq 0$. Also, from \textcolor{blue}{Eq} \ref{eq:15}, $\mathbf{G}_{t}$ is essentially a conic combination of $\nabla \mathcal{L}(\theta_{t})_{\|(\nabla \mathcal{L}_{p}(\theta_{t}))^\perp}$ and $\nabla_t \mathcal{L}(\theta_{t})_{\|(\nabla \mathcal{L}_{d}(\theta_{t}))^\perp}$, which is very easy to compute compare to compute the $\mathbf{K}^{*}_{t}$.
\begin{figure}
    \centering
    \includegraphics[width=0.6\linewidth]{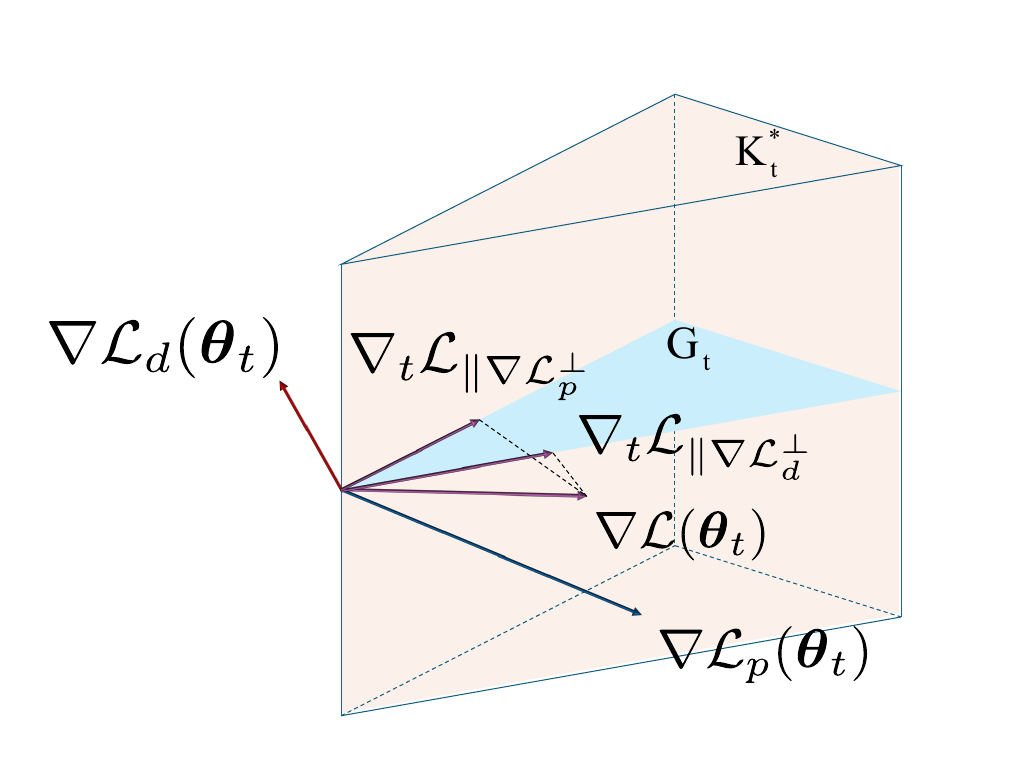}
    \caption{Visualization of dual cone region $\mathbf{K}^{*}_{t}$ and its subspace $\mathbf{G}_{t}$}
    \label{fig:1}
\end{figure}
\begin{equation}
\label{eq:15}
    \mathbf{G}_t := \left\{c_1 \nabla \mathcal{L}(\theta_{t})_{\|(\nabla \mathcal{L}_{p}(\theta_{t}))^\perp} + c_2 \nabla_t \mathcal{L}(\theta_{t})_{\|(\nabla \mathcal{L}_{d}(\theta_{t}))^\perp} \big| c_1, c_2\geq 0, c_1, c_2 \in \mathbb{R} \right\}
\end{equation}
\begin{figure}[htbp]
  \centering
  \begin{subfigure}{0.6\textwidth}
    \includegraphics[width=\linewidth]{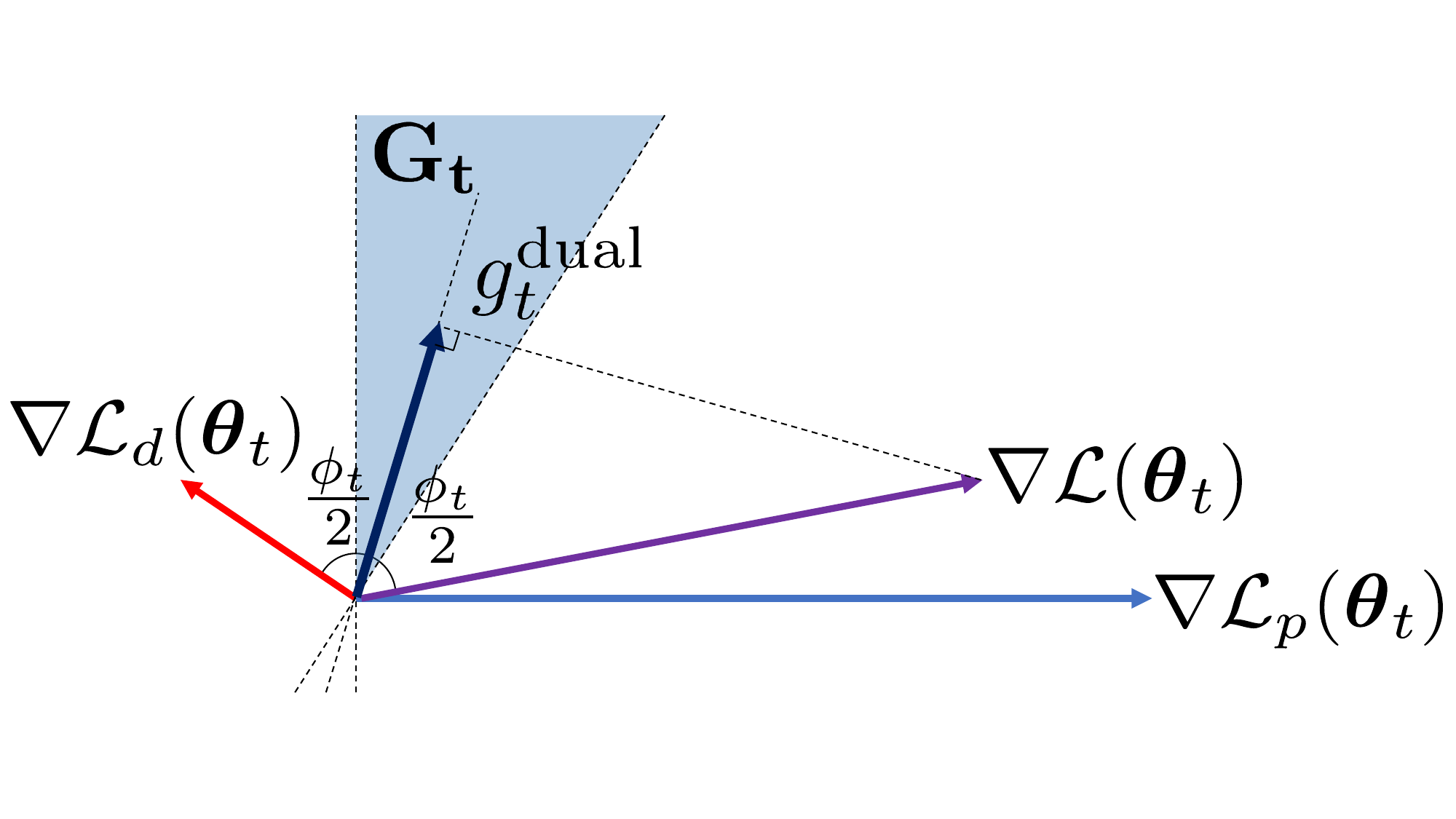}
    \caption{DCGD-Center}
    \label{fig:sub1}
  \end{subfigure}
  \begin{subfigure}{0.6\textwidth}
    \includegraphics[width=\linewidth]{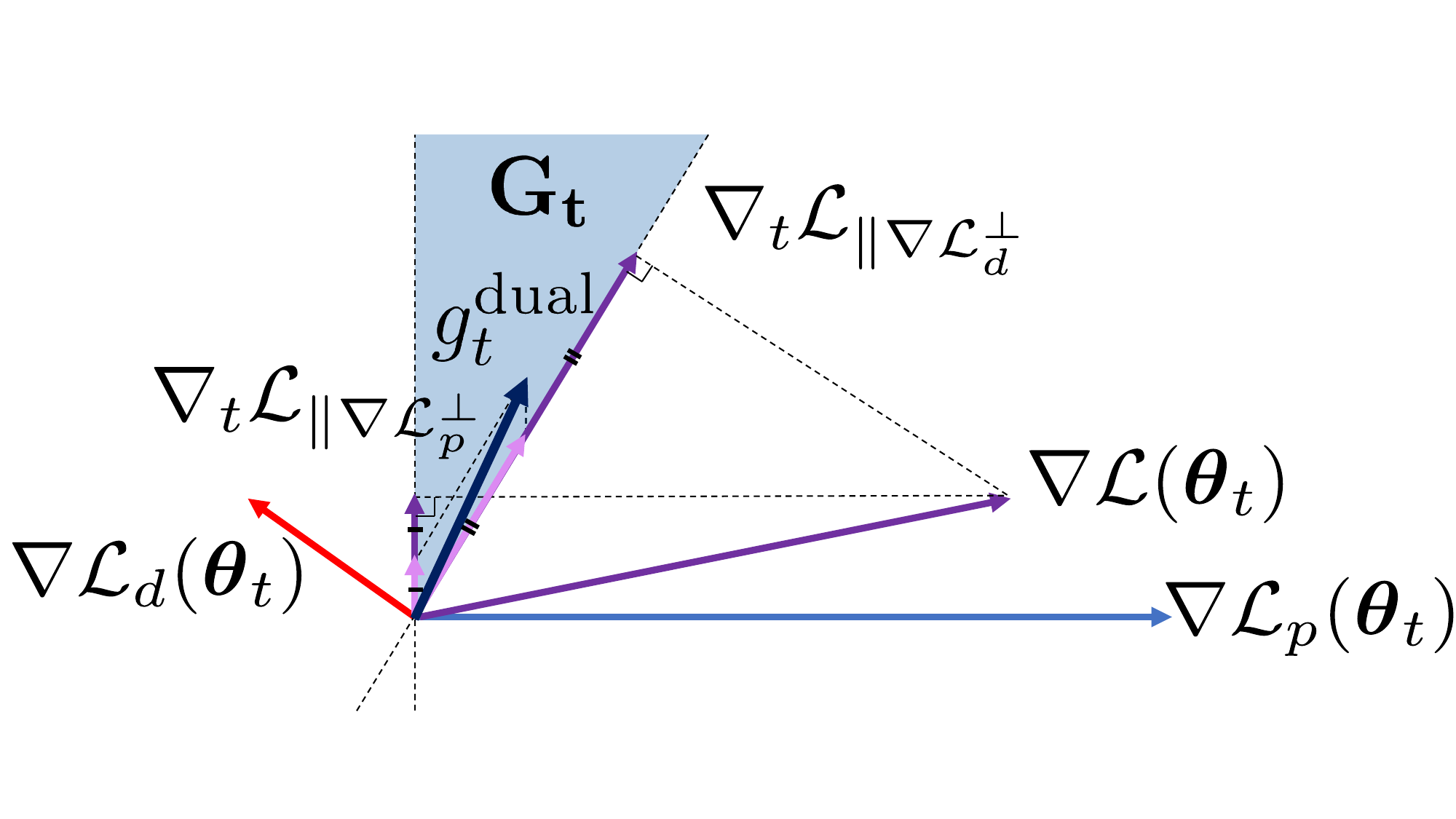}
    \caption{DCGD-AVG}
    \label{fig:sub2}
  \end{subfigure}
  \begin{subfigure}{0.6\textwidth}
    \includegraphics[width=\linewidth]{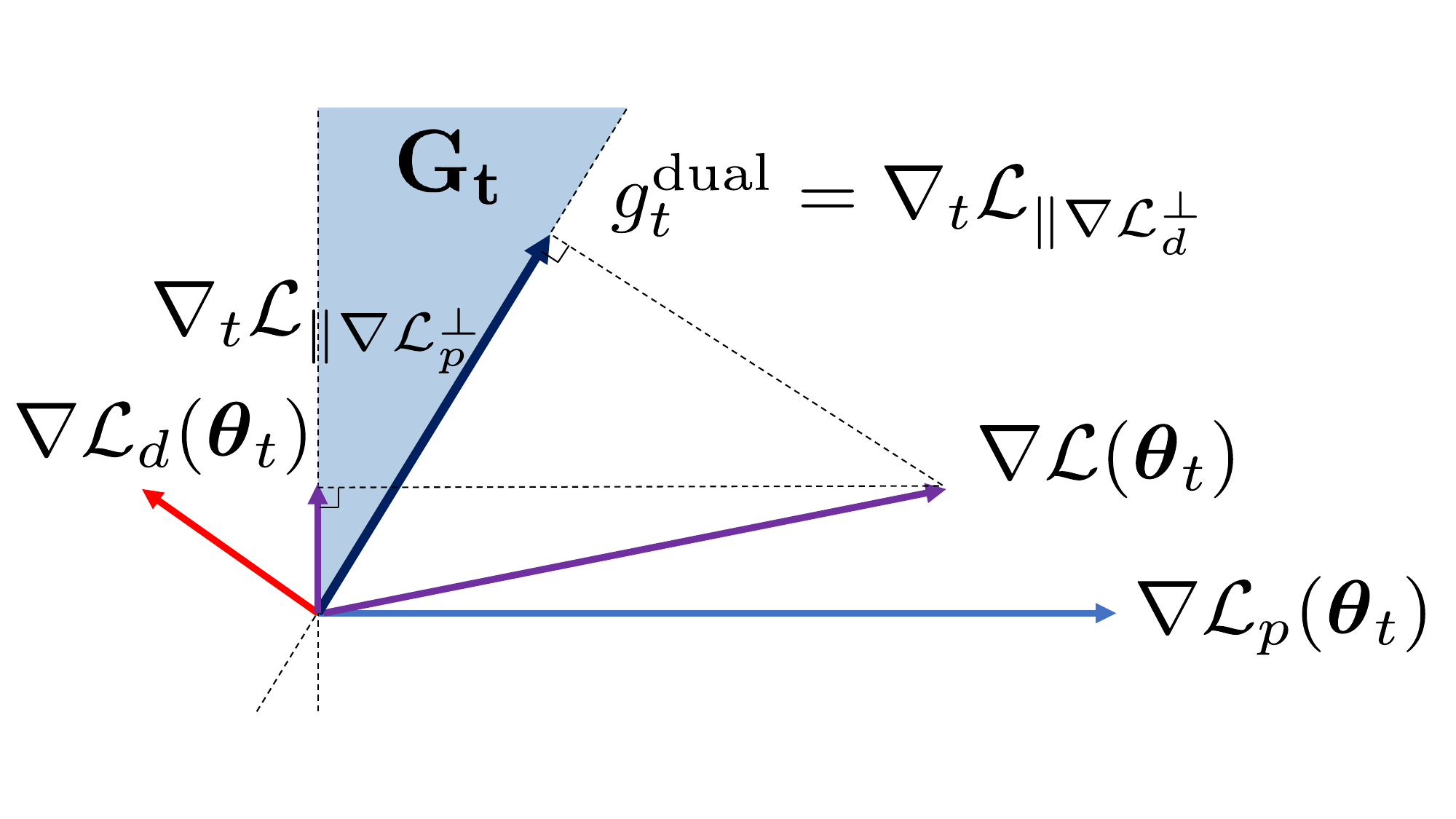}
    \caption{DCGD-Proj}
    \label{fig:sub3}
  \end{subfigure}
  \caption{Various Strategies for Dual Cone Gradient Descent}
  \label{fig:2}
\end{figure}
\par \textcolor{Blue}{\textbf{Algorithm}} \ref{algorithm:2} shows the core idea of DCGD, that is to find a common descent direction from $\mathbf{G}^{*}_{t}$, and since any direction that subset to $\mathbf{G}_{t}$ is a valid common descent direction. Thus, as shown in \textcolor{blue}{\textbf{Figure}} \ref{fig:2}, there are several variants of DVGD algorithms\footnote{For notation simplicity, $\nabla \mathcal{L}_{\|\nabla \mathcal{L}_{p}^\perp}$ and $\nabla_t \mathcal{L}_{\|\nabla \mathcal{L}_{d}^\perp}$ are used to represent $\nabla \mathcal{L}(\theta_{t})_{\|(\nabla \mathcal{L}_{p}(\theta_{t}))^\perp}$ and $\nabla_t \mathcal{L}(\theta_{t})_{\|(\nabla \mathcal{L}_{d}(\theta_{t}))^\perp}$}, the main difference is the way of picking a common descent direction that belongs to $\mathbf{G}_{t}$. The pseudo code of algorithms DCGD-Center, DCGD-Average, and DCGD-Proj are shown as follows:
\begin{algorithm}[htbp]
    \caption{DCGD (Center) [\cite{hwang2024dual}]}
    \label{algorithm:2}
    \textbf{Require:} learning rate $\lambda$, max epoch $T$, initial point $\theta_{0}$, gradient threshold $\varepsilon$, conflict threshold $\alpha$ \\
    \textbf{for} $t = 1$ \textbf{to }$T$ \textbf{do}\\
    \qquad \textbf{if } $\pi - \alpha < \phi_{t} \text{ or } \|\nabla \mathcal{L}(\boldsymbol{\theta}_{t})\| < \varepsilon$ \textbf{ then} \\
    \qquad\qquad \textbf{break} \\
    \qquad \textbf{end if} \\
    \qquad\ $g_{t}^{c} = \frac{\nabla \mathcal{L}_{d}(\boldsymbol{\theta}_{t})}{\|\nabla \mathcal{L}_{d}(\boldsymbol{\theta}_{t})\|} + \frac{\nabla \mathcal{L}_{p}(\boldsymbol{\theta}_{t})}{\|\nabla \mathcal{L}_{p}(\boldsymbol{\theta}_{t})\|}$ \\
    \qquad\ $g_{t}^{dual} = \frac{\langle g_{t}^{c}, \nabla \mathcal{L}(\boldsymbol{\theta}_{t}) \rangle}{\|g_{t}^{c}\|^{2}}g_{t}^{c}$ \\
    \qquad\ $\boldsymbol{\theta}_{t} = \boldsymbol{\theta}_{t - 1} - \lambda g_{t}^{dual}$ \\
    \textbf{end for} \\
\end{algorithm}

\begin{algorithm}[htbp]
    \caption{DCGD (Average) [\cite{hwang2024dual}]}
    \label{algorithm:3}
    \textbf{Require:} learning rate $\lambda$, max epoch $T$, initial point $\theta_{0}$, gradient threshold $\varepsilon$, conflict threshold $\alpha$ \\
    \textbf{for} $t = 1$ \textbf{to }$T$ \textbf{do}\\
    \qquad \textbf{if } $\pi - \alpha < \phi_{t} \text{ or } \|\nabla \mathcal{L}(\theta_{t})\| < \varepsilon$ \textbf{ then} \\
    \qquad\qquad \textbf{break} \\
    \qquad \textbf{end if} \\
    \qquad \textbf{if } $\nabla \mathcal{L}(\boldsymbol{\theta}_{t}) \notin \mathbf{K}^{*}$ \textbf{ then} \\
    \qquad\qquad $g_{t}^{dual} = \frac{1}{2}\nabla \mathcal{L}(\boldsymbol{\theta}_{t})_{\|(\nabla \mathcal{L}_{p}(\boldsymbol{\theta}_{t}))^{\bot}} + \frac{1}{2}\nabla \mathcal{L}_{\|(\nabla \mathcal{L}_{d}(\boldsymbol{\theta}_{t}))^{\bot}}$ \\
    \qquad \textbf{else }  \\
    \qquad\qquad $g_{t}^{dual} = \nabla \mathcal{L}(\boldsymbol{\theta}_{t})$ \\
    \qquad \textbf{end if} \\
    \qquad\ $\boldsymbol{\theta}_{t} = \boldsymbol{\theta}_{t - 1} - \lambda g_{t}^{dual}$ \\
    \textbf{end for} \\
\end{algorithm}

\begin{algorithm}[htbp]
    \caption{DCGD (Projection) [\cite{hwang2024dual}]}
    \label{algorithm:4}
    \textbf{Require:} learning rate $\lambda$, max epoch $T$, initial point $\theta_{0}$, gradient threshold $\varepsilon$, conflict threshold $\alpha$ \\
    \textbf{for} $t = 1$ \textbf{to }$T$ \textbf{do}\\
    \qquad \textbf{if } $\pi - \alpha < \phi_{t} \text{ or } \|\nabla \mathcal{L}(\boldsymbol{\theta}_{t})\| < \varepsilon$ \textbf{ then} \\
    \qquad\qquad \textbf{break} \\
    \qquad \textbf{end if} \\
    \qquad \textbf{if } $\nabla \mathcal{L}(\boldsymbol{\theta}_{t}) \in \mathbf{K}^{*}$ \textbf{ then} \\
    \qquad\qquad $g_{t}^{dual} = \nabla \mathcal{L}(\boldsymbol{\theta}_{t})$ \\
    \qquad \textbf{else if } $\nabla \mathcal{L}(\boldsymbol{\theta}_{t}) \notin \mathbf{K}^{*} \text{ and }\langle\nabla \mathcal{L}(\boldsymbol{\theta}_{t}),\nabla \mathcal{L}_{p}(\boldsymbol{\theta}_{t})\rangle <0$ \textbf{ then} \\
    \qquad\qquad $g_{t}^{dual} = \nabla \mathcal{L}(\boldsymbol{\theta}_{t})_{\|(\nabla \mathcal{L}_{p}(\boldsymbol{\theta}_{t}))^{\bot}}$ \\
    \qquad \textbf{else if } $\nabla \mathcal{L}(\boldsymbol{\theta}_{t}) \notin \mathbf{K}^{*} \text{ and }\langle\nabla \mathcal{L}(\boldsymbol{\theta}_{t}),\nabla \mathcal{L}_{d}(\boldsymbol{\theta}_{t})\rangle <0$ \textbf{ then} \\
    \qquad\qquad $g_{t}^{dual} = \nabla \mathcal(\boldsymbol{\theta}_{t})_{\|(\nabla \mathcal{L}_{d}(\boldsymbol{\theta}_{t})^{\bot}}$ \\
    \qquad \textbf{end if} \\
    \textbf{end for} \\
\end{algorithm}
\par From \textbf{\textcolor{blue}{Line 2}} to \textbf{\textcolor{blue}{Line 5}} in \textbf{\textcolor{blue}{Algorithm}} \ref{algorithm:2}, \ref{algorithm:3}, and \ref{algorithm:4}, they all share the same stop criteria, that is:
\begin{itemize}
    \item $\pi - \alpha < \phi_{t}$, which indicates that the angle between the data-driven loss gradient and the loss gradient physics $\phi_{t}$ is very close to 0. When $\phi_{t} \rightarrow 0$, the total parameters set is a Pareto Stationary point since no non-zero vector exists that can be served as the common descent direction that can decrease all objectives/losses.
    \item Gradients vanish, which indicates that the model has converged.
    \item Meeting the iteration limit.
\end{itemize}
\par In \textbf{\textcolor{blue}{Algorithm}} \ref{algorithm:2}, we first calculate a normalized sum of two directions $g^{c}_{t}$. $\anglebetween{g^{c}_{t}}{\nabla \mathcal{L}_{d}(\boldsymbol{\theta}_{t})} = \anglebetween{g^{c}_{t}}{\nabla \mathcal{L}_{p}(\boldsymbol{\theta}_{t})} = \frac{\phi_{t}}{2}$  \footnote{ we use $\anglebetween{\mathbf{u}}{\mathbf{v}}$ to represent the angle between two vectors, mathematically:
\begin{equation} \label{eq:16}
\anglebetween{\mathbf{u}}{\mathbf{v}}
:= \arccos\!\biggl(
  \frac{\mathbf{u}\cdot\mathbf{v}}
       {\|\mathbf{u}\|\;\|\mathbf{v}\|}
\biggr)
\end{equation}}, and then, we project the gradient of the total loss $\nabla \mathcal{L}(\theta_{t})$ onto $g^{c}_{t}$ to get the final common descent direction $g^{dual}_{t}$. 
\par In \textbf{\textcolor{blue}{Algorithm}} \ref{algorithm:3}, it seeks a simple compromise direction by equally weighting the two objective gradients.  Concretely, let
\begin{equation} \label{eq:17}
\nabla \mathcal{L}(\boldsymbol{\theta}_{t}) = \nabla \mathcal{L}_{p}(\boldsymbol{\theta}_{t}) + \nabla\mathcal{L}_{d}(\boldsymbol{\theta}_{t})
\end{equation}
be the \emph{averaged gradient}.
If this direction simultaneously descends both losses, i.e.
\begin{equation} \label{eq:18}
\nabla \mathcal{L}(\boldsymbol{\theta}_{t}) \in \mathbf{K}^{*}
\end{equation}
we use \(\nabla \mathcal{L}(\boldsymbol{\theta}_{t})\) as the update.  Otherwise, we first project \(\nabla \mathcal{L}(\boldsymbol{\theta}_{t})\) onto two boundary $\nabla \mathcal{L}_{\|(\nabla \mathcal{L}_{p}(\boldsymbol{\theta}_{t}))^{\bot}}$  and $\nabla \mathcal{L}_{\|(\nabla \mathcal{L}_{d}(\theta_{t}))^{\bot}}$ of $\mathbf{G}_{t}$, respectively. Then, calculate the arithmetic average as the update:
\begin{equation} \label{eq:19}
g^{dual}_{t}
 = \frac{1}{2}\nabla \mathcal{L}_{\|(\nabla \mathcal{L}_{p}(\boldsymbol{\theta}_{t}))^{\bot}} + \frac{1}{2}\nabla \mathcal{L}(\boldsymbol{\theta}_{t})_{\|(\nabla \mathcal{L}_{d}(\boldsymbol{\theta}_{t}))^{\bot}}
\end{equation}
This guarantees that the chosen descent direction reduces both \(\mathcal{L}_{p}(\boldsymbol{\theta}_{t})\) and \(\nabla\mathcal{L}_{d}(\boldsymbol{\theta}_{t})\) simultaneously.
\par In \textbf{\textcolor{blue}{Algorithm}} \ref{algorithm:4}, at iteration \(t\), the dual gradient \(g_t^{\mathrm{dual}}\) is constructed based on whether the current gradient of the total loss lies inside the dual cone \(\mathbf{K}^*\) and whether it conflicts with bound gradients $\nabla \mathcal{L}_{d}(\boldsymbol{\theta}_{t})$ and $\nabla \mathcal{L}_{p}(\boldsymbol{\theta}_{t})$:
\begin{equation} \label{eq:20}
g_t^{\mathrm{dual}}
=
\begin{cases}
\nabla \mathcal{L}(\boldsymbol{\theta}_t), 
& \nabla \mathcal{L}(\boldsymbol{\theta}_t)\in \mathbf{K}^*, \\[1ex]
\displaystyle
\nabla \mathcal{L}(\boldsymbol{\theta}_{t})_{\|(\nabla \mathcal{L}_{p}(\boldsymbol{\theta}_{t}))^{\bot}}, 
& \nabla \mathcal{L}(\boldsymbol{\theta}_t)\notin \mathbf{K}^*,\ 
\langle\nabla \mathcal{L}(\boldsymbol{\theta}_t), \nabla \mathcal{L}_p(\boldsymbol{\theta}_t)\rangle < 0, \\[2ex]
\displaystyle
\nabla \mathcal(\boldsymbol{\theta}_{t})_{\|(\nabla \mathcal{L}_{d}(\boldsymbol{\theta}_{t})^{\bot}}, 
& \nabla \mathcal{L}(\boldsymbol{\theta}_t)\notin \mathbf{K}^*,\ 
\langle\nabla \mathcal{L}(\boldsymbol{\theta}_t), \nabla \mathcal{L}_d(\boldsymbol{\theta}_t)\rangle < 0.
\end{cases}
\end{equation}
This strategy preserves descent direction while removing components that conflict with the active bound gradients.
\par Finally, the convergence and stationarity analysis of the DCGD algorithm are stated in the following theorems:
\begin{theorem}\label{theorem:4}
 Assume that both loss functions, $\mathcal{L}_d(\cdot)$ and $\mathcal{L}_p(\cdot)$, are differentiable and the total gradient $\nabla \mathcal{L}(\cdot)$ is $L$-Lipschitz continuous with $L > 0$. If $g_t^{\text{dual}}$ satisfies the following two conditions:
\begin{enumerate}
\item[(i)] $2\langle \nabla \mathcal{L}(\boldsymbol{\theta}_t), g_t^{\text{dual}}\rangle - \|g_t^{\text{dual}}\|^2 \geq 0$, 
\item[(ii)] There exists $M>0$ such that $\|g_t^{\text{dual}}\|\geq M\|\nabla \mathcal{L}(\boldsymbol{\theta}_t)\|$,
\end{enumerate}
then, for $\lambda \leq \frac{1}{2L}$, DCGD in Algo.~\ref{algorithm:1} converges to a Pareto-stationary point, or converges as 
\begin{align} \label{eq:21}
    \frac{1}{T+1}\sum_{t=0}^T\|\nabla \mathcal{L}(\boldsymbol{\theta}_t)\|^2 \leq \frac{2\left(\mathcal{L}(\boldsymbol{\theta}_0)-\mathcal{L}(\boldsymbol{\theta}^*)\right)}{\lambda M(T+1)}.   
\end{align}
\end{theorem}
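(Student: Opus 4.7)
The plan is to adapt the classical smooth-optimization descent-lemma argument, treating $g_t^{\mathrm{dual}}$ as a surrogate for the true gradient that remains sufficiently aligned with $\nabla\mathcal{L}(\theta_t)$ thanks to conditions (i) and (ii). First, invoking the $L$-Lipschitz continuity of $\nabla\mathcal{L}$, I would write the standard descent inequality
\begin{equation*}
\mathcal{L}(\theta_{t+1}) \le \mathcal{L}(\theta_t) + \langle \nabla\mathcal{L}(\theta_t),\theta_{t+1}-\theta_t\rangle + \tfrac{L}{2}\|\theta_{t+1}-\theta_t\|^2,
\end{equation*}
and substitute the DCGD update $\theta_{t+1}=\theta_t-\lambda g_t^{\mathrm{dual}}$ to obtain
\begin{equation*}
\mathcal{L}(\theta_{t+1}) \le \mathcal{L}(\theta_t) - \lambda\langle \nabla\mathcal{L}(\theta_t),g_t^{\mathrm{dual}}\rangle + \tfrac{L\lambda^2}{2}\|g_t^{\mathrm{dual}}\|^2 .
\end{equation*}

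Next I would use condition (i), rearranged as $\|g_t^{\mathrm{dual}}\|^2 \le 2\langle \nabla\mathcal{L}(\theta_t),g_t^{\mathrm{dual}}\rangle$, to fold the quadratic $\tfrac{L\lambda^2}{2}\|g_t^{\mathrm{dual}}\|^2$ term into the linear one, yielding
\begin{equation*}
\mathcal{L}(\theta_{t+1}) \le \mathcal{L}(\theta_t) - \lambda(1-L\lambda)\langle \nabla\mathcal{L}(\theta_t),g_t^{\mathrm{dual}}\rangle .
\end{equation*}
For $\lambda \le \tfrac{1}{2L}$ the prefactor $1-L\lambda$ exceeds $\tfrac12$, so the drop is at least $\tfrac{\lambda}{2}\langle \nabla\mathcal{L}(\theta_t),g_t^{\mathrm{dual}}\rangle$. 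I would then apply condition (i) a second time to bound the inner product below by $\tfrac12\|g_t^{\mathrm{dual}}\|^2$, and condition (ii) to convert $\|g_t^{\mathrm{dual}}\|^2 \ge M^2\|\nabla\mathcal{L}(\theta_t)\|^2$; combining gives a per-step decrease of the order $c\,\lambda M^2 \|\nabla\mathcal{L}(\theta_t)\|^2$. Telescoping from $t=0$ to $T$ and using $\mathcal{L}(\theta_{T+1}) \ge \mathcal{L}(\theta^*)$ produces the averaged-gradient-norm bound of the stated $\mathcal{O}(1/T)$ form.

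To close the dichotomy in the theorem, I would argue that if the stopping test in lines 2--5 of Algorithm \ref{algorithm:2} fires at some iteration, then either $\|\nabla\mathcal{L}(\theta_t)\| < \varepsilon$ or $\phi_t$ is within $\alpha$ of $\pi$; the first directly yields a stationary total gradient and the second corresponds to $\nabla\mathcal{L}_r$ and $\nabla\mathcal{L}_b$ being nearly antiparallel, both of which witness the convex combination $\tfrac12\nabla\mathcal{L}_r+\tfrac12\nabla\mathcal{L}_b \approx 0$ demanded by item 1 of Definition \ref{definition:3}, i.e.\ Pareto stationarity. If instead the stopping test never fires, the telescoped bound applies for every $T$ and forces $\|\nabla\mathcal{L}(\theta_t)\|\to 0$ along some subsequence, whose limit is again Pareto-stationary by the same Definition.

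The main obstacles I expect are twofold. The nontrivial content is not the descent lemma itself but verifying that the three concrete constructions of $g_t^{\mathrm{dual}}$ in Algorithms \ref{algorithm:2}--\ref{algorithm:4} genuinely satisfy conditions (i) and (ii) with a uniform, strictly positive $M$; this is really a geometric statement about the dual cone $\mathbf{K}^*_t$ and its subspace $\mathbf{G}_t$, and would require a case analysis by the angle $\phi_t$ together with Cauchy--Schwarz on the projections defining $\mathbf{G}_t$. A second subtlety is the constant in the stated bound, which reads $\lambda M$ rather than $\lambda M^2$; the chain of inequalities above naturally produces $M^2$, so either condition (ii) must be used in a norm-equivalence rather than squared form, or there is a typographical issue. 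I would track constants carefully and flag the discrepancy rather than paper over it.
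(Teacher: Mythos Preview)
The paper does not actually prove this theorem: its entire proof reads ``See \textbf{theorem 4.5} in \cite{hwang2024dual}.'' So there is nothing substantive to compare against in the paper itself; the argument is deferred wholesale to the cited reference.

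Your descent-lemma approach is the standard route and is almost certainly what appears in the cited source: apply $L$-smoothness to get the quadratic upper bound, substitute the update, use condition (i) to absorb the quadratic term into the linear one, use condition (ii) to convert $\|g_t^{\mathrm{dual}}\|$ into $\|\nabla\mathcal{L}(\theta_t)\|$, and telescope. Your treatment of the dichotomy via the stopping criterion is also the natural one.

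Two remarks. First, your observation that the chain of inequalities naturally yields $M^2$ rather than $M$ in the denominator is correct and worth flagging; the stated bound with $\lambda M$ is either a typographical slip in the original or relies on a slightly different bookkeeping of constants, and since the present paper simply quotes the result you cannot resolve it from this text alone. Second, you are right that the real work---verifying that the three concrete DCGD variants satisfy (i) and (ii) with a uniform $M>0$---is a separate geometric lemma about projections onto $\mathbf{G}_t$; but note that Theorem~\ref{theorem:4} as stated \emph{assumes} (i) and (ii), so that verification is not part of the theorem's proof per se, only of its applicability to Algorithms~\ref{algorithm:2}--\ref{algorithm:4}.
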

\begin{proof}
See \textbf{\textcolor{blue}{theorem 4.5}} in \cite{hwang2024dual}.
\end{proof}
\par \textcolor{blue}{\textbf{Theorem}} \ref{theorem:4} also ensures that the DCGD can theoretically achieve Pareto stationarity or good convergence properties.

\subsection{Potential limitation of linear scalarization}
\par In most contemporary PIML studies, the primary and most effective method for calculating the final total loss is through linear scalarization, as shown in the form illustrated in \textcolor{blue}{Eq} \ref{eq:1}. As is aware by many previous studies, the specific values of the coefficients $\alpha$ and $\beta$ cause significant difficulties in fine-tuning. Since the value of the coefficients will not change during the training process, it may lead to more serious problems:
\begin{itemize}
    \item Linear scalarization constructs a combined loss function as a linear weighted sum\footnote{\label{sec:7} \label{line:7} Here, we adopted a more general formulation, that is, we use $\theta$ to represent the entire parameter set and do not divide them into data-exclusive parameter sets and physics-exclusive parameter sets}:
\[
\mathcal{L}_\alpha(\boldsymbol{\theta}) = \alpha \mathcal{L}_1(\boldsymbol{\theta}) + (1 - \alpha)\mathcal{L}_2(\boldsymbol{\theta}), \quad \alpha \in [0,1]
\]
This implicitly assumes a linear trade-off between the two loss functions, enforcing a fixed exchange ratio determined by the scalarization weight $\alpha$. However, in practice, the true relationship between multiple objectives often exhibits nonlinear or complex interactions. Thus, linear scalarization may inadequately represent the intrinsic preference structure, leading to suboptimal or misleading solutions when exploring the Pareto frontier.
    \item When optimizing via linear scalarization, non-convexities inherent in one or both loss functions may cause the optimization process to continue beyond a point that is already Pareto stationary. For instance, consider a point $\theta^*$ where $\mathcal{L}_1$ reaches a local minimum so that $\nabla \mathcal{L}_1(\theta^*) = 0$, while $\nabla \mathcal{L}_2(\theta^*) \neq 0$. In this case, the gradient of the scalarized loss becomes
\[
\nabla \mathcal{L}_\alpha(\boldsymbol{\theta}^*) = (1 - \alpha) \nabla \mathcal{L}_2(\boldsymbol{\theta}^*)
\]
which is non-zero as long as $\alpha < 1$. As a result, gradient-based optimization continues to move $\theta$ in the direction of decreasing $\mathcal{L}_2$, even though such a step may increase $\mathcal{L}_1$, thus breaking Pareto stationarity. In Multi-Gradient Descent Algorithms (MGDAs), this situation will not occur because they follow a common direction of descent.

This behavior illustrates how linear scalarization may fail to preserve Pareto stationary points: despite $\theta^*$ being Pareto stationary since no direction strictly improves both losses or improves one without worsening the other. However, the optimization process ignores this due to the weighted descent direction, leading to movement away from a theoretically valid solution. This flaw becomes particularly evident in multi-objective settings with non-convex landscapes, where linear scalarization may guide the optimization toward undesirable regions of the objective space. 
\end{itemize}

\begin{theorem}\label{theorem:5}
Given two continuously differentiable loss functions $\mathcal{L}_1(\theta)$ and $\mathcal{L}_2(\theta)$ defined over a parameter space $\Theta \subseteq \mathbb{R}^n$, the following statements hold:
\begin{enumerate}
    \item Every minimizer of the total loss obtained via linear scalarization, defined as
    \[
    \mathcal{L}_\beta(\theta) = \beta \mathcal{L}_1(\theta) + (1 - \beta) \mathcal{L}_2(\theta) \quad \beta \in (0,1)
    \]
    can theoretically be obtained by the Multi-Gradient Descent Algorithm (MGDA), which seeks common descent directions.
    \item However, certain Pareto stationary points found by MGDA cannot be theoretically achieved by optimizing the scalarized total loss for any interior linear scalarization weights $\beta\in(0,1)$.

\end{enumerate}
\end{theorem}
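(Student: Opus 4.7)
The plan is to handle the two parts separately. Part 1 follows directly from first-order optimality combined with Theorem \ref{theorem:3}, while Part 2 requires a counterexample exploiting the fact that linear scalarization can only recover points supported by a hyperplane in the objective image space, which fails whenever the Pareto front is non-convex.

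For Part 1, let $\theta^{*}$ be any local minimizer of $\mathcal{L}_\beta(\theta)=\beta\mathcal{L}_1(\theta)+(1-\beta)\mathcal{L}_2(\theta)$ for some $\beta\in[0,1]$. The first-order condition gives
\[
\beta\,\nabla\mathcal{L}_1(\theta^{*}) + (1-\beta)\,\nabla\mathcal{L}_2(\theta^{*}) = 0.
\]
Setting $\alpha^{1}=\beta$ and $\alpha^{2}=1-\beta$ yields $\alpha^{1}+\alpha^{2}=1$ with $\alpha^{1}\nabla\mathcal{L}_1(\theta^{*})+\alpha^{2}\nabla\mathcal{L}_2(\theta^{*})=0$, which is exactly condition (1) of Definition \ref{definition:3}. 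Hence $\theta^{*}$ is Pareto stationary, and by Theorem \ref{theorem:3} the MGDA subproblem \eqref{eq:12} returns the zero vector at $\theta^{*}$. Initializing MGDA at $\theta^{*}$ therefore leaves it invariant, so every scalarization minimizer is theoretically attainable by MGDA.

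For Part 2, I would construct a concrete one-dimensional counterexample: take $\Theta=[0,1]$, $\mathcal{L}_1(\theta)=\theta$, and $\mathcal{L}_2(\theta)=\sqrt{1-\theta}$. The image curve $f_2=\sqrt{1-f_1}$ is strictly concave, so the Pareto front is non-convex. A direct computation yields $\tfrac{d^{2}}{d\theta^{2}}\mathcal{L}_\beta(\theta)=-\tfrac{1-\beta}{4(1-\theta)^{3/2}}<0$ on $(0,1)$, so every interior critical point of the scalarized loss is a strict local \emph{maximum} and every minimizer of $\mathcal{L}_\beta$ must lie at a boundary $\theta\in\{0,1\}$. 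Yet an interior point such as $\theta^{\dagger}=1/2$ is Pareto stationary: choosing $\alpha^{1}=1/(1+\sqrt{2})$ and $\alpha^{2}=\sqrt{2}/(1+\sqrt{2})$ yields $\alpha^{1}\nabla\mathcal{L}_1(\theta^{\dagger})+\alpha^{2}\nabla\mathcal{L}_2(\theta^{\dagger})=0$, so $0$ lies in the convex hull of the two gradients and $\theta^{\dagger}$ is a fixed point of the MGDA update. Thus $\theta^{\dagger}$ is theoretically attainable by MGDA but is not a minimizer of $\mathcal{L}_\beta$ for any $\beta\in[0,1]$, establishing Part 2.

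The main obstacle is the Part 2 construction: the example must be smooth, must exhibit a \emph{strictly} non-convex Pareto front (a flat segment would be supported by a scalarization hyperplane across an entire interval and would not separate the two classes of solutions), and must admit a Pareto stationary point that is a genuine fixed point of MGDA rather than a transient iterate. The concavity-of-Hessian computation above handles all three requirements simultaneously. A subtlety I deliberately sidestep is proving that MGDA \emph{converges} to $\theta^{\dagger}$ from an arbitrary initialization; the theorem only requires that such a point be theoretically achievable by MGDA but not by any scalarization, so the fixed-point characterization in Definition \ref{definition:3} already suffices, with $\theta^{\dagger}$ serving as a valid MGDA output whenever used as the initialization.
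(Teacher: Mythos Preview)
Your Part 1 is essentially identical to the paper's: both derive the first-order condition $\beta\nabla\mathcal{L}_1(\theta^*)+(1-\beta)\nabla\mathcal{L}_2(\theta^*)=0$, recognize it as Pareto stationarity, and conclude that MGDA is stationary there.

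Your Part 2 takes a genuinely different route. The paper argues abstractly: it posits a Pareto stationary point $\hat{\theta}$ at which $\nabla\mathcal{L}_1(\hat{\theta})=0$ but $\nabla\mathcal{L}_2(\hat{\theta})\neq 0$, observes that the scalarized gradient vanishes only at $\beta=1$, and notes that at $\beta=1$ the scalarized loss is just $\mathcal{L}_1$, for which $\hat{\theta}$ need not be a minimizer. You instead build an explicit one-dimensional example with a strictly concave image curve, compute that $\mathcal{L}_\beta''<0$ on the interior so every scalarization minimizer lies on the boundary, and exhibit an interior Pareto stationary point $\theta^\dagger=1/2$ as an MGDA fixed point. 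Your construction is more concrete and directly illustrates the classical obstruction that linear scalarization cannot reach non-convex portions of the Pareto front; the paper's argument is shorter and highlights a different failure mechanism (a critical point of one objective that is not a minimizer) without invoking curvature at all.

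One small technical point to clean up: $\mathcal{L}_2(\theta)=\sqrt{1-\theta}$ fails to be continuously differentiable at $\theta=1$, which conflicts with the theorem's hypothesis. Either restrict $\Theta$ away from that endpoint (e.g., $\Theta=[0,1-\varepsilon]$, which still forces scalarization minimizers to the boundary while keeping $\theta^\dagger=1/2$ interior) or replace $\sqrt{1-\theta}$ by any smooth strictly concave decreasing function; the argument is unaffected.
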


\begin{proof}
See \textbf{\textcolor{blue}{Appendix}} \ref{ap:1}.
\end{proof}
\par \textbf{\textcolor{blue}{Theorem}}~\ref{theorem:5} states that training a PIML model with multi‑gradient descent algorithms (e.g., TMGD and DCGD) is guaranteed to achieve performance at least as good as, and potentially better than, training it with a single objective formed by linear scalarization of the losses. However, the scalarized approach offers no such guarantee in the opposite direction. 

\begin{figure}
    \centering
    \includegraphics[width=1.0\linewidth]{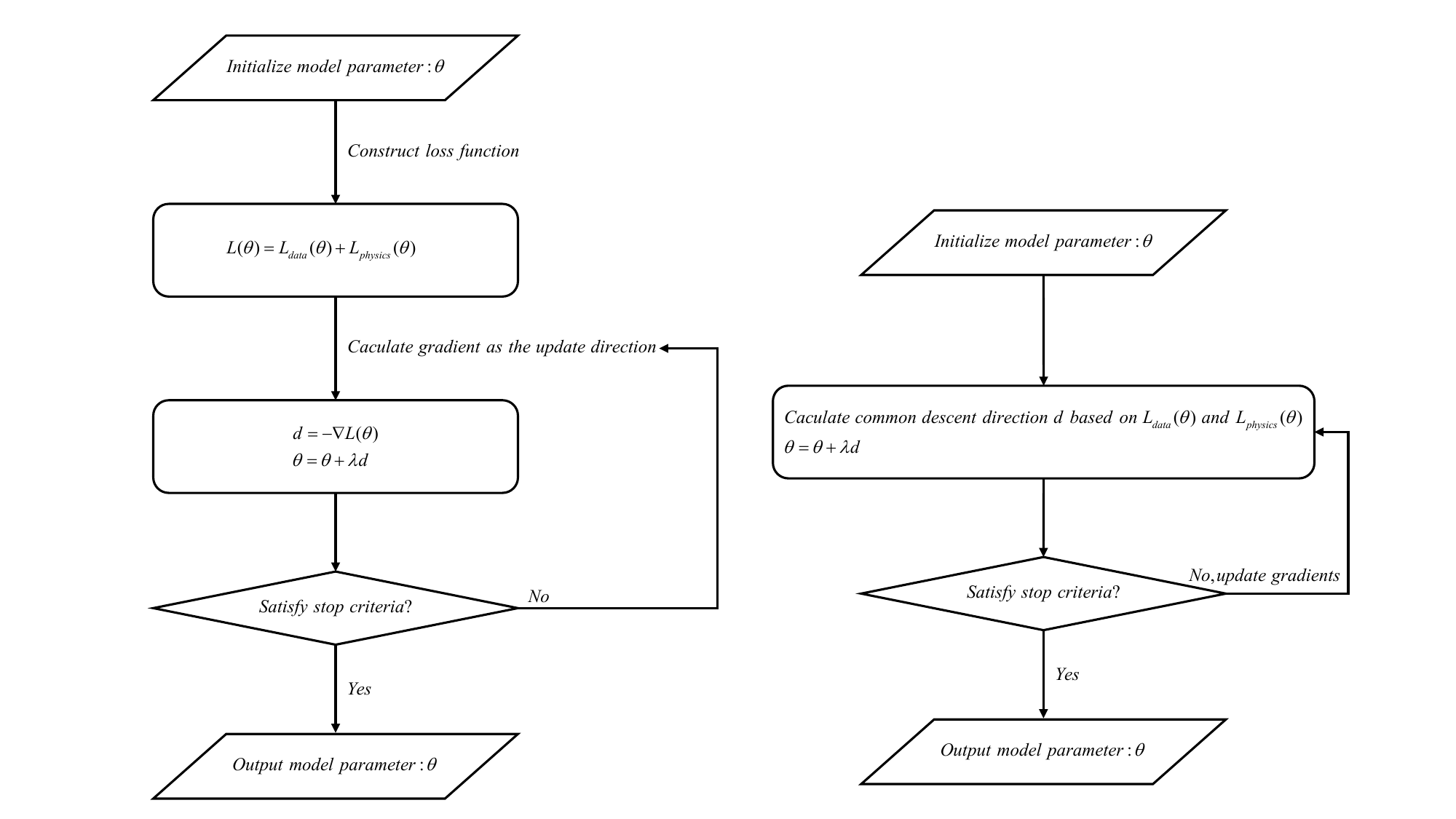}
    \caption{Difference between traditional training methods and training approaches based on multi-gradient descent}
    \label{fig:3}
\end{figure}

\label{sec:8} \label{line:8}

As illustrated in \textcolor{blue}{\textbf{Figure}}~\ref{fig:3}, we provide a summary flow chart that highlights the fundamental difference between traditional training methods and training approaches based on multi-gradient descent.

On the left, the traditional method first constructs a linear scalarization loss function, typically of the form \(\mathcal{L}(\boldsymbol{\theta}) = \alpha \mathcal{L}_{\mathrm{data}}(\boldsymbol{\theta}) + \beta \mathcal{L}_{\mathrm{physics}}(\boldsymbol{\theta})\), and then computes the gradient of this combined loss to update model parameters using standard gradient descent, where $\lambda$ is the step length, it could be either fixed or adaptive.

In contrast, the right side shows the pipeline of training approaches based on multi-gradient descent, which treats the data loss and physics loss as two distinct objectives. Rather than combining them into a single scalarized function, this method directly formulates a multi-objective optimization problem and computes a common descent direction \(\mathbf{d}\) that ensures both losses decrease simultaneously. This approach eliminates the need for manual tuning of scalarization weights and enables more principled updates for all parameter types.


\section{Experiments} \label{section:4}
\par In this section, we apply TMGD and three different DCDA methods to two PIML studies about traffic flow modeling: (1) PIML model based on the macroscopic traffic flow model \cite{shi2021physics}. (2) PIML model based on the microscopic traffic flow model \cite{mo2021physics}.
\subsection{PIML model based on the macroscopic traffic flow model} \label{section:4.1}
\par The first example is a PIML model based on the macroscopic traffic flow model. The physics model used in this PIML study is the famous LWR model (\cite{lighthill1955kinematic} and \cite{richards1956shock}). The LWR model for one-dimensional traffic flow is given by:
\begin{equation} \label{eq:22}
\frac{\partial \rho}{\partial t} + \frac{\partial q(\rho)}{\partial x} = 0
\end{equation}
where $\rho,q$ represent traffic density and flow, respectively.
\par Following \cite{shi2021physics}, we adopted the same evaluation criterion, the $\mathbb{L}^{2}$ relative error, to assess prediction performance for traffic density and speed on the testing dataset, as shown in \textcolor{blue}{Eqs} \ref{eq:23} and \ref{eq:24}. We utilize real field data collected in Utah, which can be accessed at \url{https://github.com/UMD-Mtrail/Field-data-for-macroscopic-traffic-flow-model}. The dataset consists of data pairs $\{\mathbf{X}=(\mathbf{x,t}),\mathbf{Y} = (\mathbf{q,u})\}$, where $\mathbf{x}$ represents position, $\mathbf{t}$ represents time, and $\mathbf{q,u}$ represent average traffic flow and traffic density over five minutes.
\textbf{\textcolor{blue}{Figure}} \ref{fig:4} shows the architecture of the studied PIML model, and we referred to it as the LWR-PINN model. In addition to the Physics-uninformed neural network (PUNN) component present in all PIML studies, there is an FD learner module. The FD learner, which stands for fundamental diagram learner, is a multi-layer perceptron used to learn the density-flow relationship. As shown in the blue dashed line frame of \textbf{\textcolor{blue}{Figure}} \ref{fig:4}, the PUNN part of the LWR-PINN model is a multi-layer perceptron, the input dimension is two and the output dimension is one, implemented with eight hidden layers, each containing 20 nodes.  The FD learner, which approximates the fundamental diagram relationship, is implemented as a multilayer perceptron (MLP) with two hidden layers and 20 nodes per layer. In the blue dashed line frame of \textbf{\textcolor{blue}{Figure}} \ref{fig:3}, all arrows represent weights that need to be trained. In contrast, the arrows in the red dashed line frame of \textbf{\textcolor{blue}{Figure}} \ref{fig:4} indicate different operators such as the differentiation operator, the addition operator, and the division operator. For additional architectural details, we refer the reader to \cite{shi2021physics}.
\begin{equation}
    Err(\hat{\rho},\rho) = \frac{\sum_{i = 1}^{n} |\hat{\rho}(t^{i},x^{i};\theta) - \rho(t^{i},x^{i})|^{2}}{\sum_{i = 1}^{n}|\rho(t^{i},x^{i})|^{2}} \label{eq:23}
\end{equation}
\begin{equation}
    Err(\hat{u},u) = \frac{\sum_{i = 1}^{n} |\hat{u}(t^{i},x^{i};\theta) - u(t^{i},x^{i})|^{2}}{\sum_{i = 1}^{n}|u(t^{i},x^{i})|^{2}} \label{eq:24}
\end{equation}
\begin{figure}
    \centering
    \includegraphics[width=0.8\linewidth]{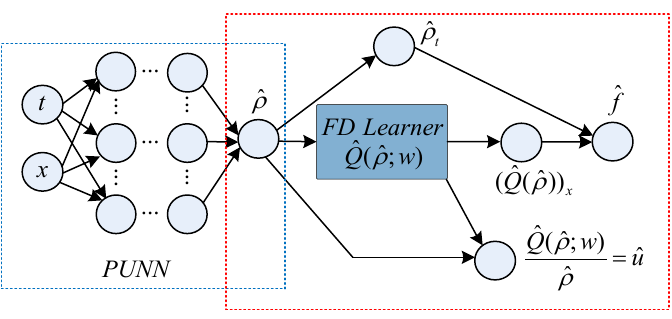}
    \caption{LWR-PINN architecture}
    \label{fig:4}
\end{figure}
\par The loss-function design for LWR-PINN is shown in \textcolor{blue}{\textbf{Eq}} \ref{eq:25}, where $N_{o}$ and $N_{a}$ denote the numbers of observation and auxiliary points, respectively. 
\begin{equation}
\begin{split}
    Loss_{\theta,\omega}^{LWR} &= \alpha \cdot MSE_{o} + \beta \cdot MSE_{a} \\
    &= \frac{\alpha_{1}}{N_{o}}\sum_{i = 1}^{N_{o}}|\hat{\rho}(t_{o}^{(i)},x_{o}^{(i)};\theta) - \rho^{(i)}|^{2} + \frac{\alpha_{2}}{N_{o}}\sum_{i = 1}^{N_{o}}|\hat{u}(t_{o}^{(i)},x_{o}^{(i)};\theta) - u^{(i)}|^{2} \\
    &+ \frac{\beta}{N_{a}}\sum_{j = 1}^{N_{a}}|\hat{f}(t_{a}^{(j)},x_{a}^{(j)};\theta,\omega)|^{2}
\end{split} \label{eq:25}
\end{equation}
\par For each observation point  
\begin{equation}  \label{eq:26}
  (\mathbf{X}_{i},\mathbf{Y}_{i})
  =\bigl((\mathbf{x}_{i},\mathbf{t}_{i}),(\boldsymbol{\rho}_{i},\mathbf{u}_{i})\bigr)
  \qquad i=1,\dots ,N_{o}
\end{equation}
both the feature vector $\mathbf{X}_{i}=(\mathbf{x}_{i},\mathbf{t}_{i})$ and the corresponding label vector $\mathbf{Y}_{i}=(\boldsymbol{\rho}_{i},\mathbf{u}_{i})$ are known and comes from the training data.  
For auxiliary points, we only know the feature vector  
\begin{equation}  \label{eq:27}
  \mathbf{Z}_{j}=(\mathbf{x}'_{j},\mathbf{t}'_{j}) 
  \qquad j=1,\dots ,N_{a},
\end{equation} 
These points do not need to be part of the training set, and they are typically generated by uniform sampling within the problem domain.  They serve to evaluate the physics-based residual. Concretely, each auxiliary point is supplied to the PINN as an input, which produces a predicted physics term $\hat{f}$. The physics residual at that point is generated through the operators shown in the red dashed line frame in \textcolor{blue}{\textbf{Figure}} \ref{fig:4}, represented as $|\hat{f} - 0|$.
\par We adopted a hyperparameter fine-tuning approach similar to that used in \cite{shi2021physics}. Specifically, we fixed the value of \(\alpha\) at 100 and adjusted only the value of \(\beta\). We designed several rounds of hyperparameter fine-tuning. In the first round, the values of \(\beta\) will be set to \([1, 10, 100, 1000, 10000]\) to determine the optimal magnitude interval, as we consider a hundredfold order of magnitude difference between \(\alpha\) and \(\beta\).
\begin{table}[htp!]
\centering
\caption{First round fine-tuning of the hyperparameter} \label{table:2}
\begin{tabular}{ccc}
\toprule
Coefficient value & $Err(\hat{\rho},\rho)$ & $ Err(\hat{u},u)$ \\
\midrule
$\alpha = 100$, $\beta = 1$ & 0.6298 $\pm$ 0.0024  & 0.2051 $\pm$ 0.0177   \\
$\alpha = 100$, $\beta = 10$ & 0.6295 $\pm$ 0.0033  & 0.2999 $\pm$ 0.2706   \\
$\checkmark\alpha = 100$, $\beta = 100$ & 0.6298 $\pm$ 0.0024  & 0.2079 $\pm$ 0.0174   \\
$\checkmark\alpha = 100$, $\beta = 1000$ & 0.6298 $\pm$ 0.0022  & 0.1966 $\pm$ 0.0097   \\
$\alpha = 100$, $\beta = 10000$ & 0.6308 $\pm$ 0.0020  & 0.2259 $\pm$ 0.0369   \\
\bottomrule
\end{tabular}
\end{table}
\par As indicated in \textcolor{blue}{Table} \ref{table:2}, the current optimal magnitude interval is \(\beta \in [100, 1000]\). Therefore, in the second round, the values of \(\beta\) will be adjusted to \([200, 300, 400, 500, 600, 700, 800, 900]\) to further refine the determination of the optimal magnitude interval, as shown in \textcolor{blue}{Table} \ref{table:3}. After two rounds of hyperparameter searching and some additional fine-tuning, the optimal hyperparameters are $\alpha = 100$ and $\beta = 500$.
\begin{table}[htp!]
\centering
\caption{second round fine-tuning of the hyperparameter} \label{table:3}
\begin{tabular}{ccc}
\toprule
Coefficient value & $Err(\hat{\rho},\rho)$ & $ Err(\hat{u},u)$ \\
\midrule
$\alpha = 100$, $\beta = 200$ & 0.6297 $\pm$ 0.0025  & 0.2317 $\pm$ 0.0743   \\
$\alpha = 100$, $\beta = 300$ & 0.6305 $\pm$ 0.0028  & 0.2130 $\pm$ 0.0203   \\
$\alpha = 100$, $\beta = 400$ & 0.6307 $\pm$ 0.0021  & 0.2031 $\pm$ 0.0081   \\
$\checkmark\alpha = 100$, $\beta = 500$ & 0.6291 $\pm$ 0.0023  & 0.1958 $\pm$ 0.0071   \\
$\alpha = 100$, $\beta = 600$ & 0.6296 $\pm$ 0.0025  & 0.2329 $\pm$ 0.0923   \\
$\alpha = 100$, $\beta = 700$ & 0.6296 $\pm$ 0.0016  & 0.2976 $\pm$ 0.2768   \\
$\alpha = 100$, $\beta = 800$ & 0.6302 $\pm$ 0.0021  & 0.4738 $\pm$ 0.7114   \\
$\alpha = 100$, $\beta = 900$ & 0.6304 $\pm$ 0.0034  & 0.2116 $\pm$ 0.0315   \\
\bottomrule
\end{tabular}
\end{table}
\par After tuning the hyperparameters, the experimental results are displayed in \textcolor{blue}{\textbf{Table}} \ref{table:5}, where the LWR-PINN model refers to the model shown in \textcolor{blue}{\textbf{Figure}} \ref{fig:4}, which adopted a loss function as: $Loss_{\theta,\omega}^{LWR} = 100 \cdot MSE_{o} + 500 \cdot MSE_{a}$, for all other models, they treat the training process as a bi-objective optimization problem, the two objectives are $MSE_{o}$ and $ MSE_{a}$. According to \textcolor{blue}{\textbf{Table}} \ref{table:5}, the TMGD-LWR-PINN model and the models based on DCGDs exhibit performance similar to that of the baseline model, the LWR-PINN model, with the exception of the DCGD-AVER-LWR-PINN model, which shows a significantly larger speed $\mathbb{L}^{2}$ error compared to the baseline. \label{sec:3} \label{line:3} In traffic flow theory~\cite{ni2015traffic}, both traffic density \(\rho(x,t)\) and speed \(v(x,t)\) are functions of space \(x\) and time \(t\). In the context of macroscopic traffic flow models, field data typically refer to sensor measurements collected from detectors distributed along the highway. Mathematically, the sensor data consist of input-output pairs \(\{(x_i, t_i), (\rho_i, v_i)\}_{i=1}^N\), where each spatial-temporal coordinate \((x_i, t_i)\) corresponds to a unique measurement of density \(\rho_i\) and speed \(v_i\). In other words, the datasets \(\mathbf{x} = \{(x_i, t_i)\}_{i=1}^N\) and \(\mathbf{y} = \{(\rho_i, v_i)\}_{i=1}^N\) are aligned element-wise.
\par The PIML model first uses its data-driven component to approximate the density and speed functions. Based on these approximations, the physics residual is computed via automatic differentiation to enforce the macroscopic traffic flow model.
\par From the perspective of function approximation, high-resolution data are essential to accurately capture the underlying physical patterns. As illustrated in \textcolor{blue}{\textbf{Figure}}~\ref{fig:5}, low-resolution data can lead to catastrophic errors in function approximation. However, most real-world traffic sensor data are inherently low-resolution, since the spatial resolution \(\Delta x\) (i.e., the distance between two sensors) is often large, and the temporal resolution \(\Delta t\) (i.e., the time interval between consecutive measurements) is usually no less than five minutes.
\par As a result, PIML may fail to approximate the true density function \(\rho(x,t)\) due to insufficient data resolution~\cite{lei2025potential}. In this case, the physics-informed term \(\mathcal{L}_{\mathrm{physics}}\) becomes an invalid or misleading component in the overall loss function:
\begin{equation} \label{eq:28}
\mathcal{L} = \alpha \mathcal{L}_{\mathrm{data}} + \beta \mathcal{L}_{\mathrm{physics}}
\end{equation}
whose gradient information is uninformative and can even misguide the optimization process.
\par Therefore, when the PIML model suffers from such an endogenous flaw induced by low-resolution training data, even an improved training strategy cannot yield better performance.
 \begin{figure}
    \centering
    \includegraphics[width=0.8\linewidth]{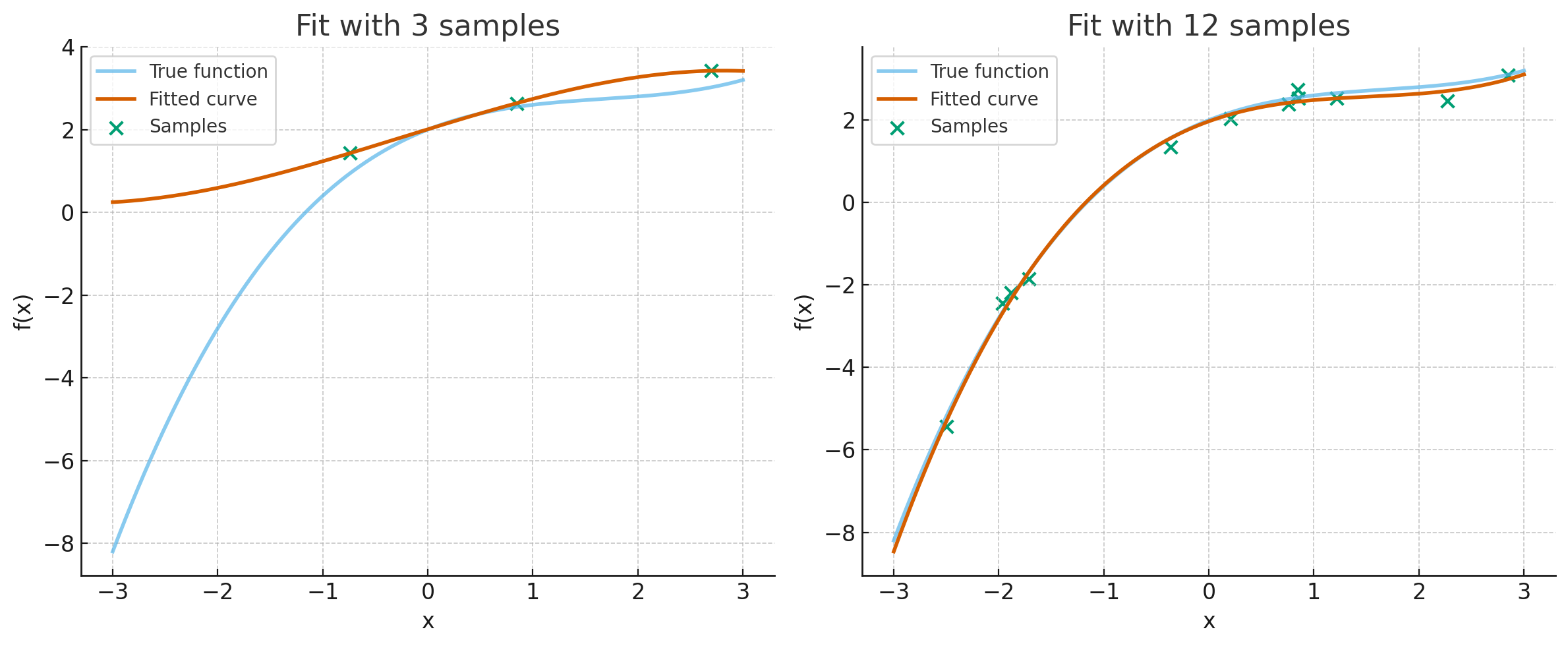}
    \caption{When low-resolution data meet the function approximation}
    \label{fig:5}
\end{figure}

\begin{table}[htp!]
\centering
\caption{Comparison of MGDAs and linear scalarization for LWR-based PINN model} \label{table:4}
\begin{tabular}{cccc}
\toprule
Model & $Err(\hat{\rho},\rho)$ & $ Err(\hat{u},u)$ & Coefficient value \\
\midrule
LWR-PINN &  0.6291 $\pm$ 0.0023  & \textbf{0.1958 $\pm$ 0.0071}  & $\alpha = 100$, $\beta = 500$ \\
TMGD-LWR-PINN & \textbf{0.6270 $\pm$ 0.0024}  & 0.1984 $\pm$ 0.0009  & -\\
DCGD-CENTER-LWR-PINN & 0.6314 $\pm$ 0.0002  & 0.1998 $\pm$ 0.0001  & -\\
DCGD-PROJ-LWR-PINN & 0.6306 $\pm$ 0.0025  & 0.2020 $\pm$ 0.0066  & -\\
DCGD-AVER-LWR-PINN & 0.6315 $\pm$ 0.0016  & 0.2267 $\pm$ 0.0585  & -\\
\bottomrule
\end{tabular}
\end{table}

\subsection{PIML model based on the microscopic traffic flow model}
\par The second example is a PIML model based on the microscopic traffic flow model. The physics model used in this PIML study is the IDM car-following model (\cite{treiber2000congested}). The IDM model is formulated as follows:
\begin{equation} \label{eq:29}
    a(t+\Delta t)   = a_{max}[ 1-(\frac{v(t)}{v_0}) ^\delta - (\frac{s^*(v(t), \Delta v(t))}{h(t)})^2] 
\end{equation}
\begin{equation} \label{eq:30}
    s^*(v(t),\Delta v(t)) =s_0 + v(t) T_0 + \frac{v(t) \cdot \Delta v(t)}{2\sqrt{a_{max}b}},
\end{equation}
In this context, the parameters are defined as follows:  \(v_0\) represents the desired velocity, \(T_0\) denotes the desired time headway, \(s_0\) indicates the minimum spacing in congested traffic, \(a_{\text{max}}\) is the maximum acceleration, \(b\) refers to the comfortable deceleration, and \(\delta\) is a constant typically set to 4 for the car-following model.

\par The baseline physics-informed car-following model is carried out based on \cite{mo2021physics}. The official reproduction of the baseline physics-informed car-following model and the corresponding dataset can be found at \url{https://github.com/CU-DitecT/TRC21-PINN-CFM}. The reproduction model differs slightly from the original paper \cite{mo2021physics}. In \cite{mo2021physics}, there are two frameworks for the physics-informed car-following model. The first framework, referred to as the prediction-only framework, is illustrated in \textbf{\textcolor{blue}{Figure}} \ref{fig:6}. In this framework, we first calibrate the car-following model, and subsequently, the model parameters are adjusted. The second framework is known as the joint framework, where we learn the parameters of the physics model and the weights of the neural network during the training process. The official reproduction only construes the first framework, so we will take the prediction-only framework as our baseline. 
\par As shown in \textbf{\textcolor{blue}{Figure}} \ref{fig:6}, a genetic algorithm (GA) will be used to calibrate the physics-based car-following model. The Physics-Uninformed Neural Network (PUNN) component is a multi-layer perceptron with an input dimension of three and an output dimension of one. It consists of three hidden layers, each containing 60 nodes. The loss function of the PINN-PICF model is still a classical form of linear scalarization, formulated as:
\begin{equation}  
\begin{split}
\label{eq:31}
    Loss_{\theta} &= \alpha MSE_{O} + (1-\alpha)MSE_{C} \\
                &=\frac{\alpha}{N_{o}}\sum_{i = 1}^{N_{o}}|f_{\theta}(\hat{\mathbf{s}}^{(j)}|\theta) - \hat{a}^{(i)}|^{2} + \frac{(1 - \alpha)}{N_{c}}\sum_{j = 1}^{N_{c}}|f_{\theta}(\mathbf{s}^{(j)}|\theta) - f_{\hat{\lambda}}(\mathbf{s}^{(j)}|\lambda)|^{2}
\end{split}
\end{equation}
where $\alpha$: the weight of the loss function that balances the contributions made by the data discrepancy and the physics discrepancy; $\hat{\lambda}$: parameters of the physics that are calibrated beforehand; $\theta$: PUNN's parameters;
$f_{\theta}(\cdot)$: PUNN parameterized by $\theta$; $f_{\hat{\lambda}}(\cdot)$: physics parameterized by $\hat{\lambda}$, which is calibrated prior to training; $N_O$: the number of observed data; $N_C$: the number of collocation states; $\hat{\mathbf{s}}^{(i)}$: the $i$th observed feature; $\mathbf{s}^{(j)}$: the $j$th collocation feature; $\hat{a}^{(i)}$: the $i$th observed acceleration; $a^{(i)}$: the $i$th acceleration predicted by PUNN; $a^{(j)}$: the $j$th acceleration predicted by PUNN; $a_{phy}^{(j)}$: the $j$th acceleration predicted by the physics. The collocation states are functionally equivalent to auxiliary points, which is used to construct the physics residuals. Readers could refer to \textbf{\textcolor{blue}{Section}} \ref{section:4.1} or \textbf{\textcolor{blue}{Section 3.2.2}} in \cite{mo2021physics}.
\par We utilized the Root Mean Square Error (RMSE) to measure the errors in both position and velocity as follows:
\begin{figure}
    \centering
    \includegraphics[width=0.9\linewidth]{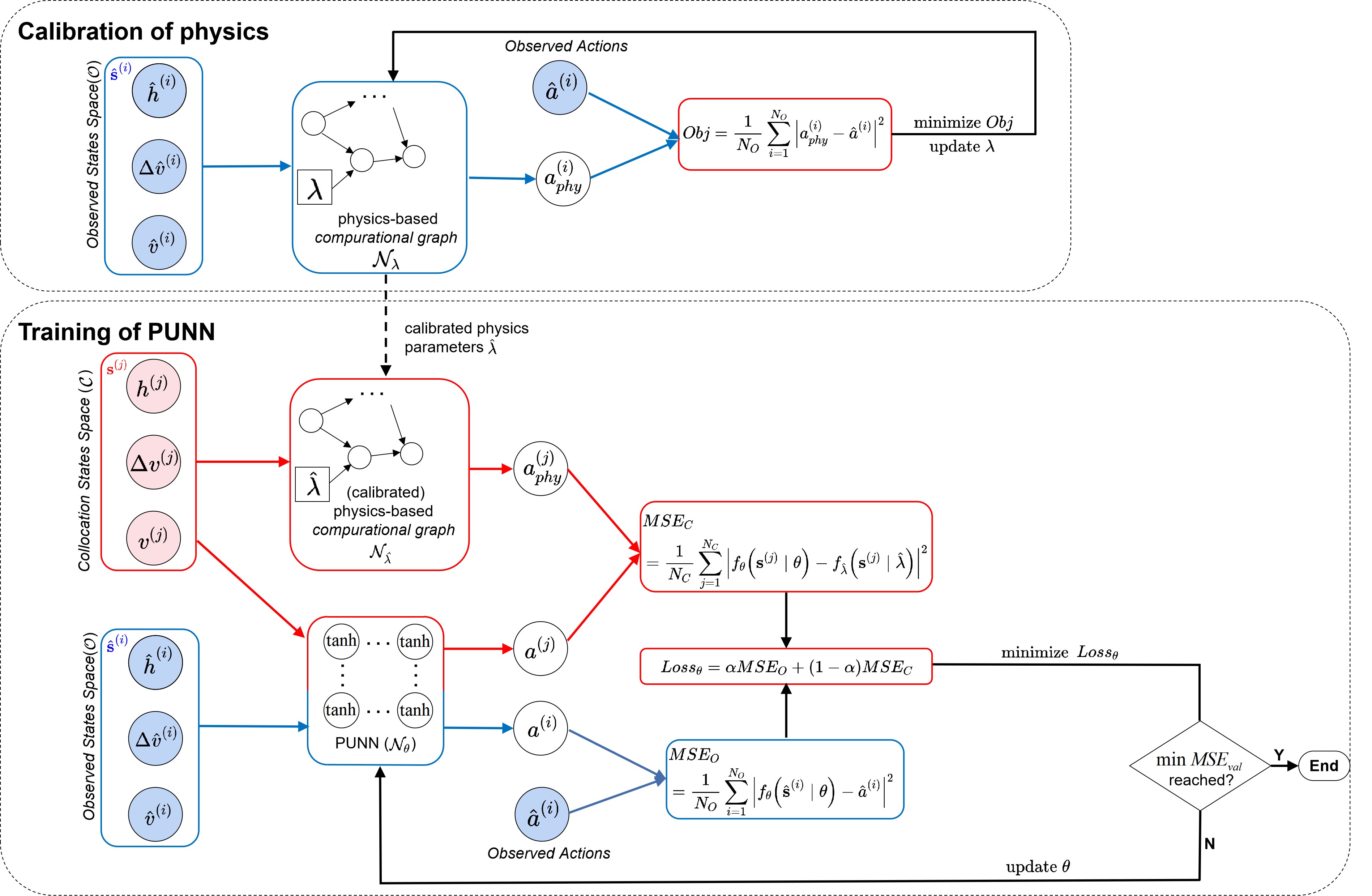}
    \caption{PINN-PICF architecture (\cite{mo2021physics})}
    \label{fig:6}
\end{figure}
\begin{equation} \label{eq:32}
     RMSE_x = \sqrt{
            \frac{1}{N_T T_i}\sum_{i=1}^{N_T} \sum_{t=0}^{T_i}
            \left| 
                x(t)-\hat{x}(t)
            \right|^2 
     }
 \end{equation}   
 \begin{equation} \label{eq:33}
     RMSE_v = \sqrt{
            \frac{1}{N_T T_i}\sum_{i=1}^{N_T} \sum_{t=0}^{T_i}
            \left| 
                v(t)-\hat{v}(t)
            \right|^2 
     }
 \end{equation}
 where $x$ and $v$ denote the positions and velocities of the follower, $T_i$ is the time horizon of the $i$th trajectory, and $N_T$ is the number of trajectories.  Since only the IDM-based physics-informed car-following model is present in the official reproduction code, we denote the IDM-based physics-informed car-following model as the original physics-informed car-following model (OPICF) in our paper. \label{sec:2} \label{line:2} It is noted that we adopted the same early stop mechanism as \cite{mo2021physics} to prevent overfitting, and have the same maximum training iterations.  \label{sec:9} \label{line:9} As shown in \textbf{\textcolor{blue}{Table}}~\ref{table:6}, we conduct a series of comparative experiments involving the benchmark training setup, traditional multi-gradient descent~\cite{sener2018multi}, dual cone gradient descent~\cite{hwang2024dual}, dynamic weight averaging (DWA)~\cite{liu2019end}, and uncertainty weighting (UW)~\cite{kendall2018multi}. The key distinction among these experiments lies in the choice of loss formulation and training strategy. Specifically, the benchmark, DWA, and uncertainty weighting approaches fall under single-objective optimization frameworks, as they scalarize multiple loss terms into a single objective. In contrast, multi-gradient descent and dual cone gradient descent are genuine multi-objective optimization methods that aim to balance task-specific losses via principled directional updates grounded in multi-task optimization theory. \textbf{\textcolor{blue}{Table}}~\ref{table:5} summarizes the loss functions and update directions used in a subset of the experimental settings. In this table, \(\mathcal{L}_1\) denotes the data loss and \(\mathcal{L}_2\) denotes the physics loss, \(k\) is the iteration index, \(\sigma_t\) denotes the noise associated with the \(t\)th loss term, and \(\alpha^{*}\) is the a weight vector obtained from \textbf{\textcolor{blue}{Problem}}~\ref{eq:12}. While the table does not explicitly list the formulation for dual cone gradient descent (DCGD), it similarly treats \(\mathcal{L}_1\) and \(\mathcal{L}_2\) as two distinct objectives within a multi-objective framework. Its update direction is computed via a different strategy, as detailed in \textcolor{blue}{\textbf{Section}}~\ref{section:3.1}.

 \begin{table}[htp!]
\centering
\caption{Comparison of different training set up} \label{table:5}
\begin{tabular}{ccc}
\toprule
Model & Loss function & Update direction \\
\midrule
OPICF &  $\mathcal{L} = 0.9\cdot\mathcal{L}_{1} + 0.1 \cdot \mathcal{L}_{2}$  &  $\mathbf{d} = - 0.9\cdot\nabla\mathcal{L}_{1}-0.1\nabla\mathcal{L}_{2}$  \\
DWA-PICF & $\mathcal{L} = \sum_{t = 1}^{2}w_{t}^{(k)}\mathcal{L}_{t},w_{t}^{(k)} = \frac{2\exp{\bigg(\frac{\mathcal{L}_{t}^{(k-1)}}{\mathcal{L}_{t}^{(k-2)}}/2\bigg)}}{\sum_{j = 1}^{2}\exp{\bigg(\frac{\mathcal{L}_{j}^{(k-1)}}{\mathcal{L}_{j}^{(k-2)}}/2\bigg)}}$  &  $\mathbf{d} = - \sum_{t = 1}^{2}w_{t}^{k}\nabla\mathcal{L}_{t}$  \\
UW-PICF &  $\mathcal{L} = \sum_{t = 1}^{2}\bigg(\frac{1}{2\sigma_{t}^{2}}\mathcal{L}_{t}\bigg) + \log \sigma_{t}$  &  $\mathbf{d} = - \sum_{t = 1}^{2}\frac{1}{2\sigma_{t}^{2}}\nabla\mathcal{L}_{t}$  \\
TMGD-PICF & $\mathcal{L}_{1},\mathcal{L}_{2}$ &  $\mathbf{d} = - \sum_{t = 1}^{2}\alpha^{*}_{t}\nabla\mathcal{L}_{t}$ \\
\bottomrule
\end{tabular}
\end{table}
\label{sec:5} \label{line:5}

All training methods based on multi-gradient descent outperform the baseline model as well as the DWA and UW strategies. Among them, the DCGD variants consistently yield the best results. Specifically, models trained using DCGD-CENTER, DCGD-PROJ, and DCGD-AVER achieve significantly lower RMSEs in both position and speed compared to scalarization-based methods and earlier MGDA baselines.

As shown in \textbf{\textcolor{blue}{Table}}~\ref{table:6}, the DCGD-CENTER-PICF model achieves the lowest position error, with an RMSE of 5.7275~$\pm$~0.2051, while DCGD-PROJ-PICF achieves the best velocity accuracy, with an RMSE of 0.6478~$\pm$~0.0997. The performance differences among the three DCGD variants are relatively minor and not statistically significant based on standard deviation overlaps. However, the improvement over scalarization-based models such as OPICF (with RMSE of 8.0587~$\pm$~4.0024 in position and 0.8598~$\pm$~0.1357 in speed) is substantial and consistent, indicating the effectiveness of direction correction in multi-objective optimization.

\label{sec:6} \label{line:6}
We also provide visual comparisons between predicted and actual trajectories for each training method in \textbf{\textcolor{blue}{Figures}}~\ref{fig:7} to~\ref{fig:11}. Each method was evaluated on 10 independent test trajectories. In these figures, the red dashed line represents the ground-truth trajectory, while the blue line denotes the predicted trajectory generated by the physics-informed car-following model trained with each method. As illustrated, DCGD-based models produce predictions that more closely track the true trajectory, both in position and dynamic pattern, further supporting the quantitative findings.

\begin{table}[htp!]
\centering
\caption{Performance comparison between linear scalarization-based and MGDA-based training for physics-informed car-following models} \label{table:6}
\begin{tabular}{cccc}
\toprule
Model & $RMSE(\hat{x},x)$ & $ RMSE(\hat{u},u)$ & Coefficient value \\
\midrule
OPICF & 8.0587 $\pm$ 4.0024  & 0.8598 $\pm$ 0.1357  & $\alpha = 0.9$, $\beta = 0.1$ \\
UW-PICF & 7.8995 $\pm$ 4.0174  & 0.7908 $\pm$ 0.1717  & -\\
DWA-PICF & 7.7455 $\pm$ 2.8498  & 0.7839 $\pm$ 0.1053  & -\\
TMGD-PICF & 7.6101 $\pm$ 2.5439  & 0.7902 $\pm$ 0.1173  & -\\
DCGD-CENTER-PICF & \textbf{5.7275 $\pm$ 0.2051}  & 0.6490 $\pm$ 0.0092  & -\\
DCGD-PROJ-PICF & 5.7853 $\pm$ 0.2009   & \textbf{0.6478 $\pm$ 0.0997}  & -\\
DCGD-AVER-PICF & 5.8061 $\pm$ 0.1885  &  0.6479 $\pm$ 0.0098  & -\\
\bottomrule
\end{tabular}
\end{table}
\begin{figure}[htbp]
  \centering

  \begin{subfigure}[b]{0.48\textwidth}
    \centering
    \includegraphics[width=\textwidth]{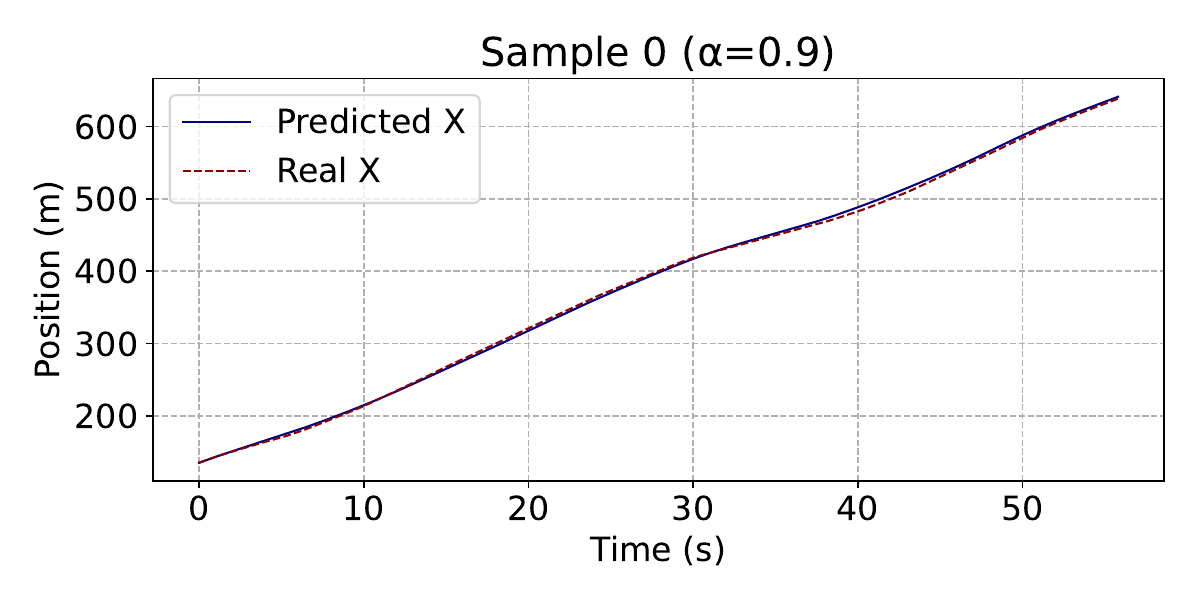}
  \end{subfigure}
  \begin{subfigure}[b]{0.48\textwidth}
    \centering
    \includegraphics[width=\textwidth]{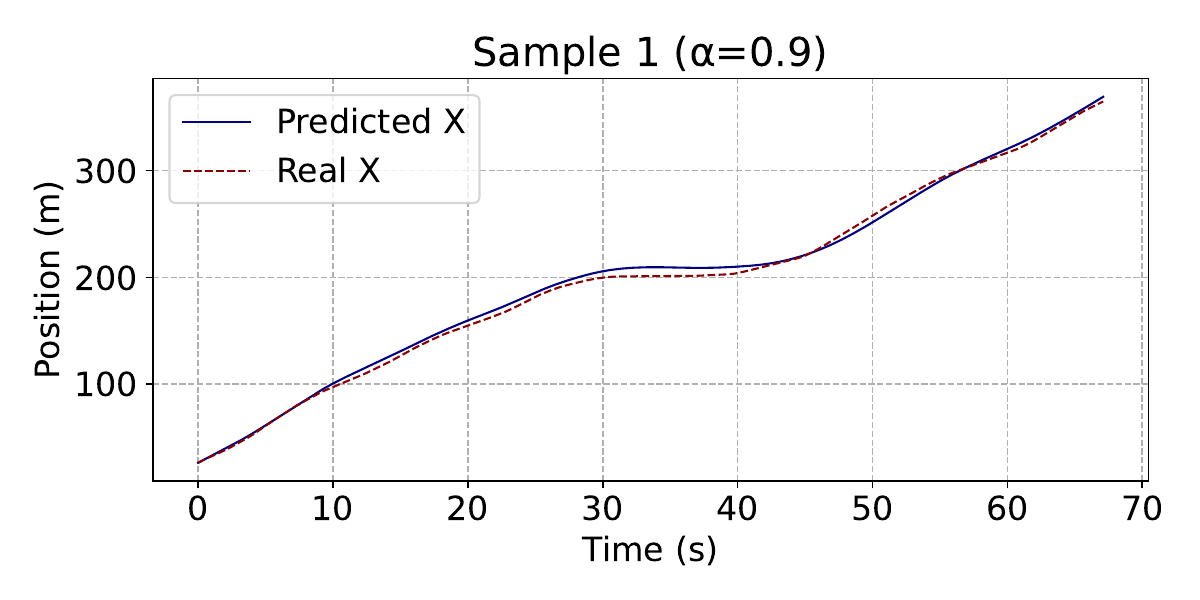}
  \end{subfigure}
  
  \begin{subfigure}[b]{0.48\textwidth}
    \centering
    \includegraphics[width=\textwidth]{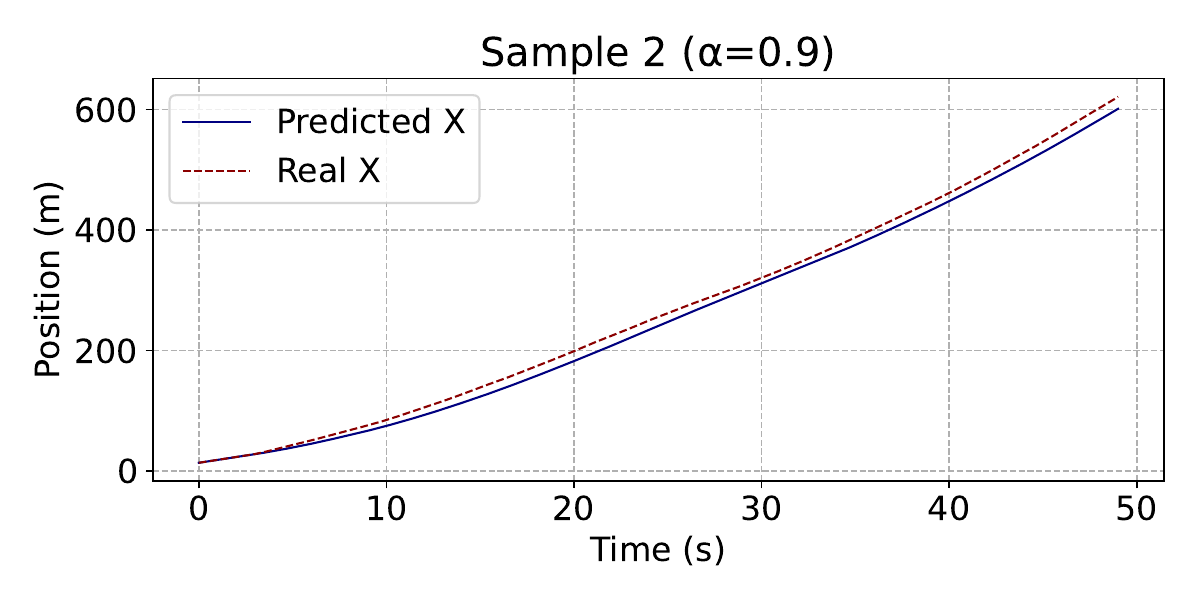}
  \end{subfigure}
  \begin{subfigure}[b]{0.48\textwidth}
    \centering
    \includegraphics[width=\textwidth]{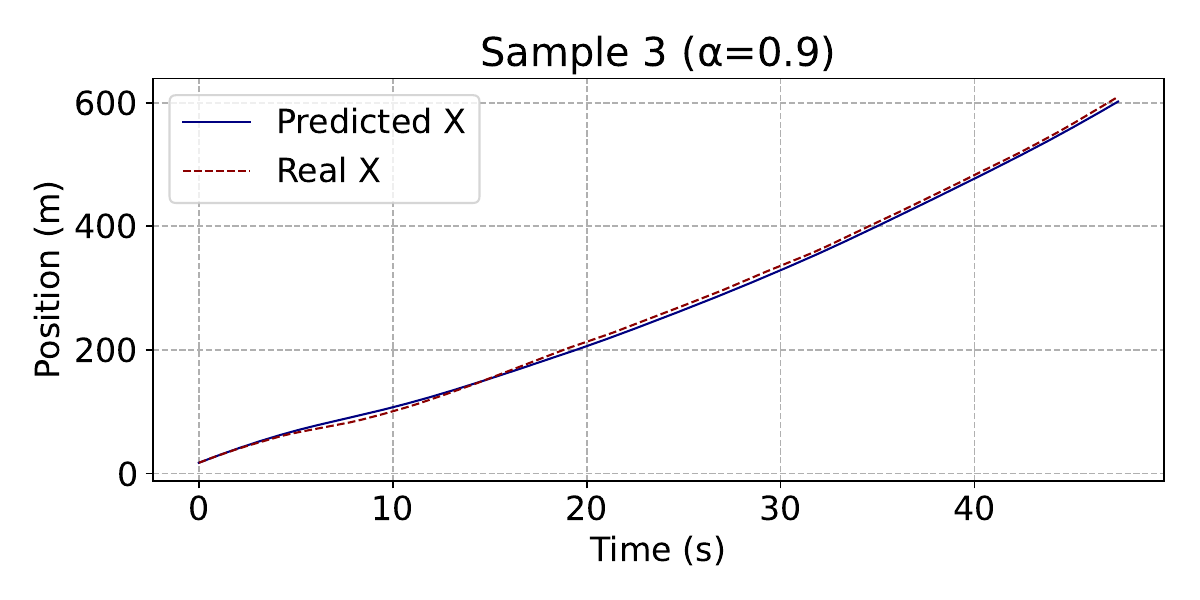}
  \end{subfigure}
  
  \begin{subfigure}[b]{0.48\textwidth}
    \centering
    \includegraphics[width=\textwidth]{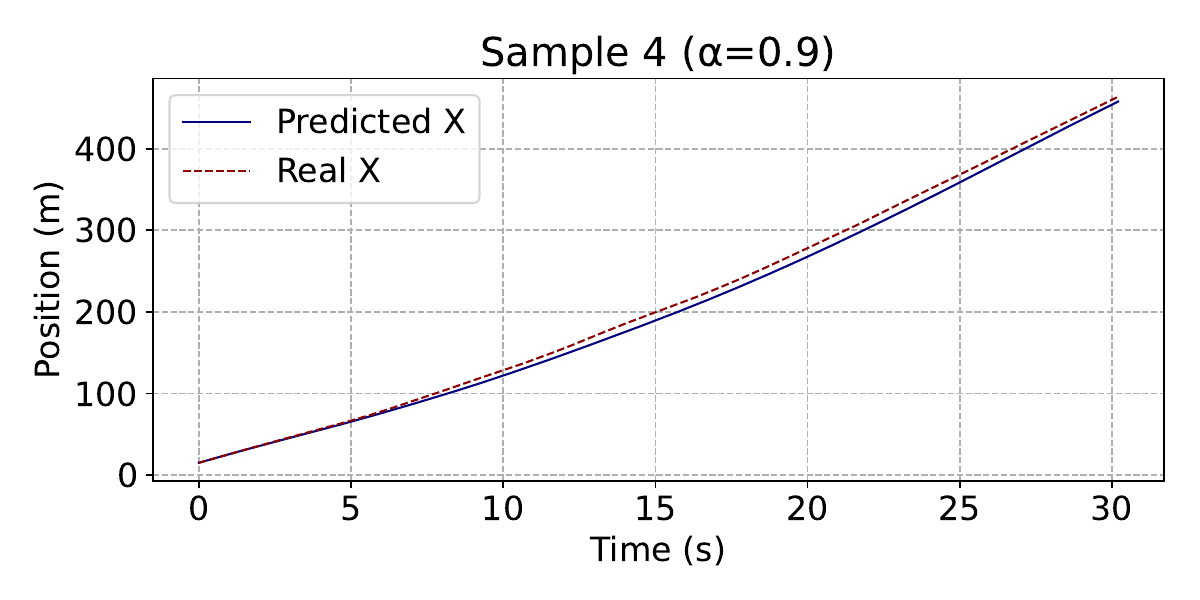}
  \end{subfigure}
  \begin{subfigure}[b]{0.48\textwidth}
    \centering
    \includegraphics[width=\textwidth]{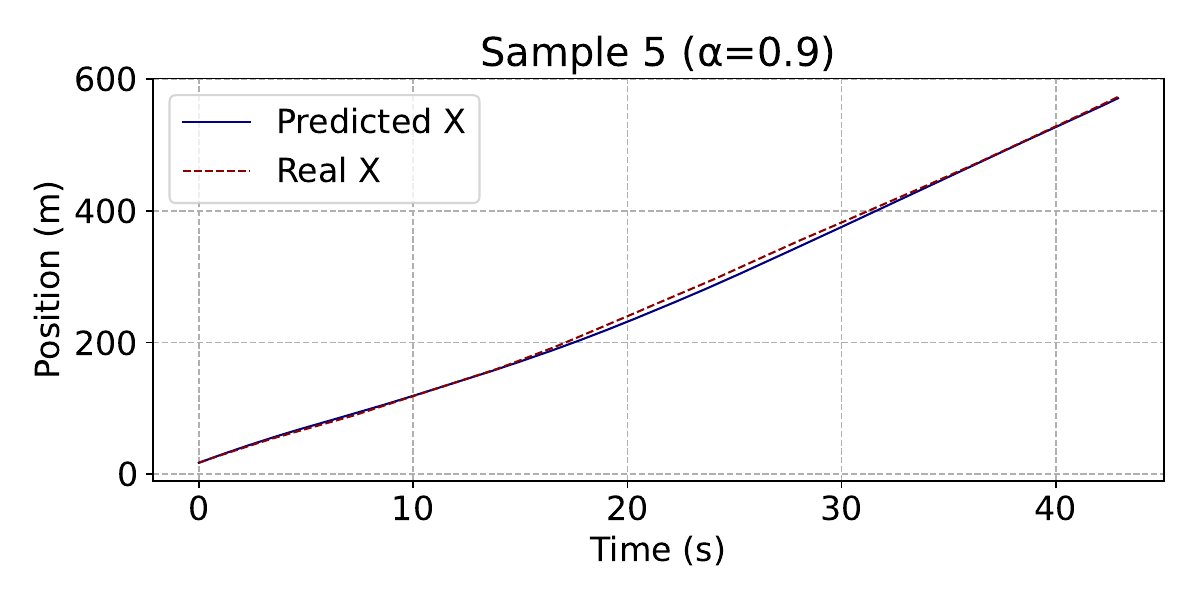}
  \end{subfigure}

  \begin{subfigure}[b]{0.48\textwidth}
    \centering
    \includegraphics[width=\textwidth]{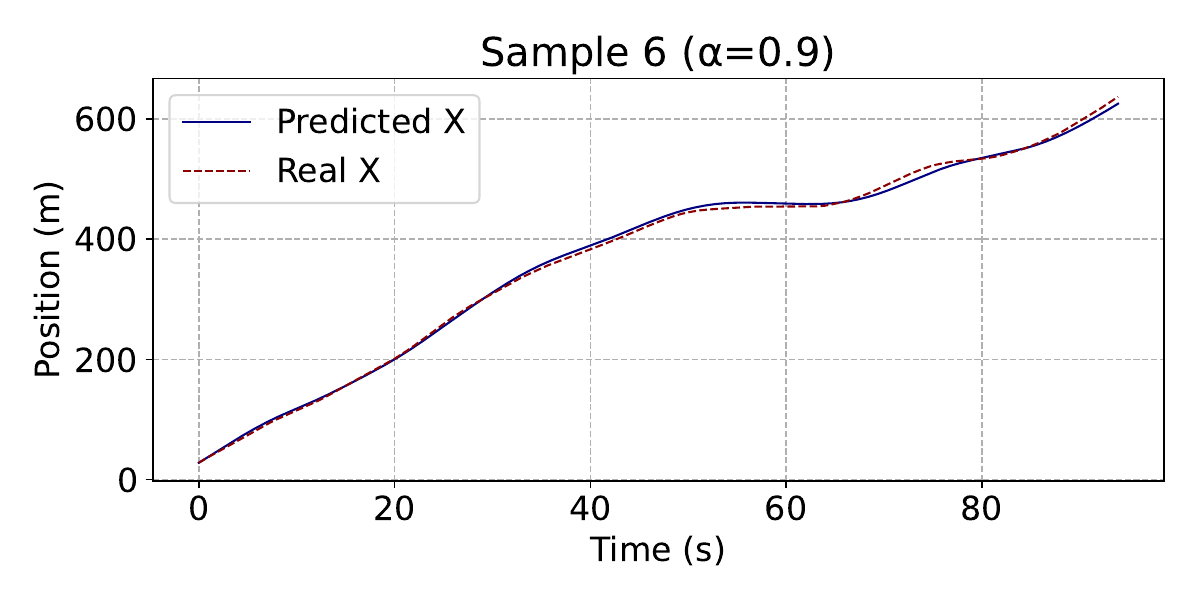}
  \end{subfigure}
  \begin{subfigure}[b]{0.48\textwidth}
    \centering
    \includegraphics[width=\textwidth]{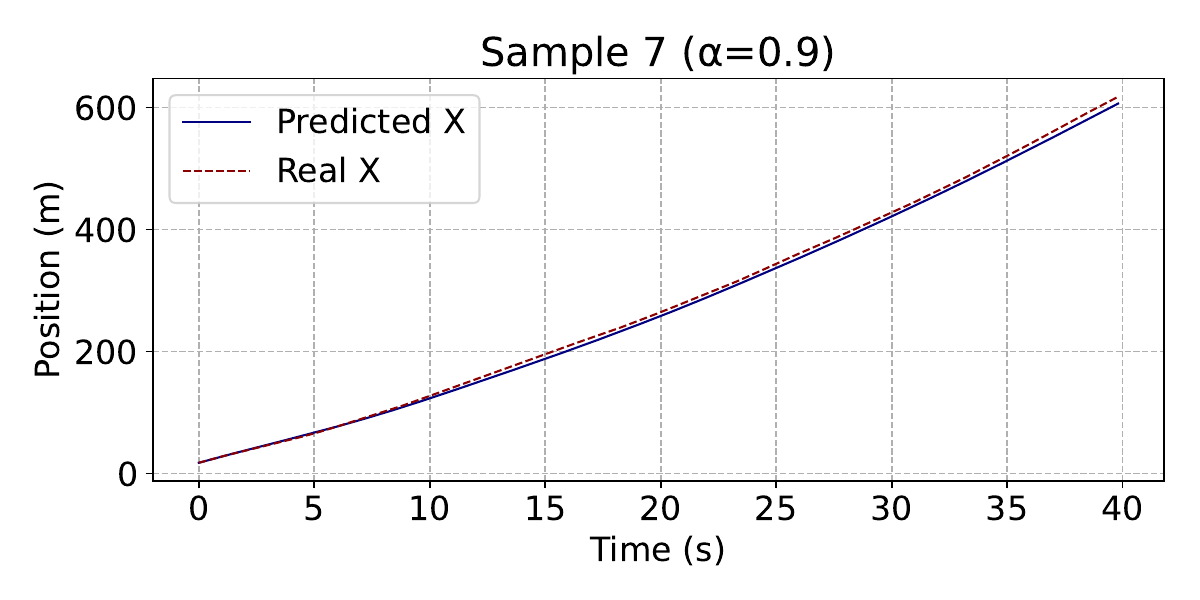}
  \end{subfigure}

  \begin{subfigure}[b]{0.48\textwidth}
    \centering
    \includegraphics[width=\textwidth]{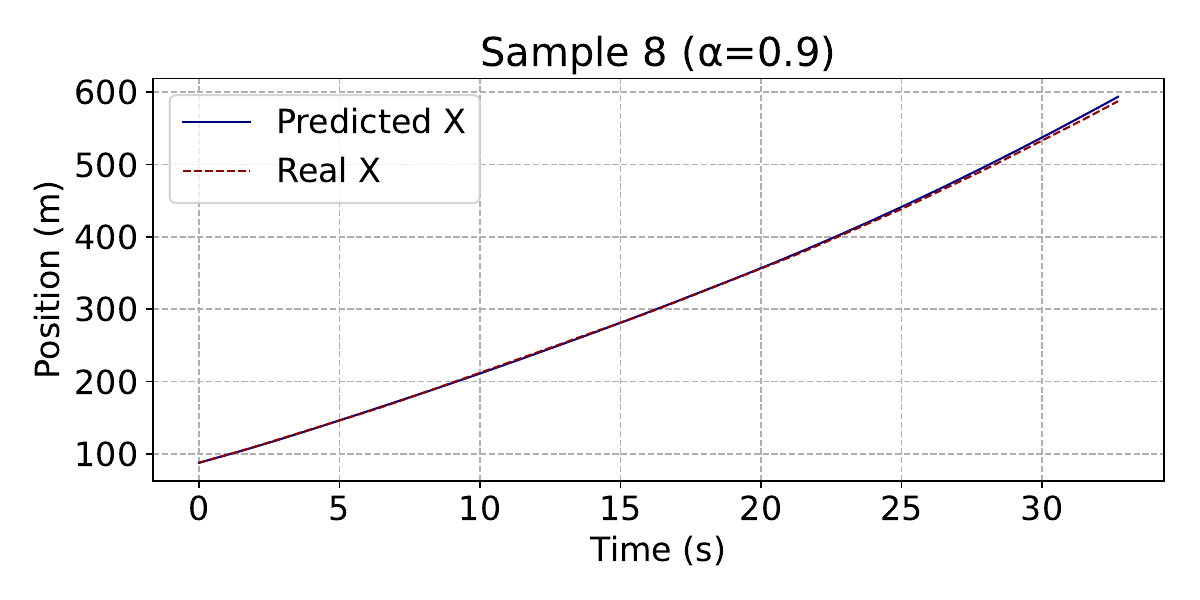}
  \end{subfigure}
  \begin{subfigure}[b]{0.48\textwidth}
    \centering
    \includegraphics[width=\textwidth]{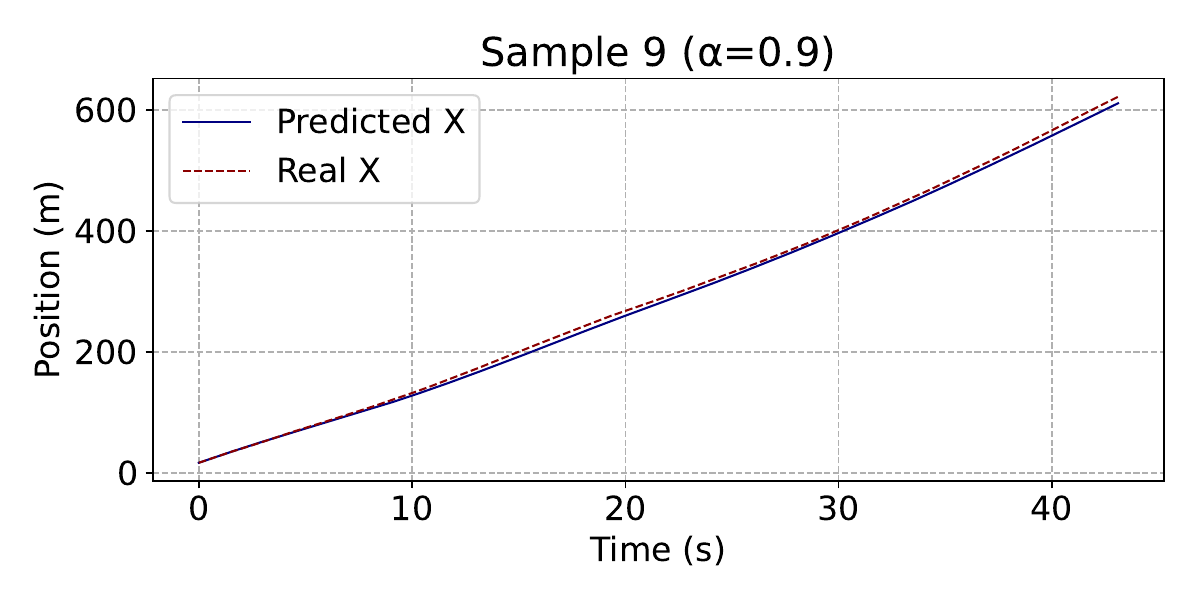}
  \end{subfigure}
  \caption{Comparison of predicted and real positions for ten Sample trajectories based on OPICF}
  \label{fig:7}
\end{figure}

\begin{figure}[htbp]
  \centering

  \begin{subfigure}[b]{0.48\textwidth}
    \centering
    \includegraphics[width=\textwidth]{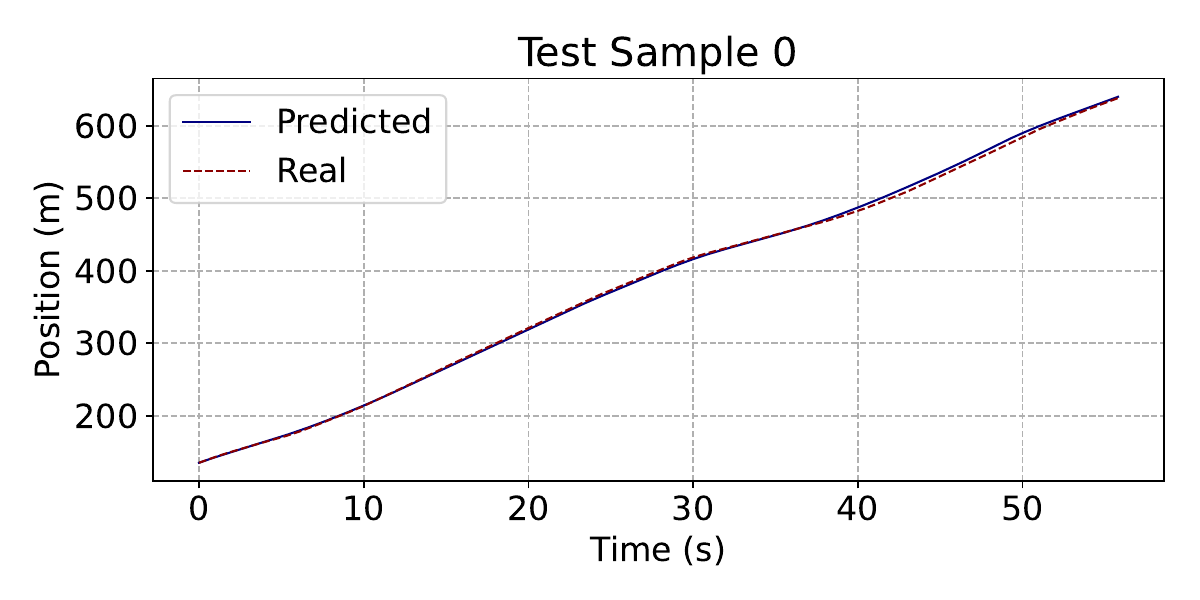}
  \end{subfigure}
  \begin{subfigure}[b]{0.48\textwidth}
    \centering
    \includegraphics[width=\textwidth]{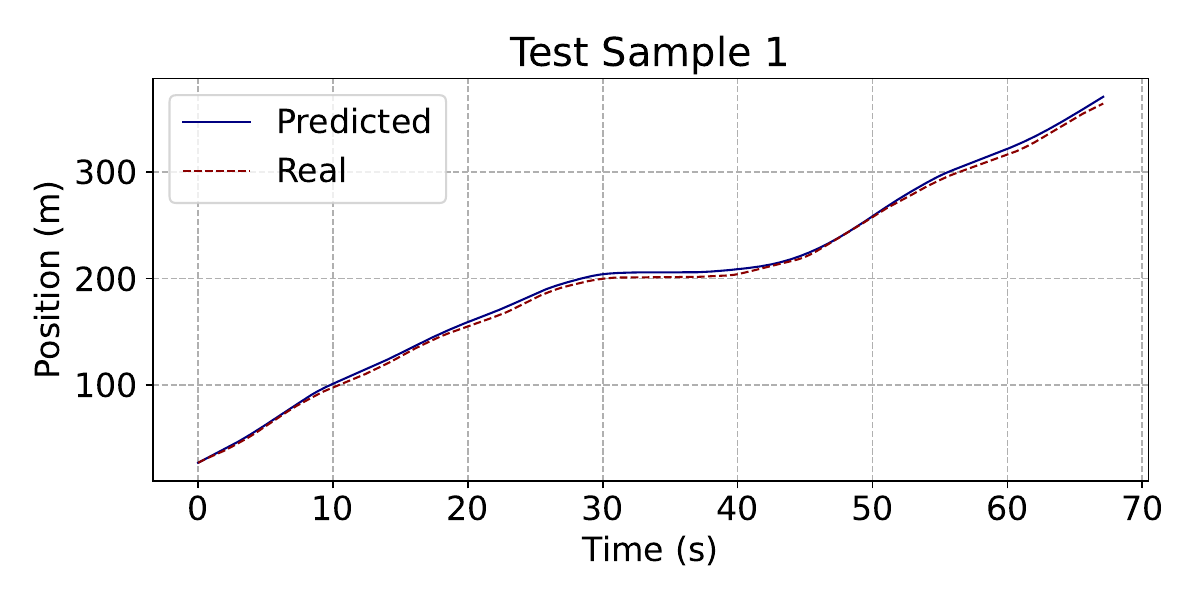}
  \end{subfigure}
  
  \begin{subfigure}[b]{0.48\textwidth}
    \centering
    \includegraphics[width=\textwidth]{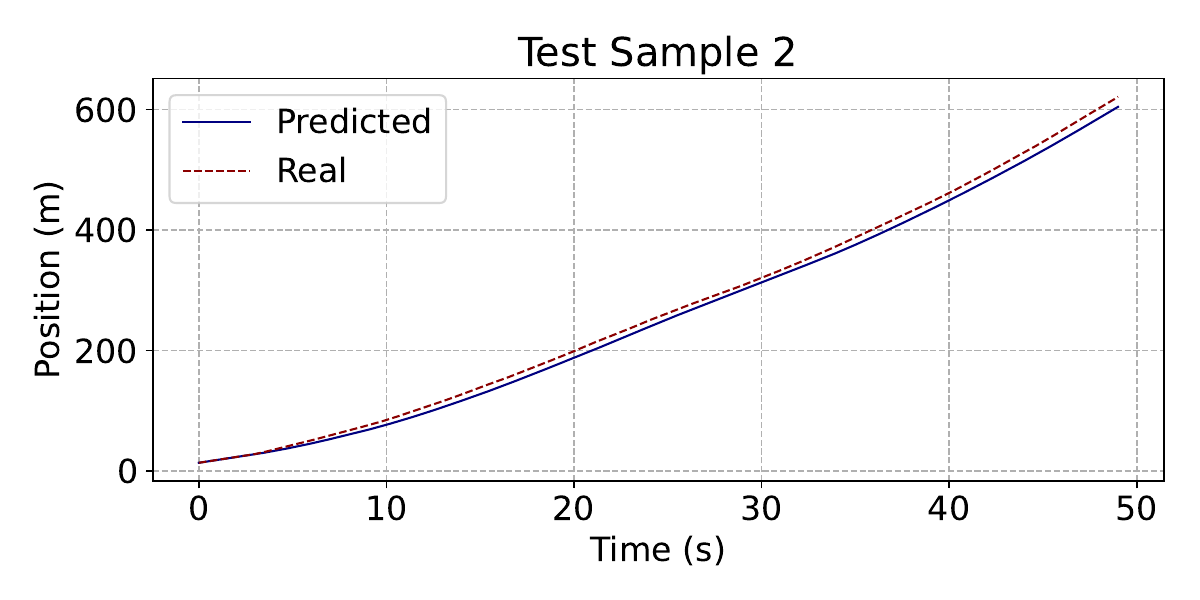}
  \end{subfigure}
  \begin{subfigure}[b]{0.48\textwidth}
    \centering
    \includegraphics[width=\textwidth]{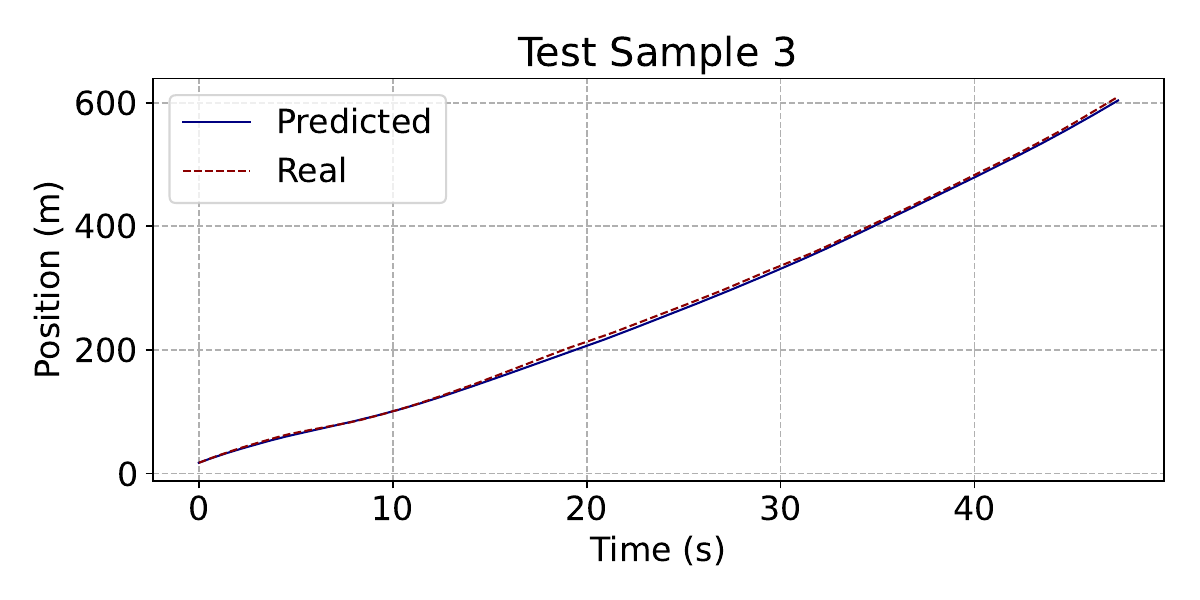}
  \end{subfigure}
  
  \begin{subfigure}[b]{0.48\textwidth}
    \centering
    \includegraphics[width=\textwidth]{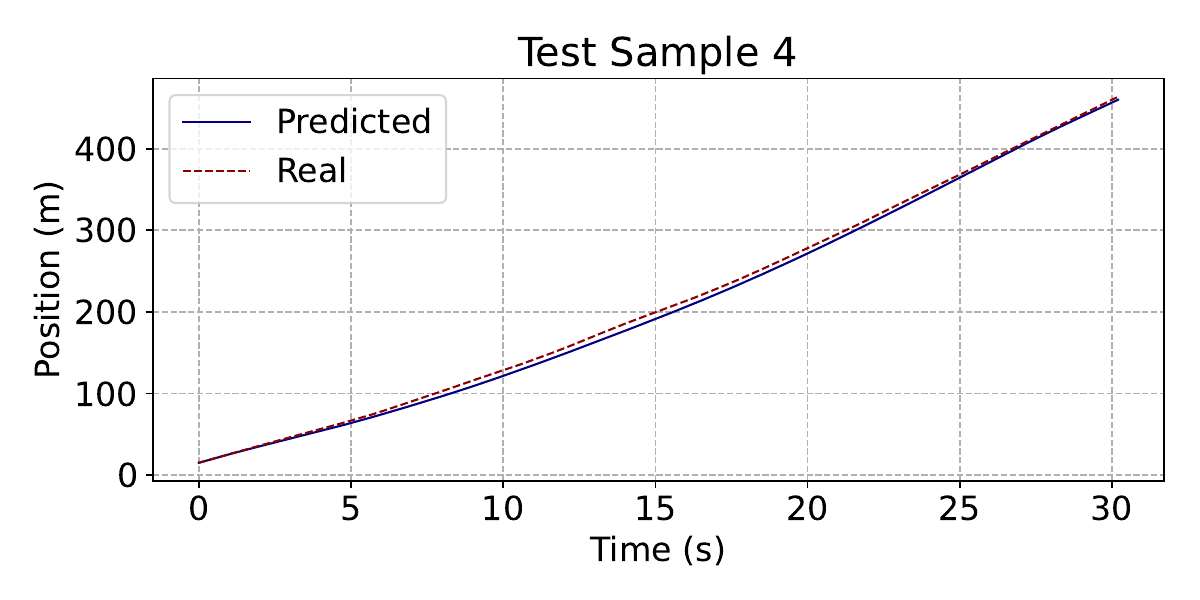}
  \end{subfigure}
  \begin{subfigure}[b]{0.48\textwidth}
    \centering
    \includegraphics[width=\textwidth]{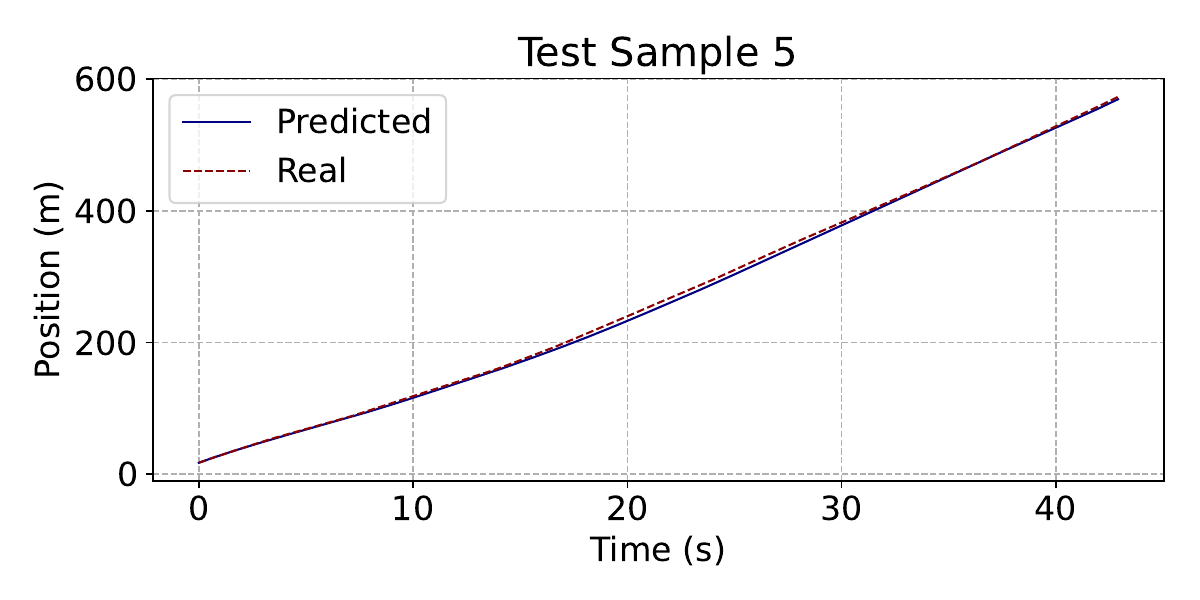}
  \end{subfigure}

  \begin{subfigure}[b]{0.48\textwidth}
    \centering
    \includegraphics[width=\textwidth]{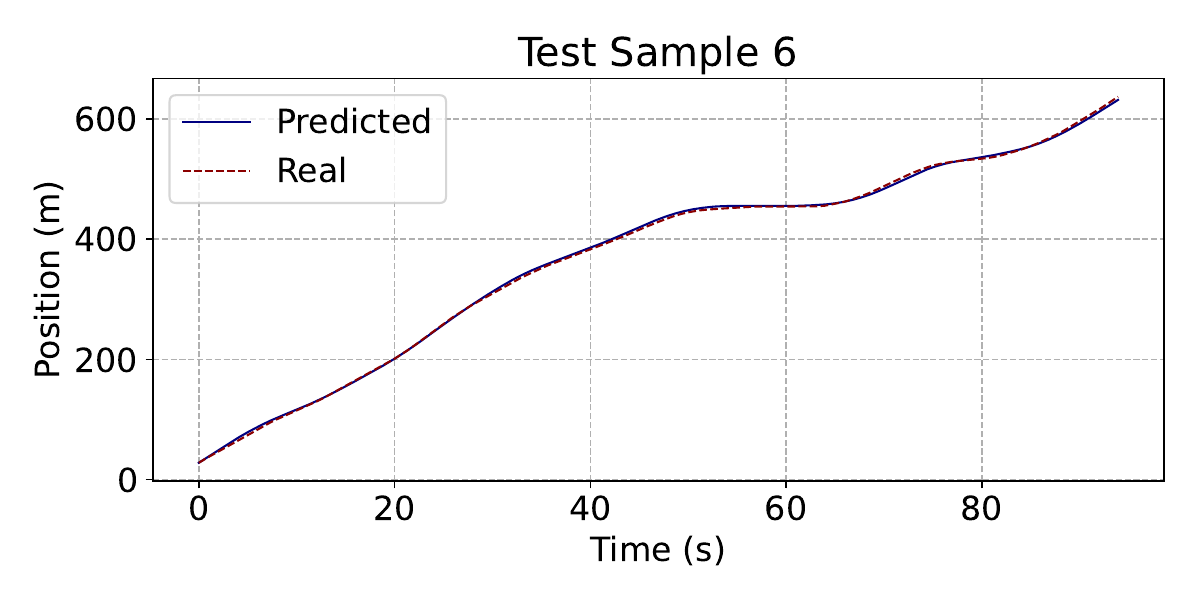}
  \end{subfigure}
  \begin{subfigure}[b]{0.48\textwidth}
    \centering
    \includegraphics[width=\textwidth]{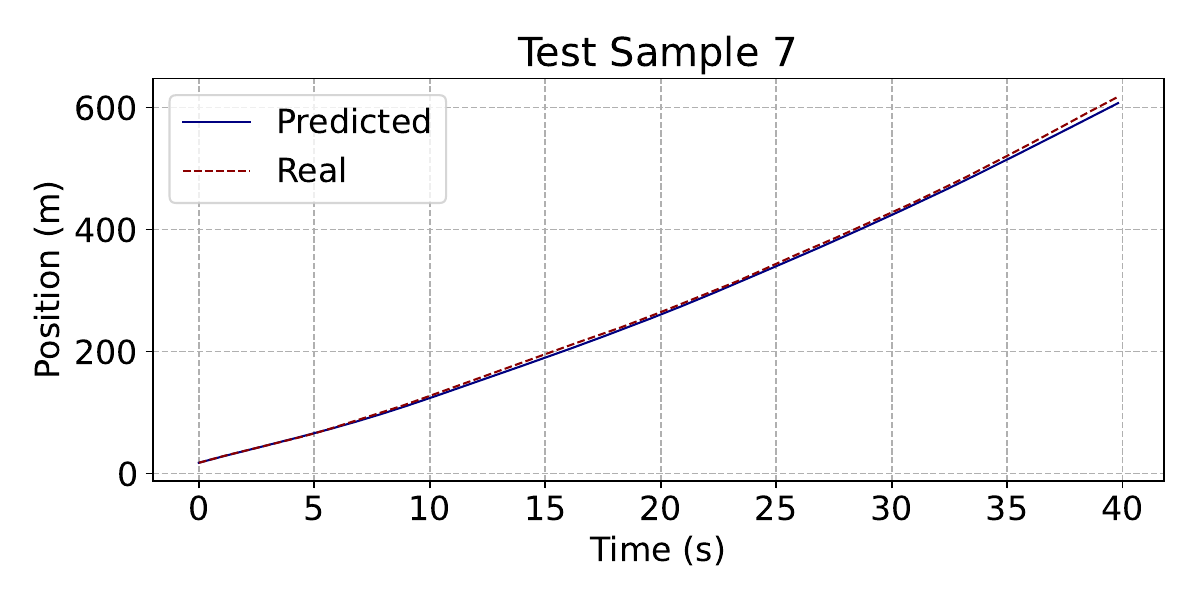}
  \end{subfigure}

  \begin{subfigure}[b]{0.48\textwidth}
    \centering
    \includegraphics[width=\textwidth]{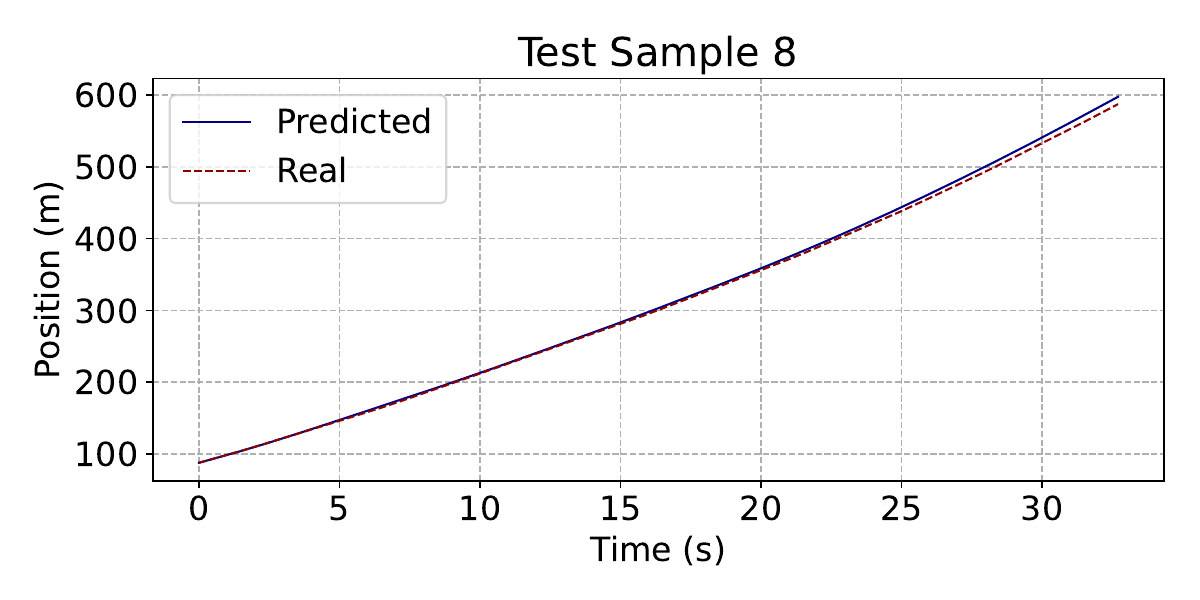}
  \end{subfigure}
  \begin{subfigure}[b]{0.48\textwidth}
    \centering
    \includegraphics[width=\textwidth]{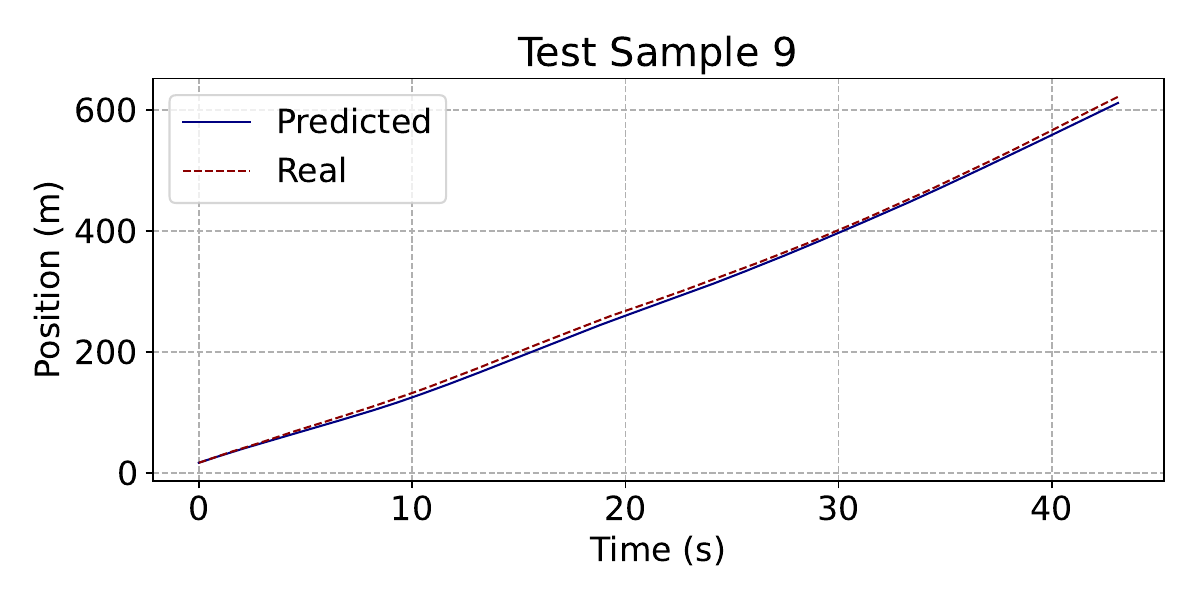}
  \end{subfigure}
  \caption{Comparison of predicted and real positions for ten Sample trajectories based on UW-PICF}
  \label{fig:8}
\end{figure}

\begin{figure}[htbp]
  \centering

  \begin{subfigure}[b]{0.48\textwidth}
    \centering
    \includegraphics[width=\textwidth]{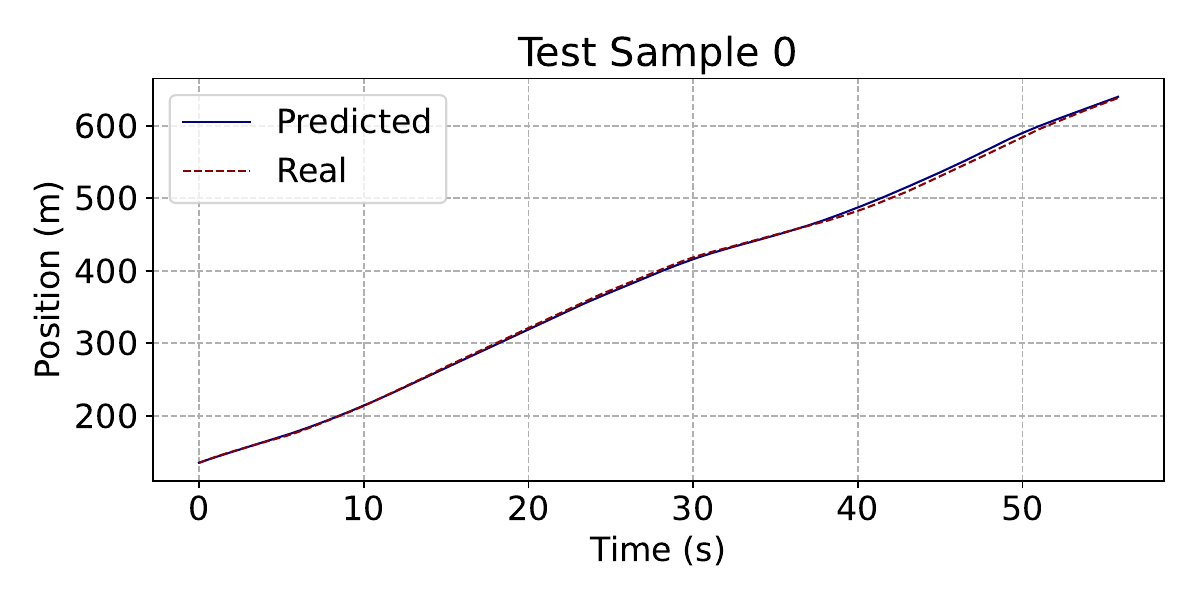}
  \end{subfigure}
  \begin{subfigure}[b]{0.48\textwidth}
    \centering
    \includegraphics[width=\textwidth]{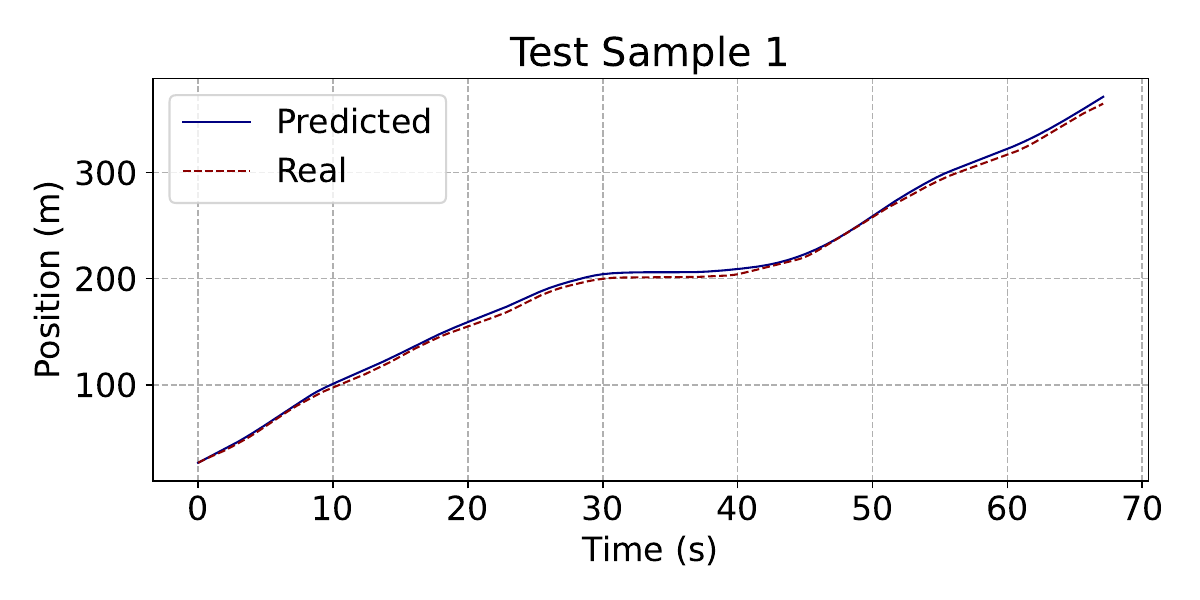}
  \end{subfigure}
  
  \begin{subfigure}[b]{0.48\textwidth}
    \centering
    \includegraphics[width=\textwidth]{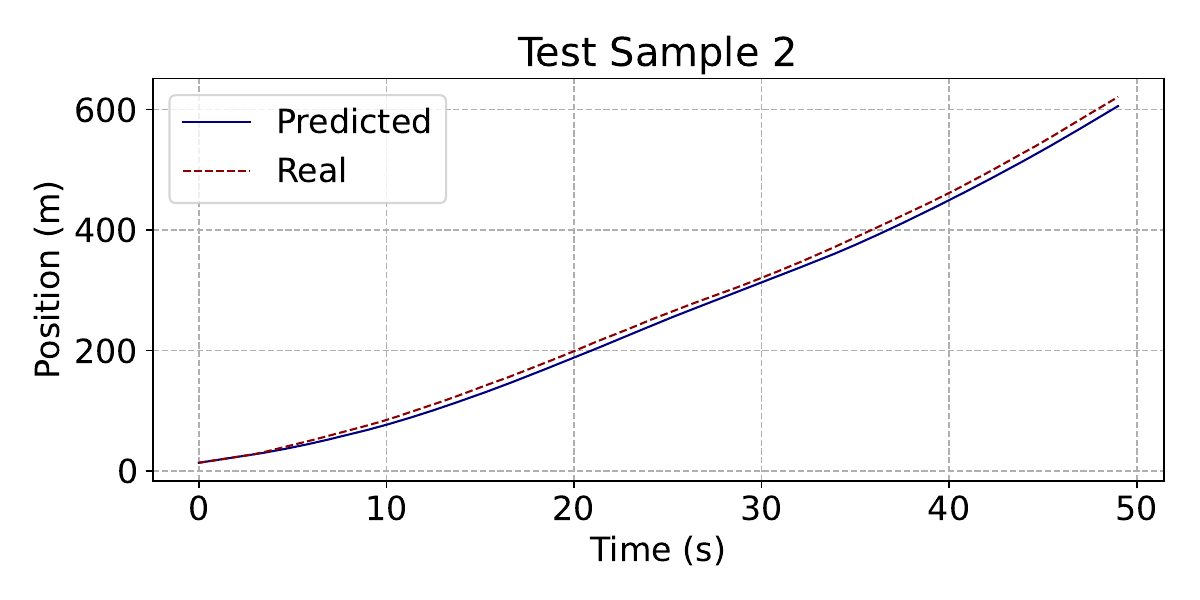}
  \end{subfigure}
  \begin{subfigure}[b]{0.48\textwidth}
    \centering
    \includegraphics[width=\textwidth]{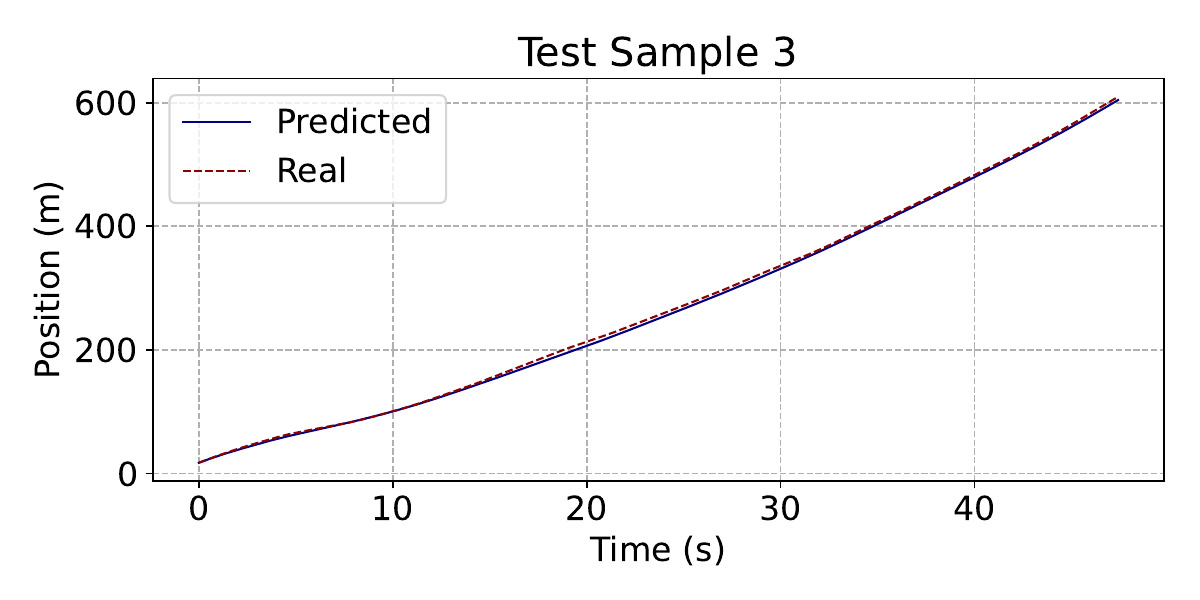}
  \end{subfigure}
  
  \begin{subfigure}[b]{0.48\textwidth}
    \centering
    \includegraphics[width=\textwidth]{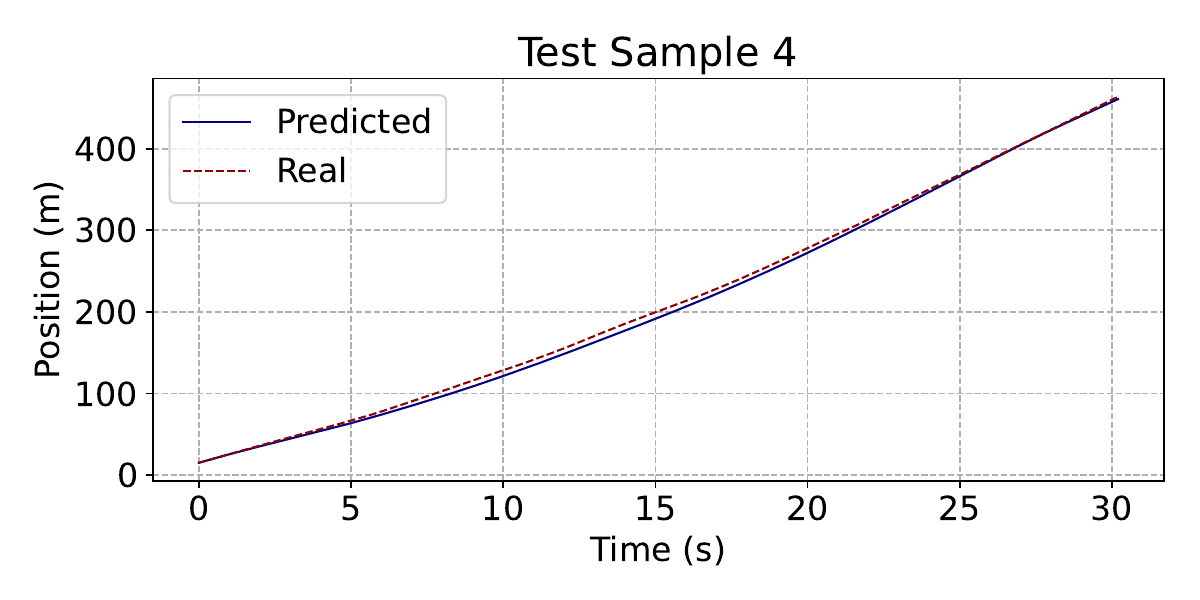}
  \end{subfigure}
  \begin{subfigure}[b]{0.48\textwidth}
    \centering
    \includegraphics[width=\textwidth]{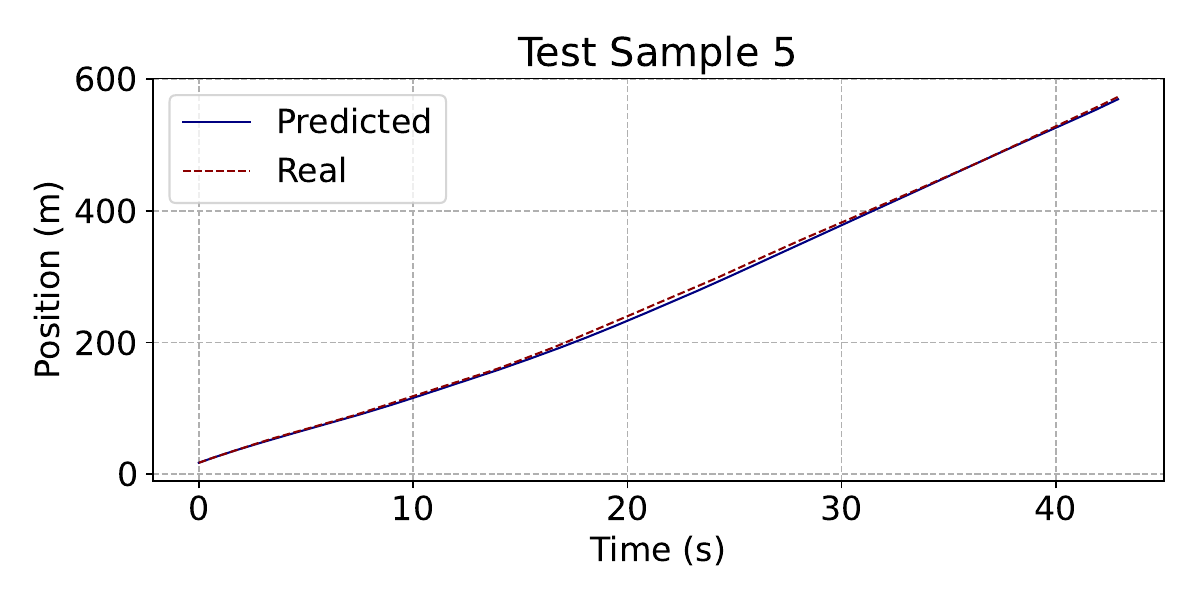}
  \end{subfigure}

  \begin{subfigure}[b]{0.48\textwidth}
    \centering
    \includegraphics[width=\textwidth]{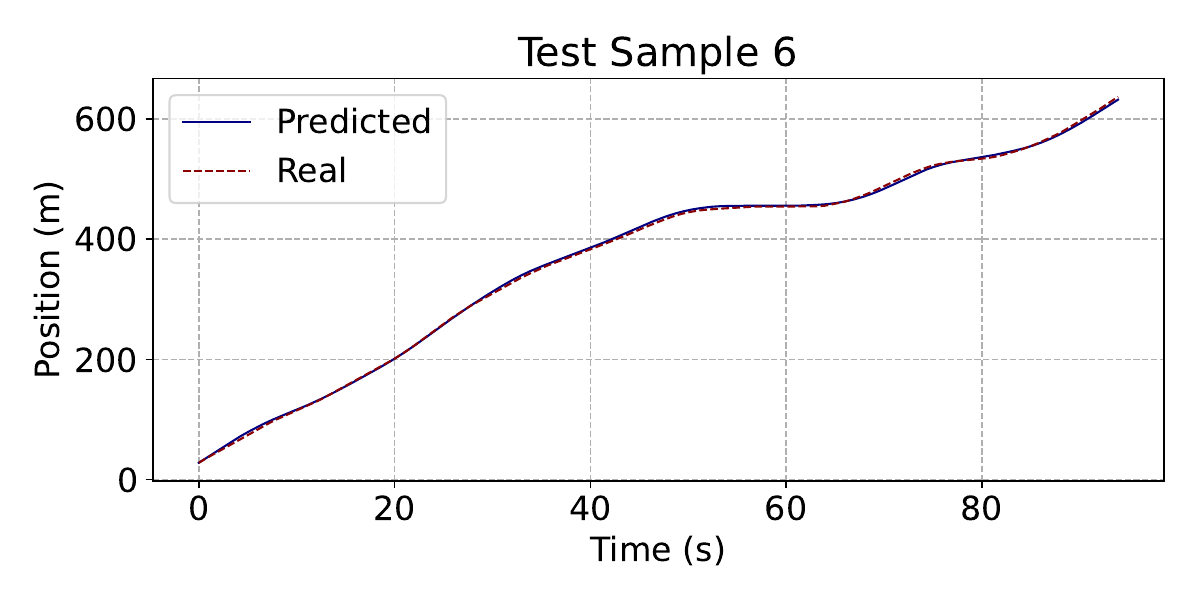}
  \end{subfigure}
  \begin{subfigure}[b]{0.48\textwidth}
    \centering
    \includegraphics[width=\textwidth]{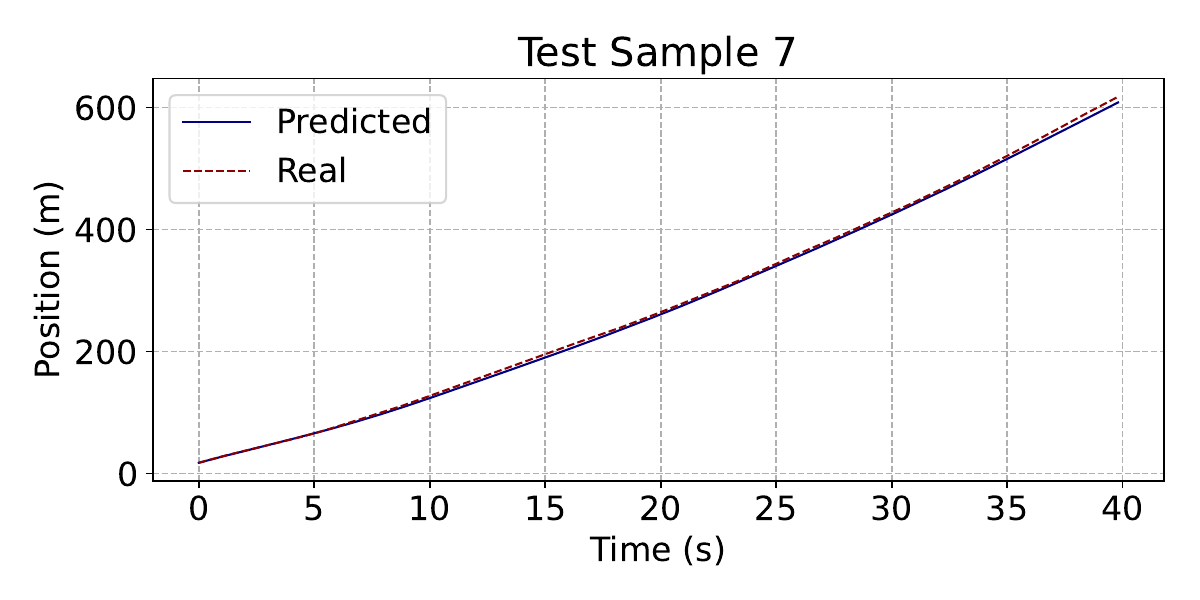}
  \end{subfigure}

  \begin{subfigure}[b]{0.48\textwidth}
    \centering
    \includegraphics[width=\textwidth]{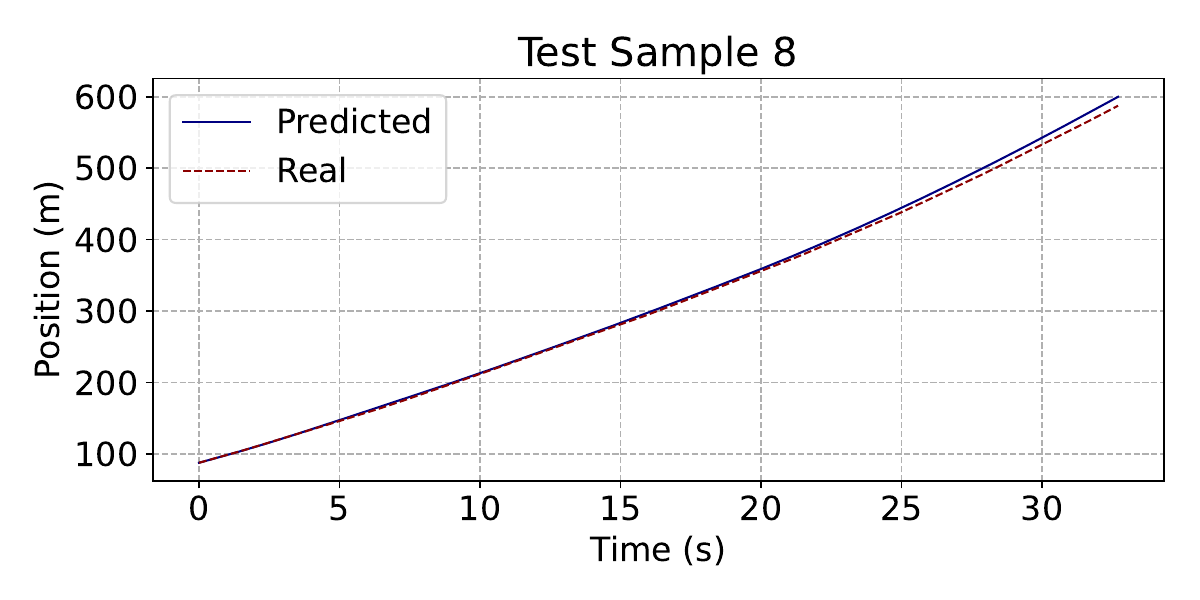}
  \end{subfigure}
  \begin{subfigure}[b]{0.48\textwidth}
    \centering
    \includegraphics[width=\textwidth]{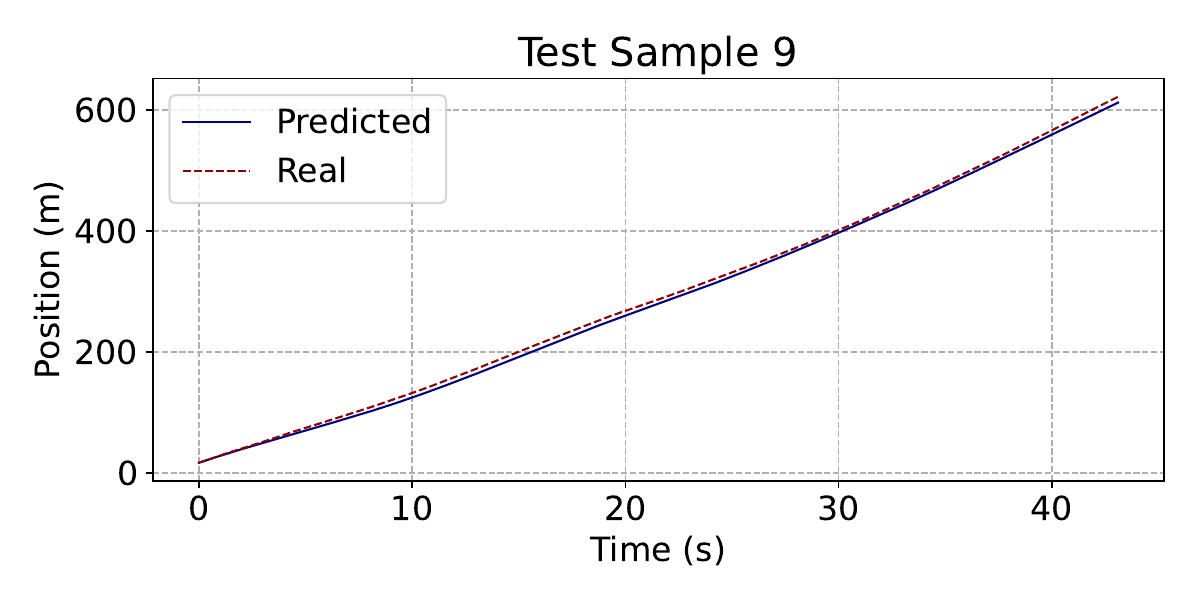}
  \end{subfigure}
  \caption{Comparison of predicted and real positions for ten Sample trajectories based on DWA-PICF}
  \label{fig:9}
\end{figure}

\begin{figure}[htbp]
  \centering

  \begin{subfigure}[b]{0.48\textwidth}
    \centering
    \includegraphics[width=\textwidth]{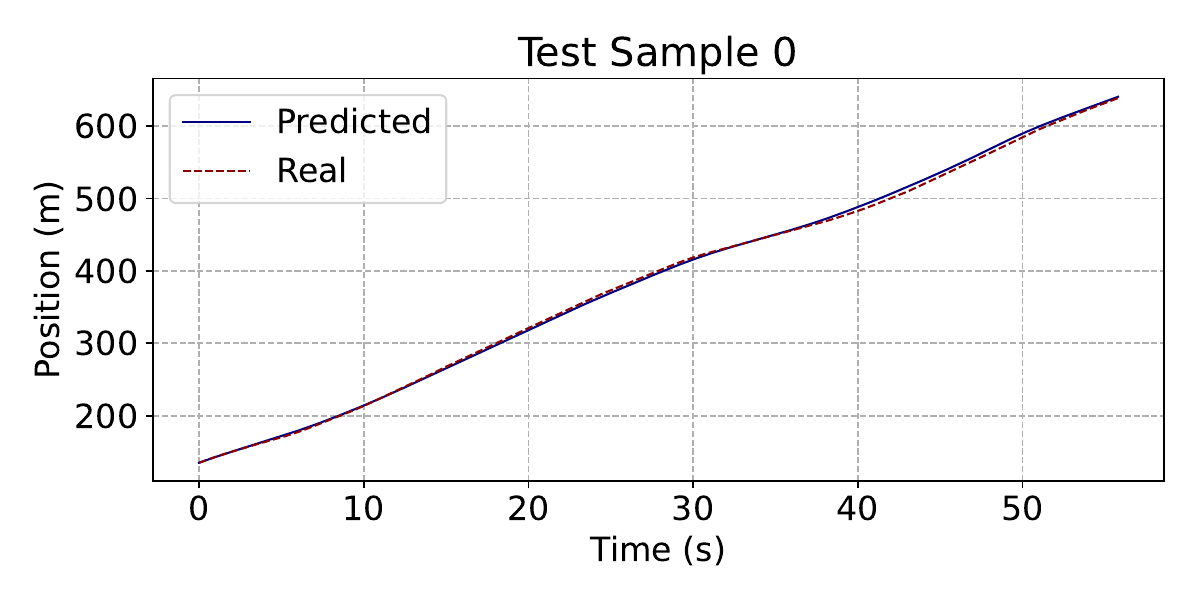}
  \end{subfigure}
  \begin{subfigure}[b]{0.48\textwidth}
    \centering
    \includegraphics[width=\textwidth]{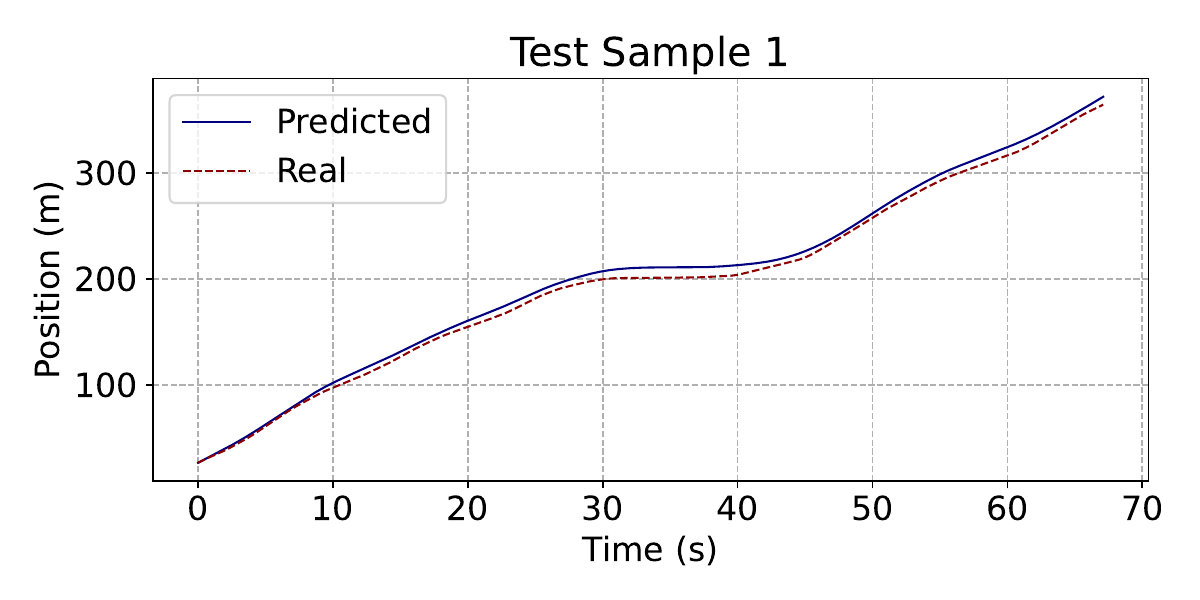}
  \end{subfigure}
  
  \begin{subfigure}[b]{0.48\textwidth}
    \centering
    \includegraphics[width=\textwidth]{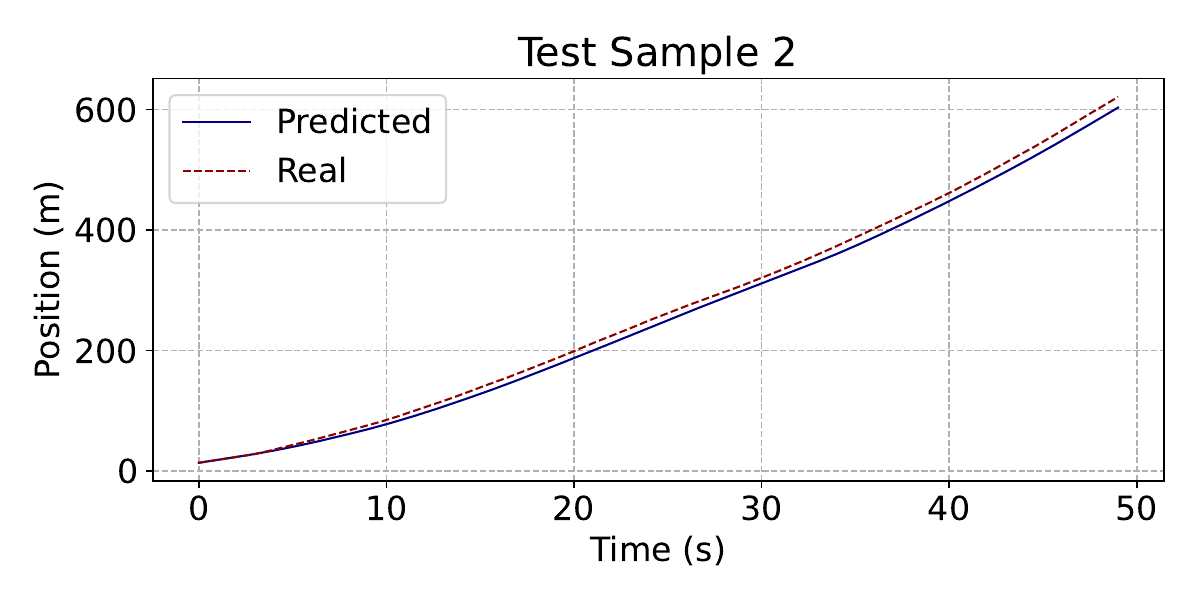}
  \end{subfigure}
  \begin{subfigure}[b]{0.48\textwidth}
    \centering
    \includegraphics[width=\textwidth]{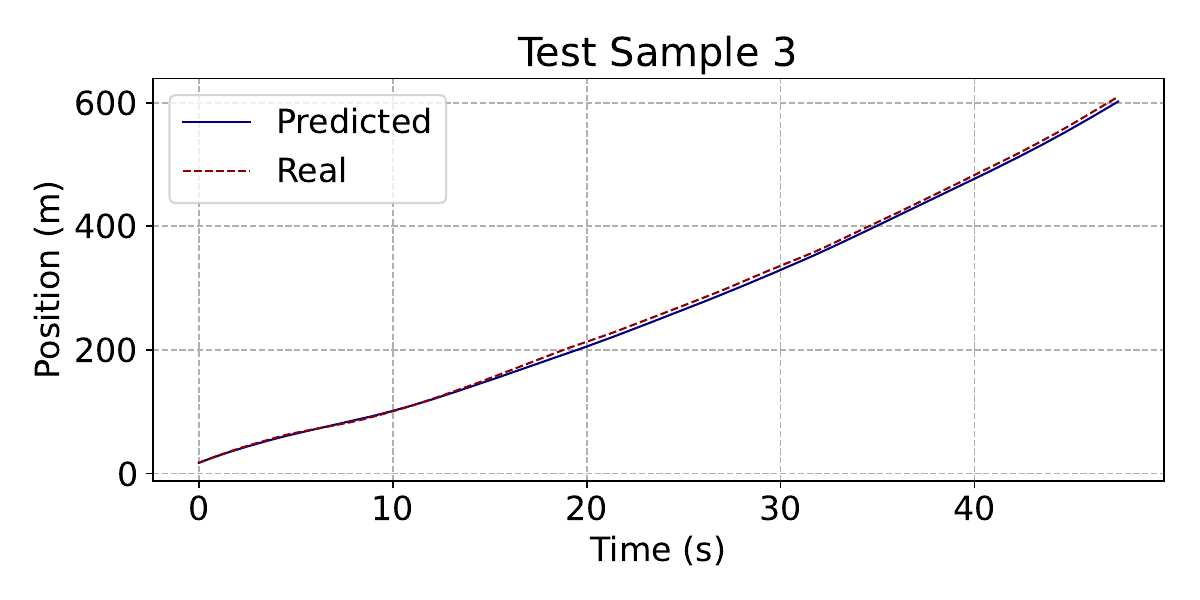}
  \end{subfigure}
  
  \begin{subfigure}[b]{0.48\textwidth}
    \centering
    \includegraphics[width=\textwidth]{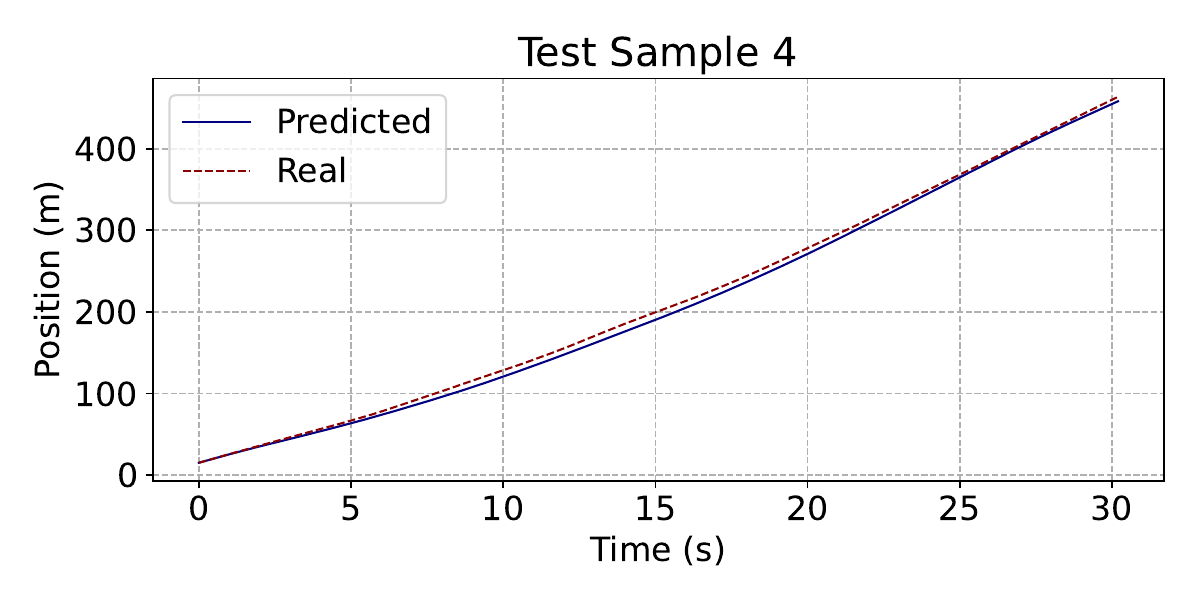}
  \end{subfigure}
  \begin{subfigure}[b]{0.48\textwidth}
    \centering
    \includegraphics[width=\textwidth]{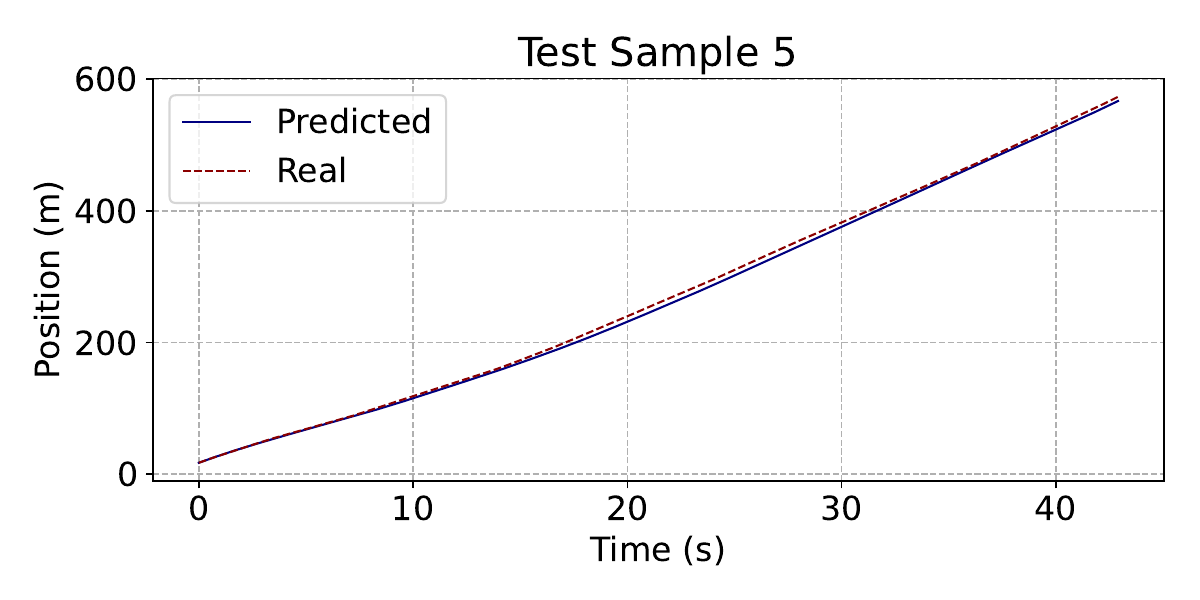}
  \end{subfigure}

  \begin{subfigure}[b]{0.48\textwidth}
    \centering
    \includegraphics[width=\textwidth]{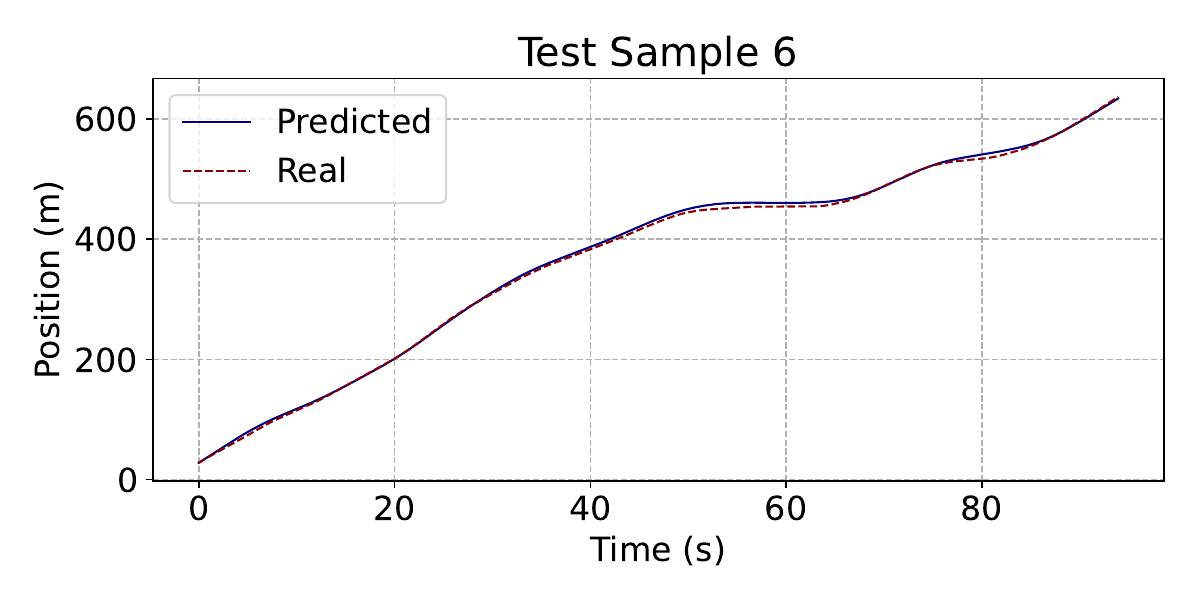}
  \end{subfigure}
  \begin{subfigure}[b]{0.48\textwidth}
    \centering
    \includegraphics[width=\textwidth]{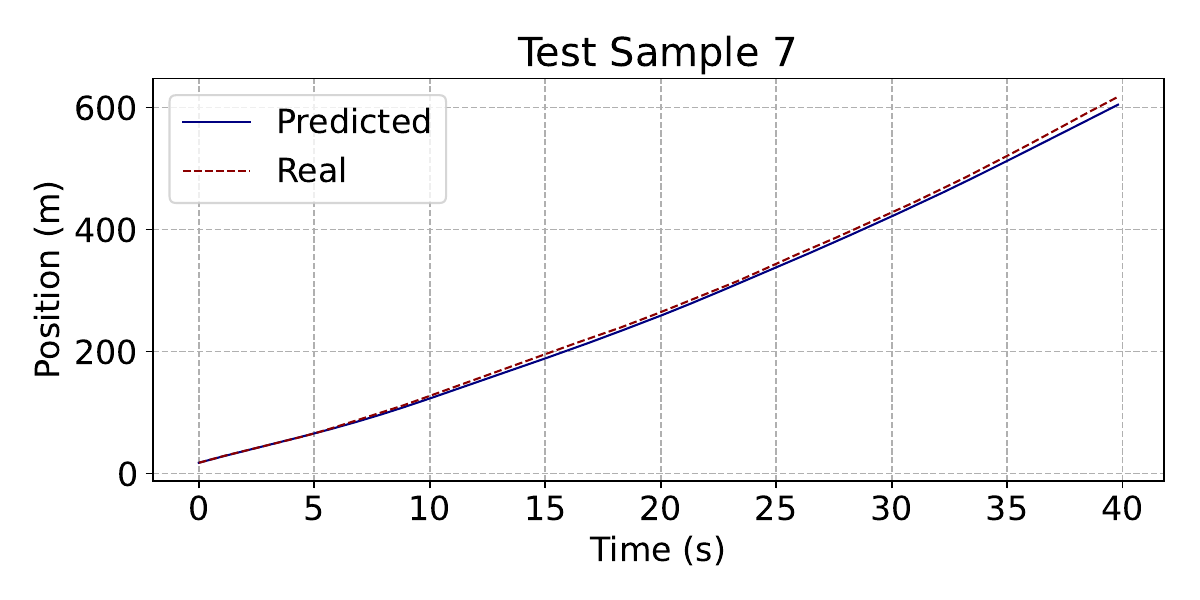}
  \end{subfigure}

  \begin{subfigure}[b]{0.48\textwidth}
    \centering
    \includegraphics[width=\textwidth]{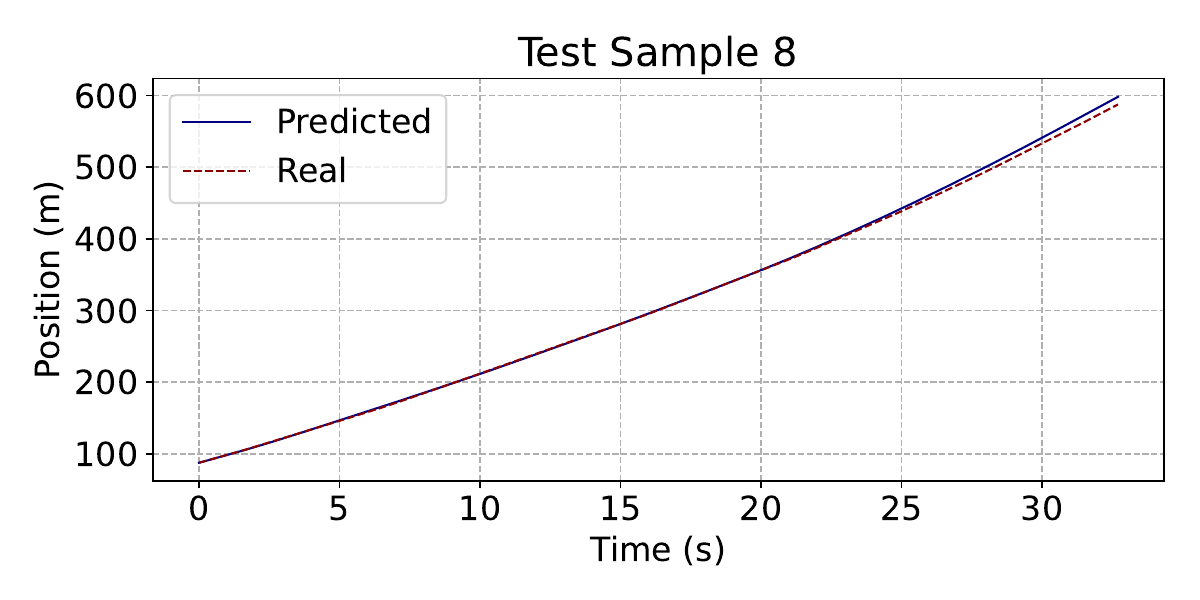}
  \end{subfigure}
  \begin{subfigure}[b]{0.48\textwidth}
    \centering
    \includegraphics[width=\textwidth]{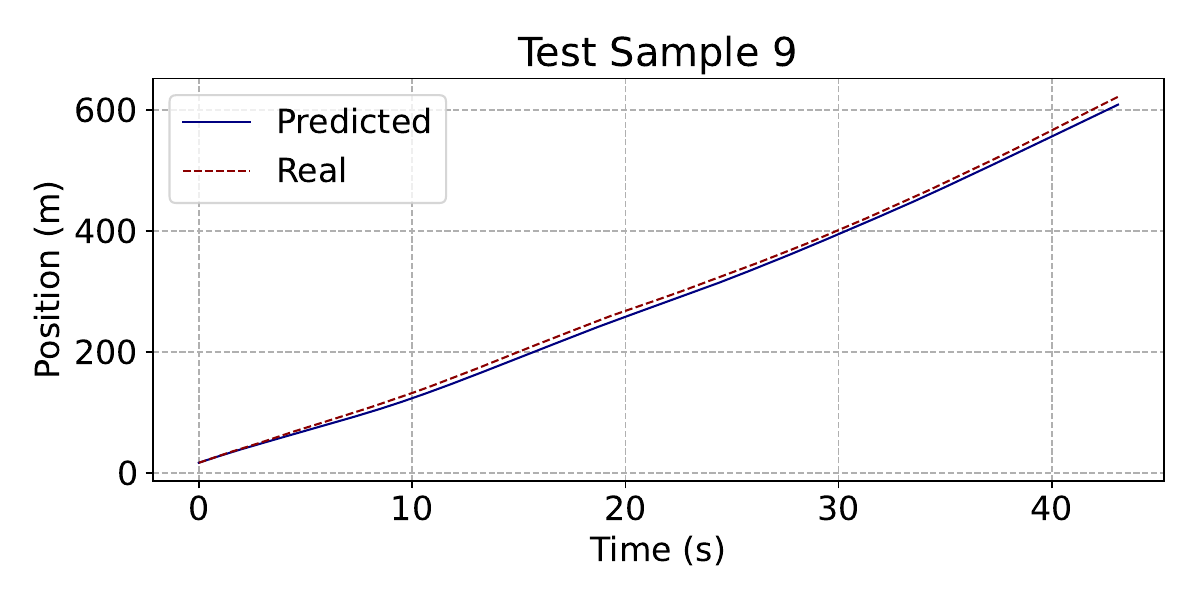}
  \end{subfigure}
  \caption{Comparison of predicted and real positions for ten Sample trajectories based on TMGD-PICF}
  \label{fig:10}
\end{figure}
\begin{figure}[htbp]
  \centering

  \begin{subfigure}[b]{0.48\textwidth}
    \centering
    \includegraphics[width=\textwidth]{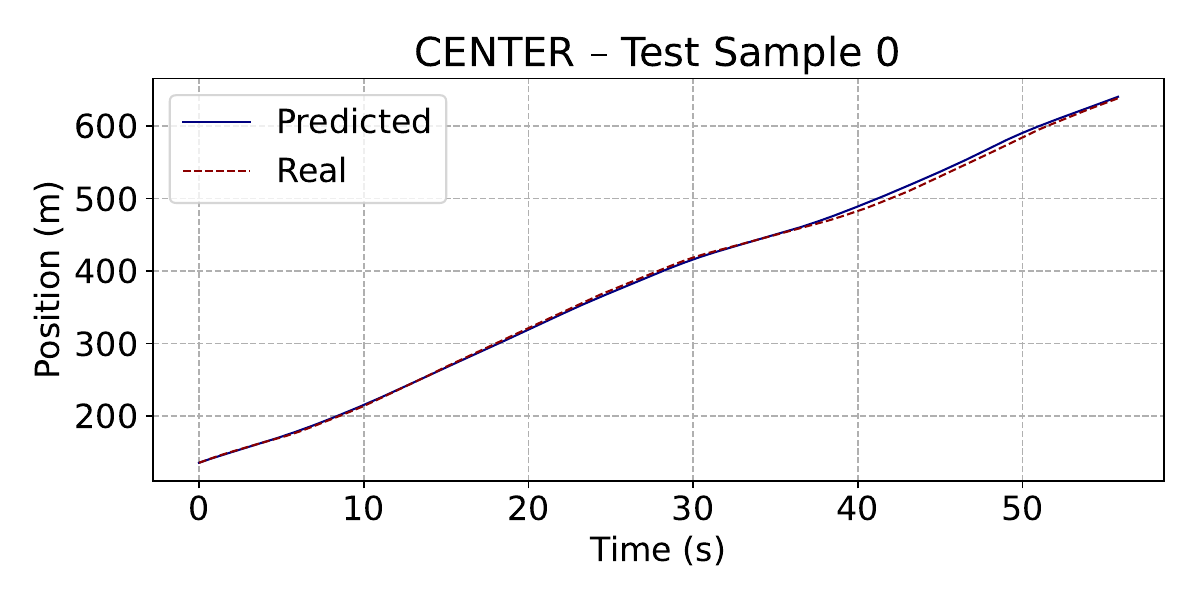}
  \end{subfigure}
  \begin{subfigure}[b]{0.48\textwidth}
    \centering
    \includegraphics[width=\textwidth]{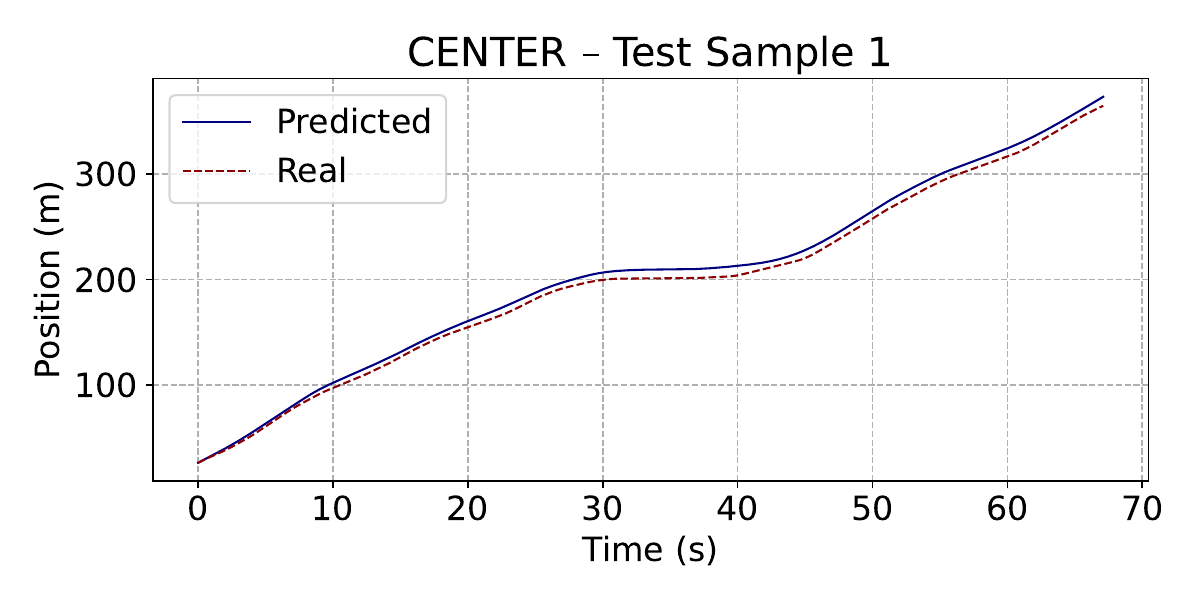}
  \end{subfigure}
  
  \begin{subfigure}[b]{0.48\textwidth}
    \centering
    \includegraphics[width=\textwidth]{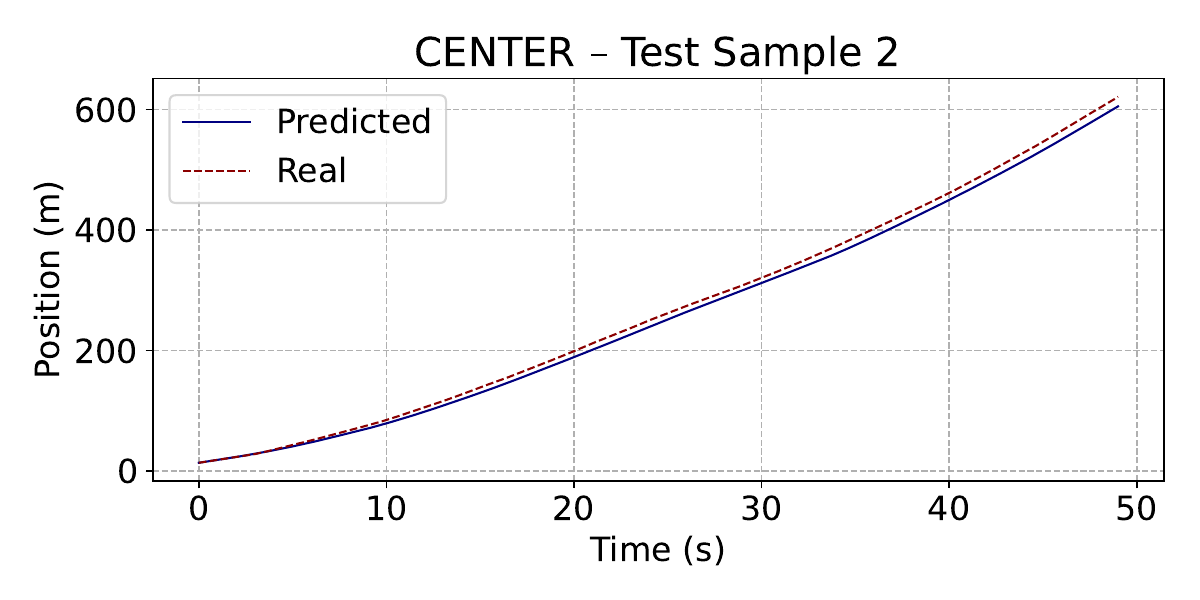}
  \end{subfigure}
  \begin{subfigure}[b]{0.48\textwidth}
    \centering
    \includegraphics[width=\textwidth]{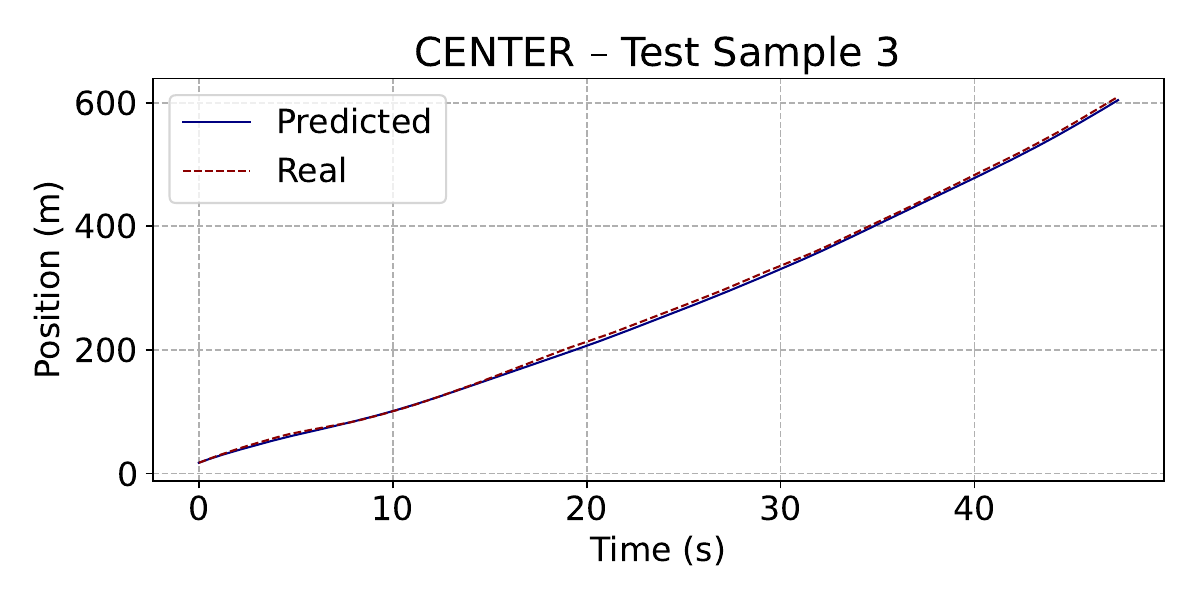}
  \end{subfigure}
  
  \begin{subfigure}[b]{0.48\textwidth}
    \centering
    \includegraphics[width=\textwidth]{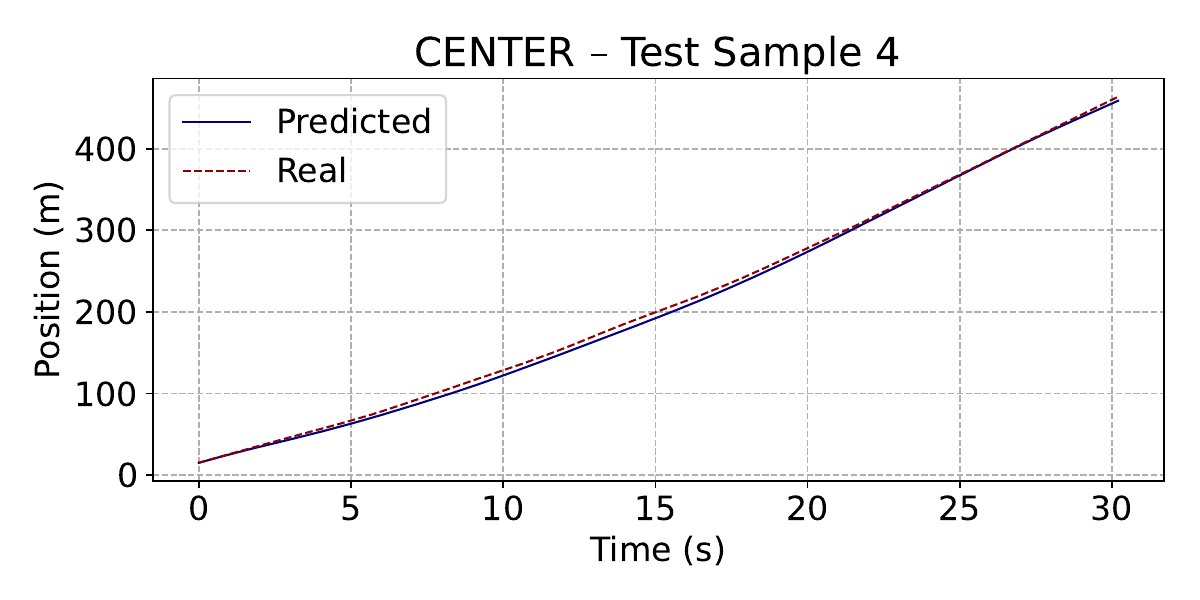}
  \end{subfigure}
  \begin{subfigure}[b]{0.48\textwidth}
    \centering
    \includegraphics[width=\textwidth]{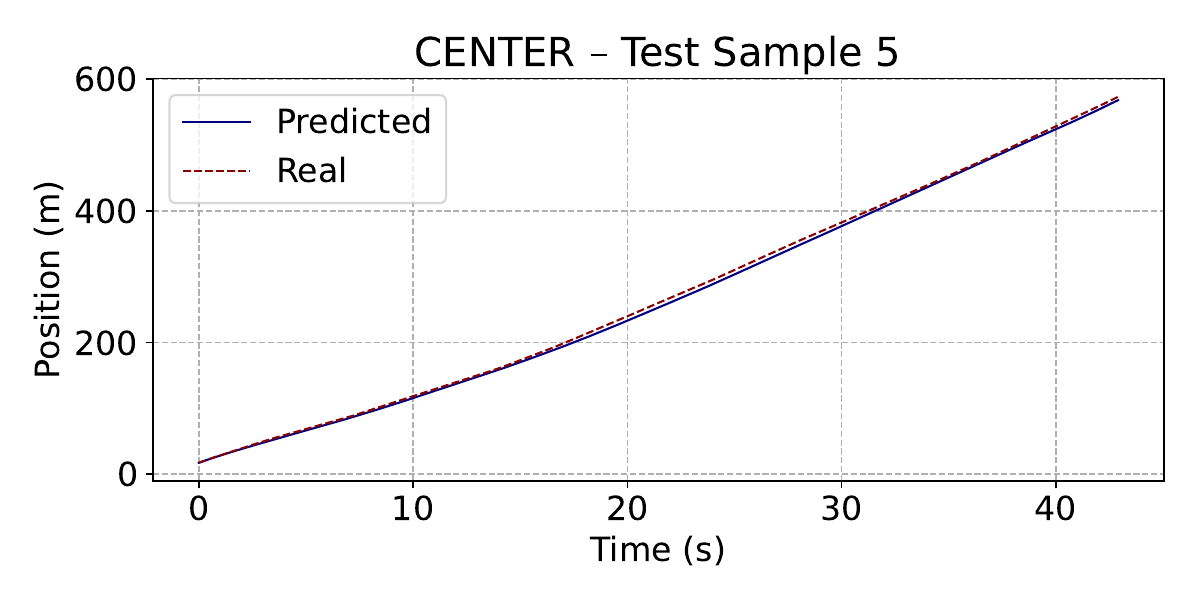}
  \end{subfigure}

  \begin{subfigure}[b]{0.48\textwidth}
    \centering
    \includegraphics[width=\textwidth]{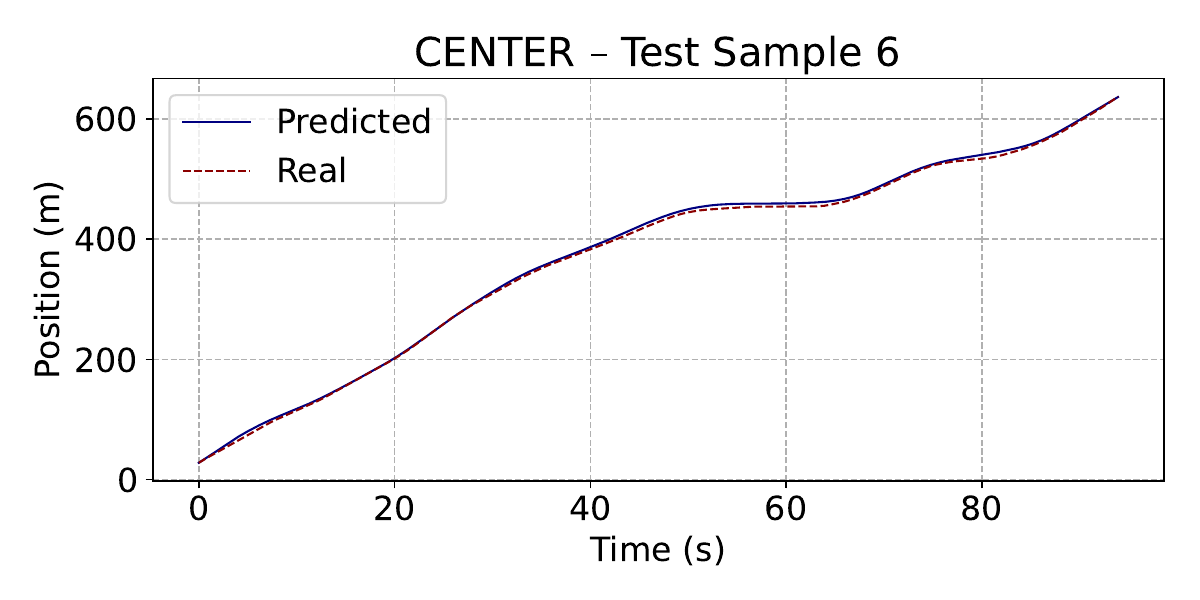}
  \end{subfigure}
  \begin{subfigure}[b]{0.48\textwidth}
    \centering
    \includegraphics[width=\textwidth]{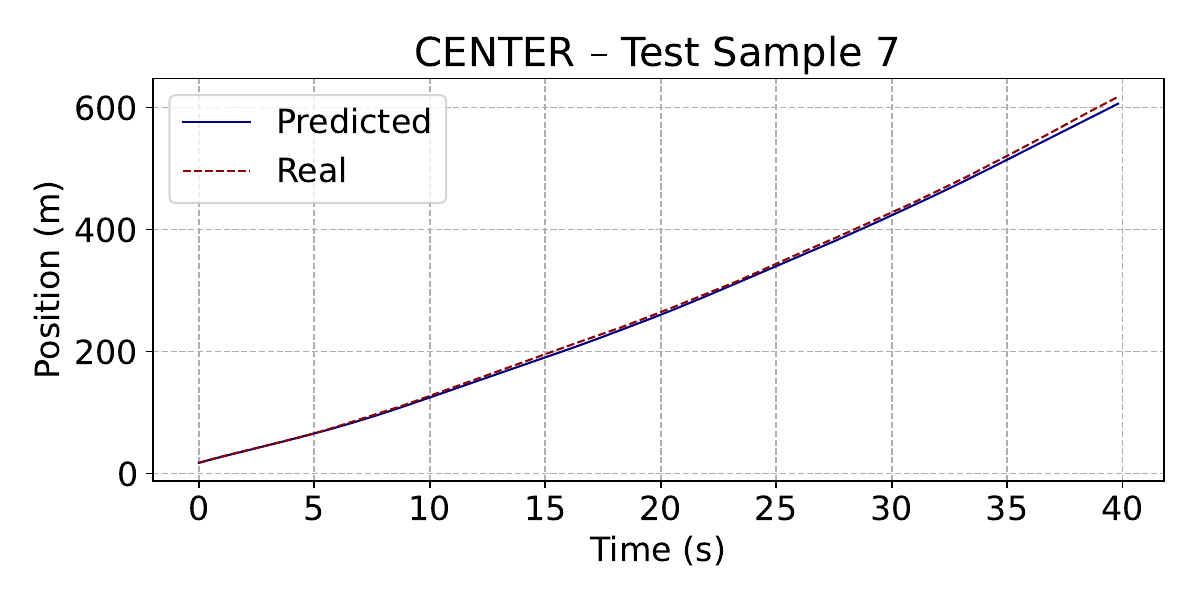}
  \end{subfigure}

  \begin{subfigure}[b]{0.48\textwidth}
    \centering
    \includegraphics[width=\textwidth]{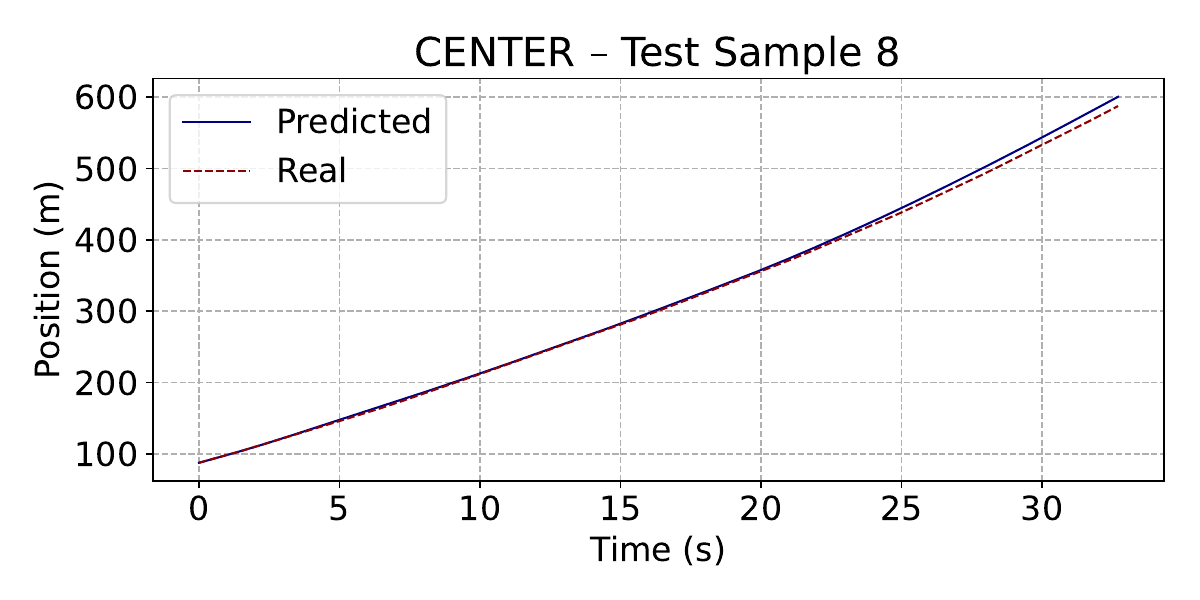}
  \end{subfigure}
  \begin{subfigure}[b]{0.48\textwidth}
    \centering
    \includegraphics[width=\textwidth]{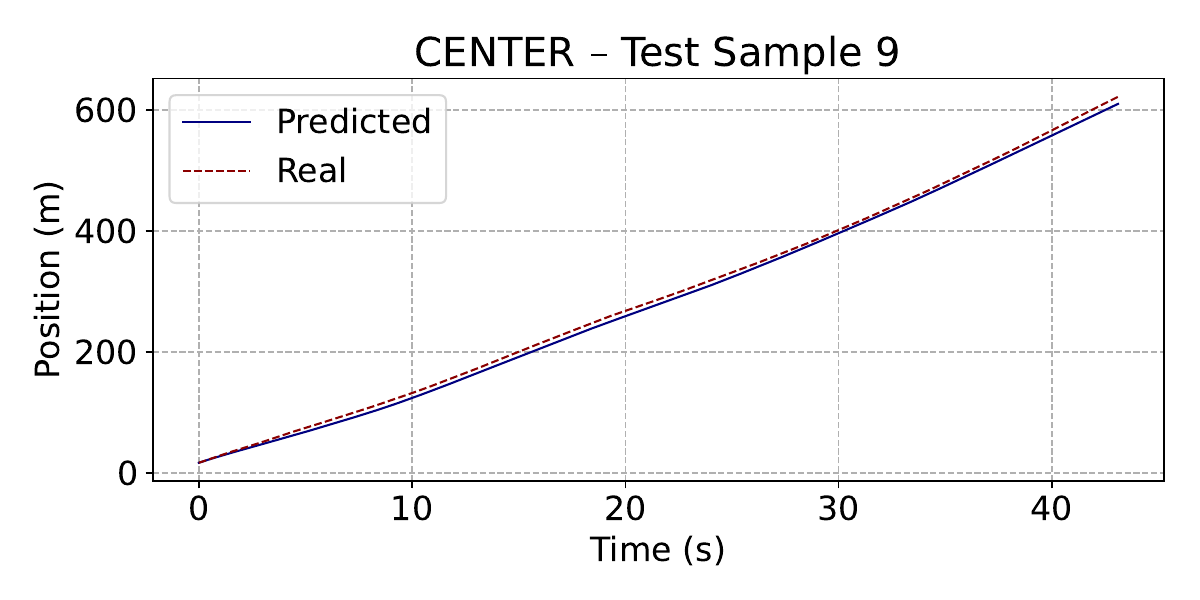}
  \end{subfigure}
  \caption{Comparison of predicted and real positions for ten Sample trajectories based on DCGD-CENTER-PICF}
  \label{fig:11}
\end{figure}
\begin{figure}[htbp]
  \centering

  \begin{subfigure}[b]{0.48\textwidth}
    \centering
    \includegraphics[width=\textwidth]{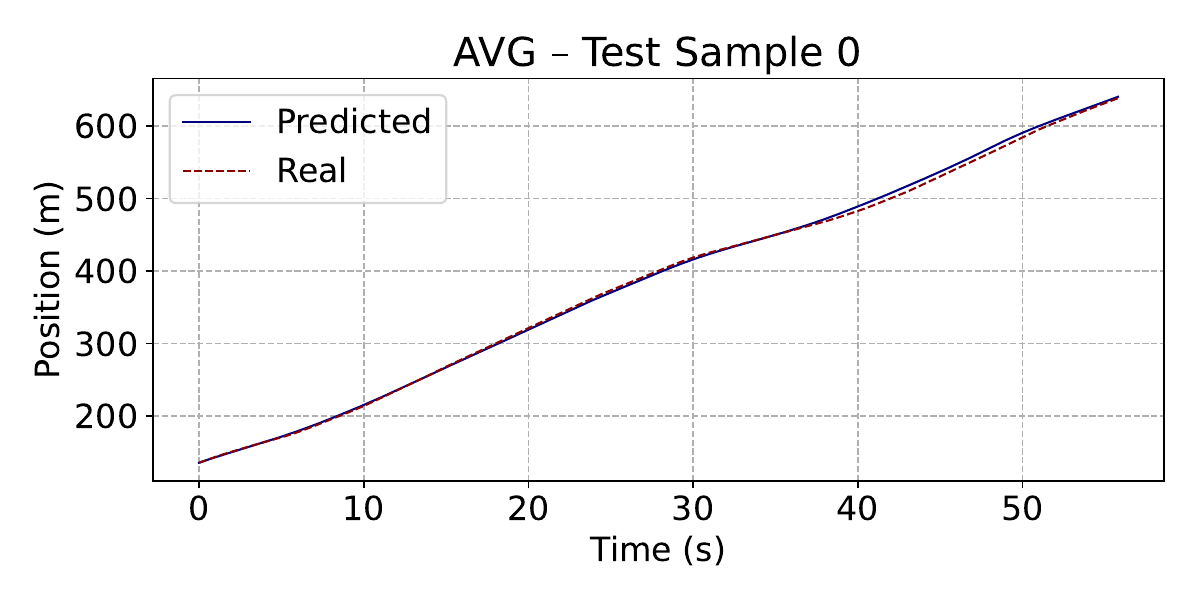}
  \end{subfigure}
  \begin{subfigure}[b]{0.48\textwidth}
    \centering
    \includegraphics[width=\textwidth]{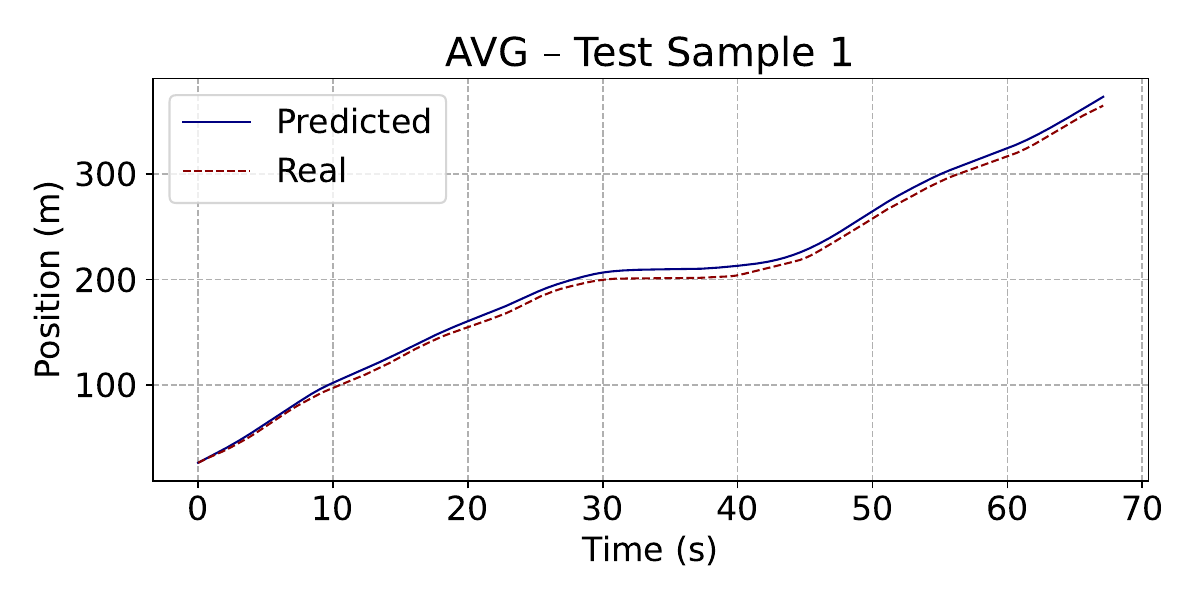}
  \end{subfigure}
  
  \begin{subfigure}[b]{0.48\textwidth}
    \centering
    \includegraphics[width=\textwidth]{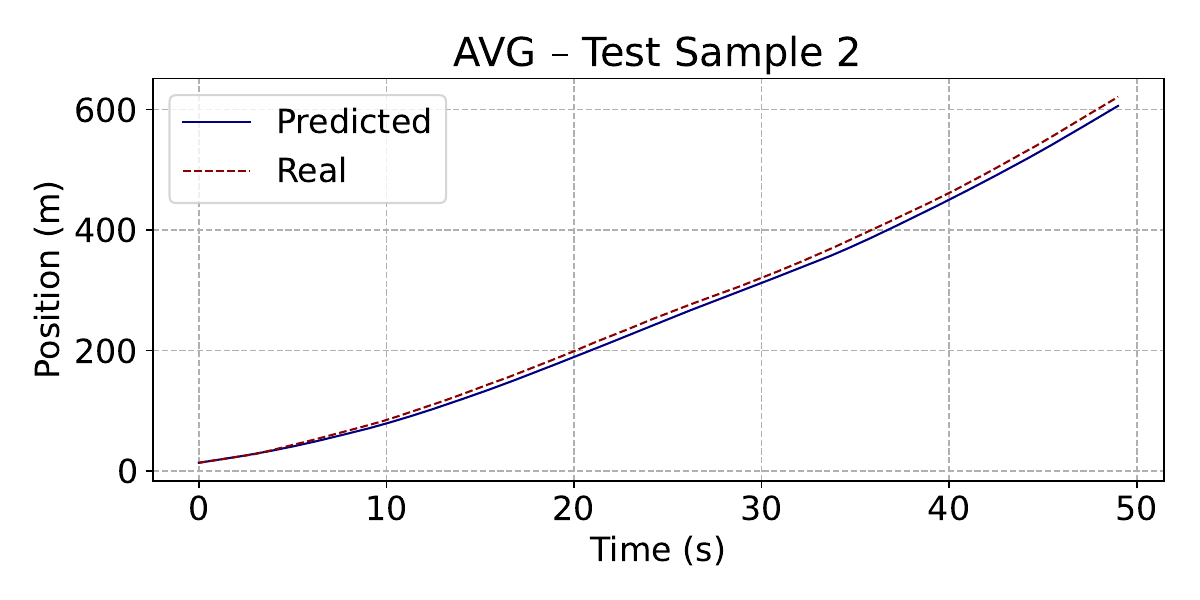}
  \end{subfigure}
  \begin{subfigure}[b]{0.48\textwidth}
    \centering
    \includegraphics[width=\textwidth]{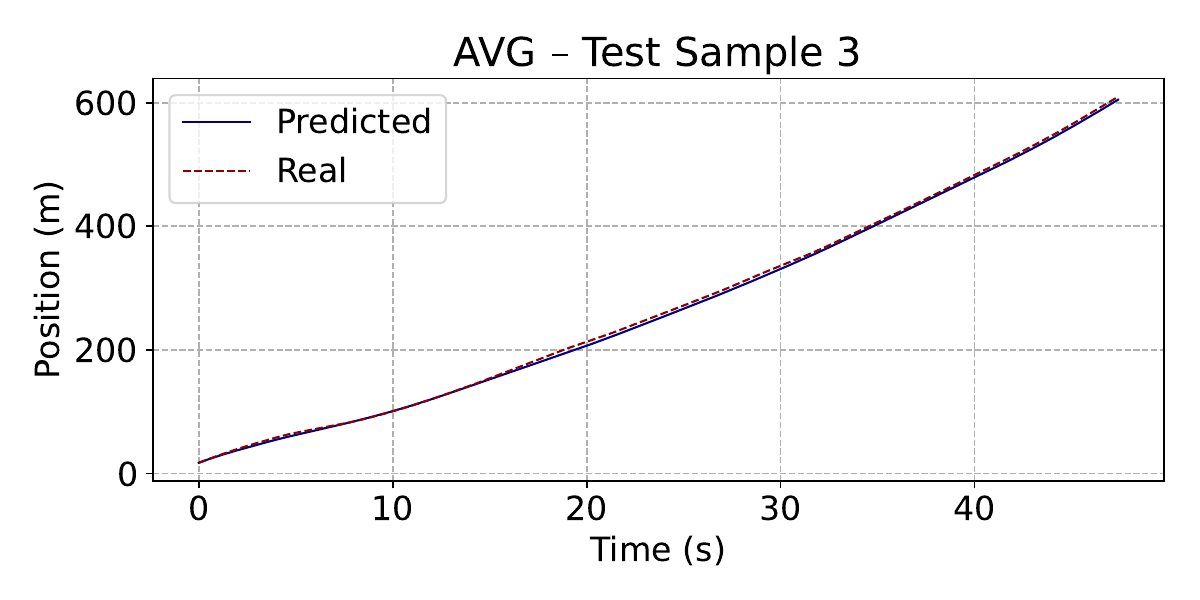}
  \end{subfigure}
  
  \begin{subfigure}[b]{0.48\textwidth}
    \centering
    \includegraphics[width=\textwidth]{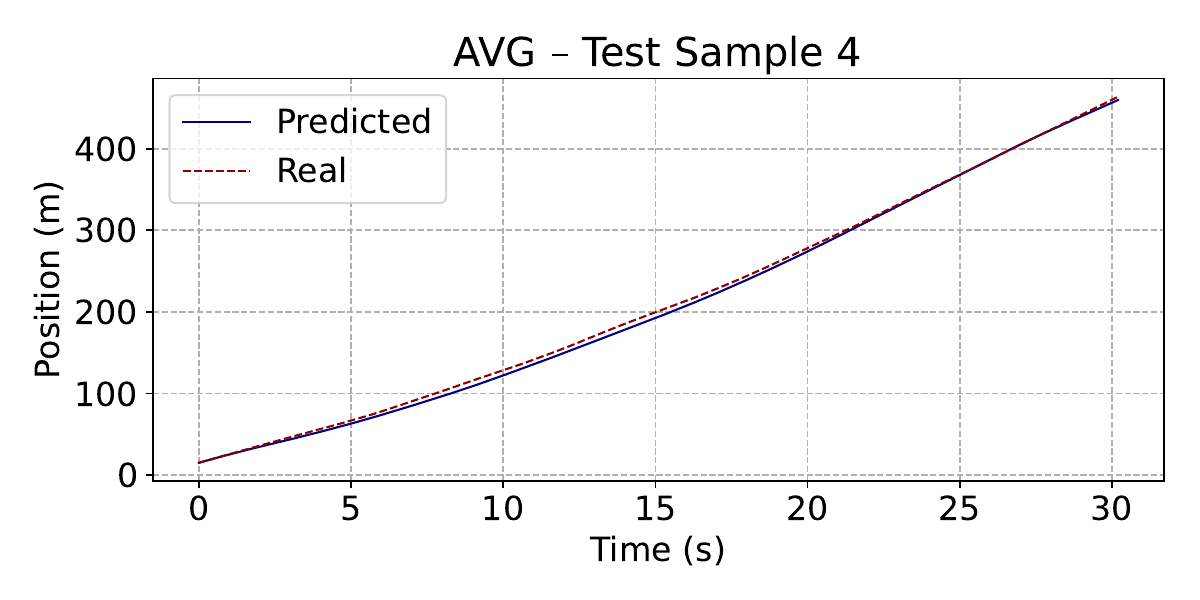}
  \end{subfigure}
  \begin{subfigure}[b]{0.48\textwidth}
    \centering
    \includegraphics[width=\textwidth]{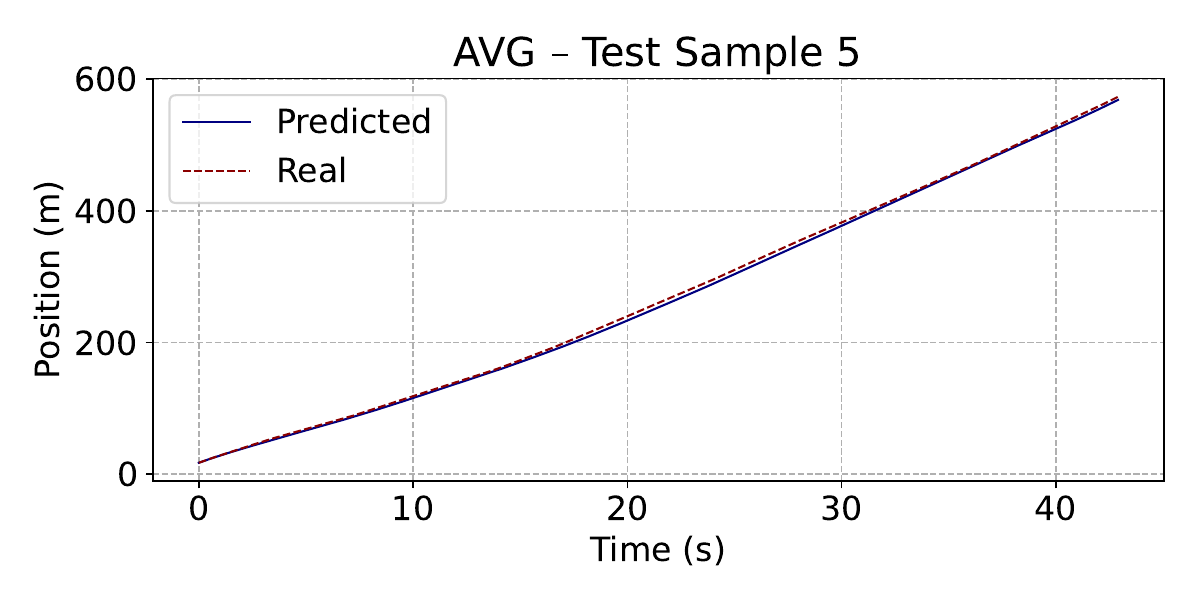}
  \end{subfigure}

  \begin{subfigure}[b]{0.48\textwidth}
    \centering
    \includegraphics[width=\textwidth]{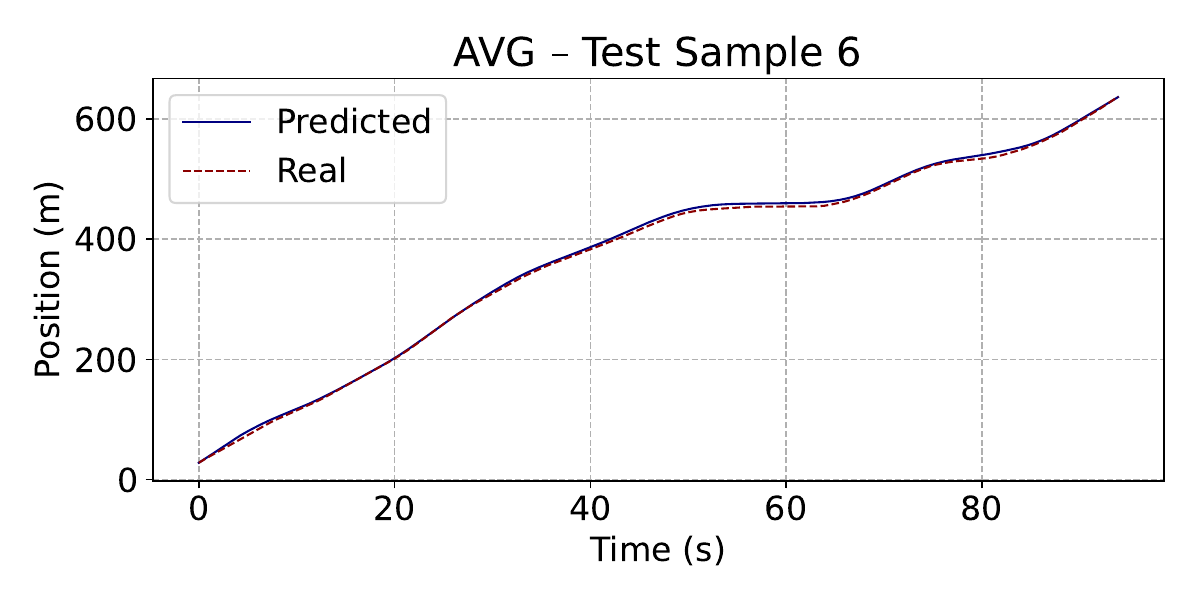}
  \end{subfigure}
  \begin{subfigure}[b]{0.48\textwidth}
    \centering
    \includegraphics[width=\textwidth]{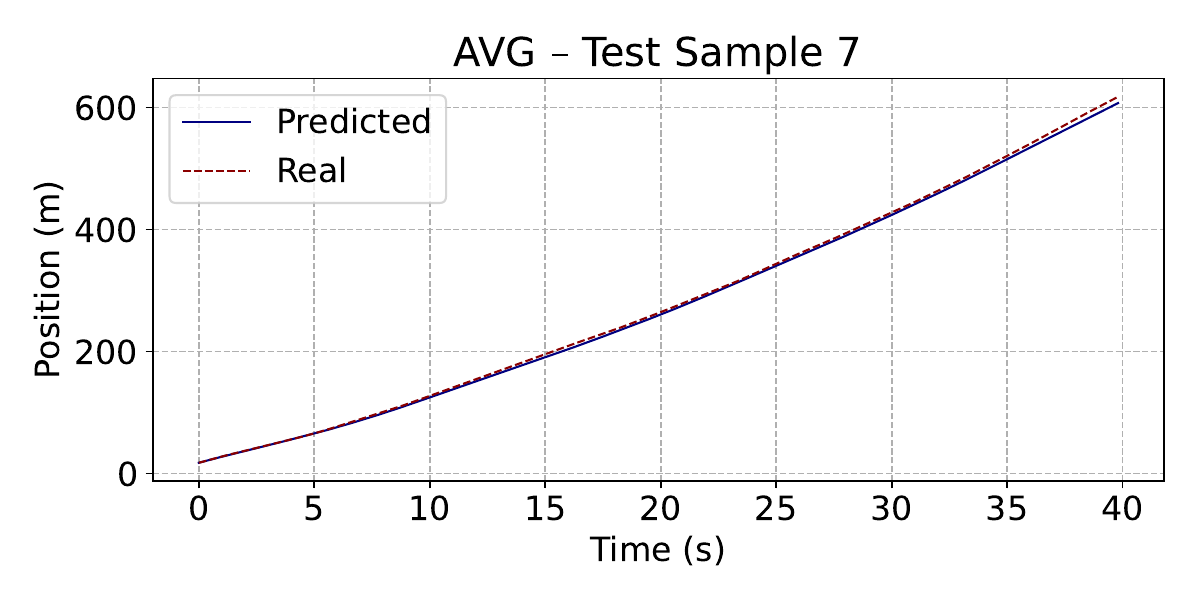}
  \end{subfigure}

  \begin{subfigure}[b]{0.48\textwidth}
    \centering
    \includegraphics[width=\textwidth]{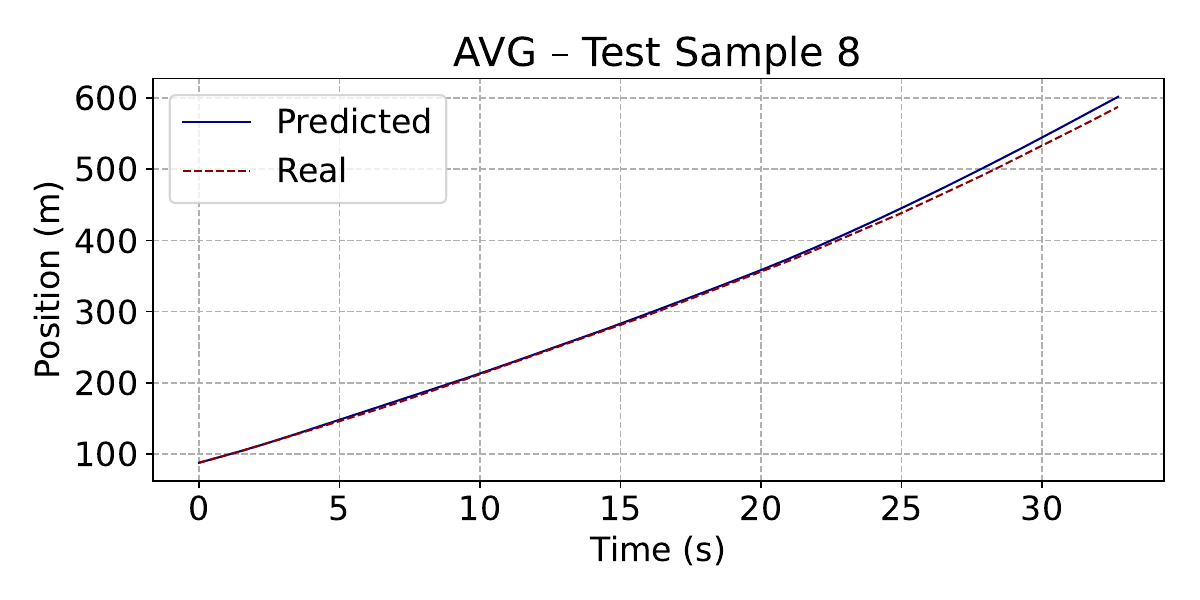}
  \end{subfigure}
  \begin{subfigure}[b]{0.48\textwidth}
    \centering
    \includegraphics[width=\textwidth]{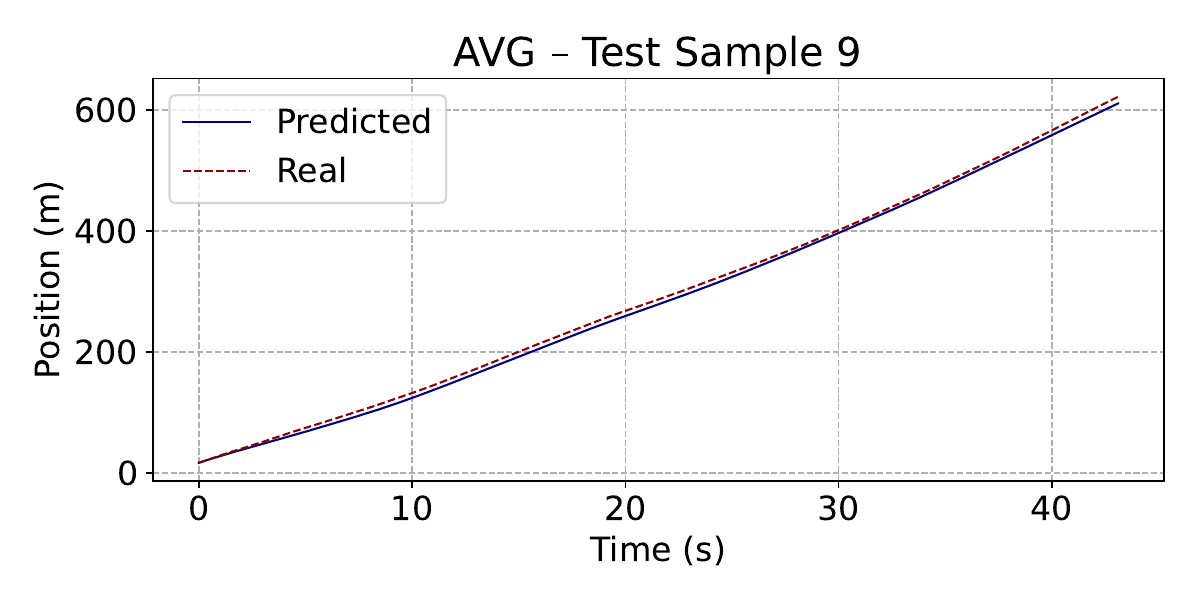}
  \end{subfigure}
  \caption{Comparison of predicted and real positions for ten Sample trajectories based on DCGD-AVG-PICF}
  \label{fig:12}
\end{figure}
\begin{figure}[htbp]
  \centering

  \begin{subfigure}[b]{0.48\textwidth}
    \centering
    \includegraphics[width=\textwidth]{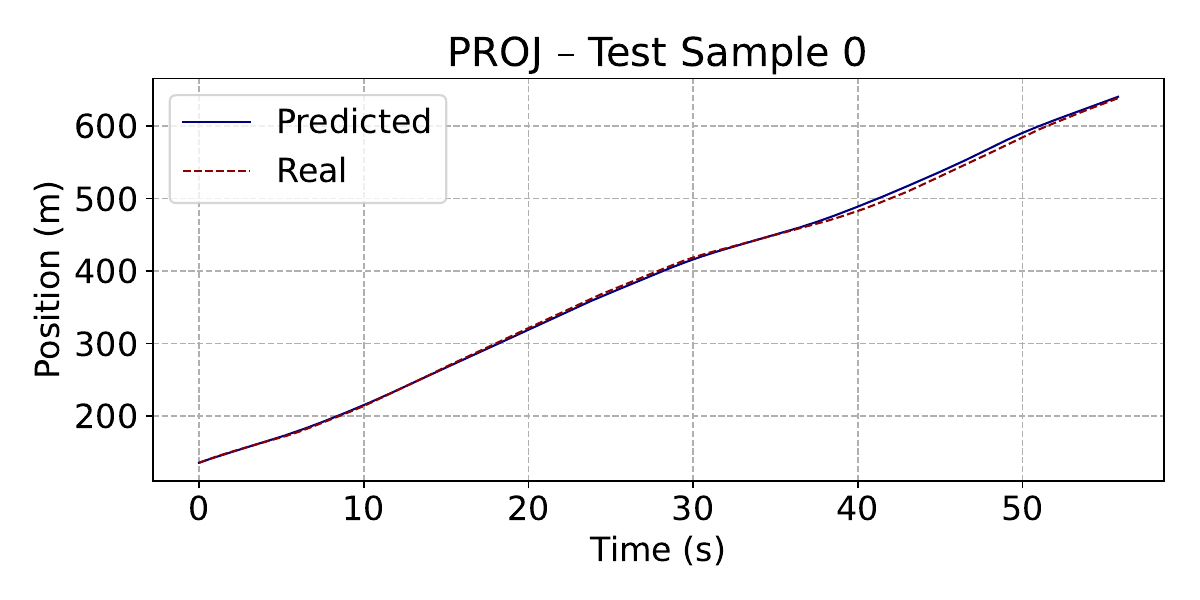}
  \end{subfigure}
  \begin{subfigure}[b]{0.48\textwidth}
    \centering
    \includegraphics[width=\textwidth]{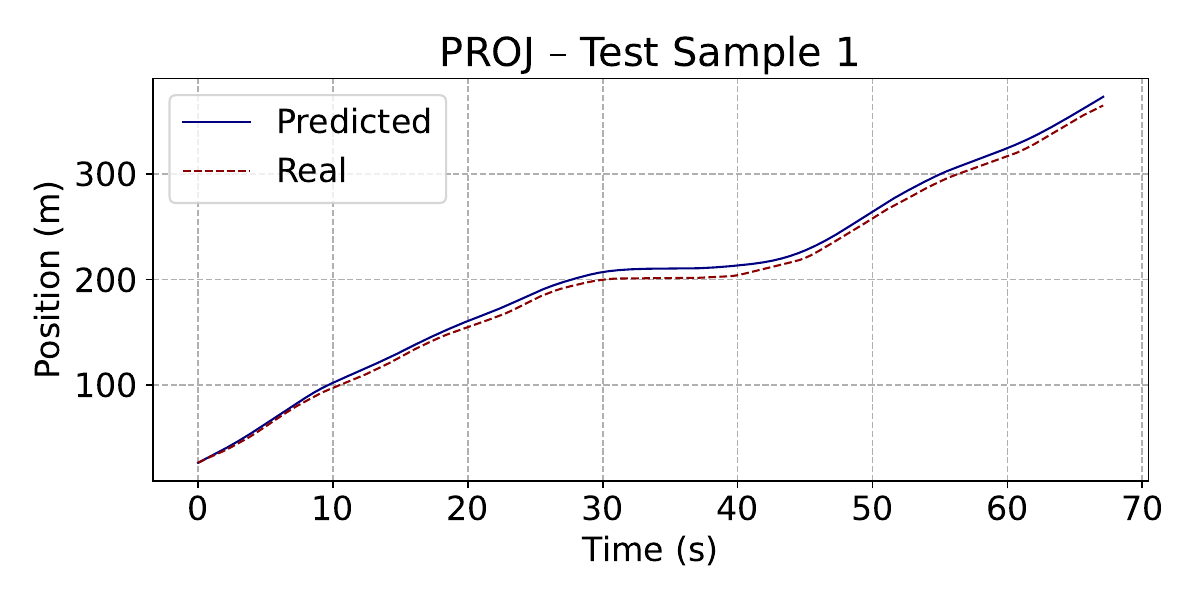}
  \end{subfigure}
  
  \begin{subfigure}[b]{0.48\textwidth}
    \centering
    \includegraphics[width=\textwidth]{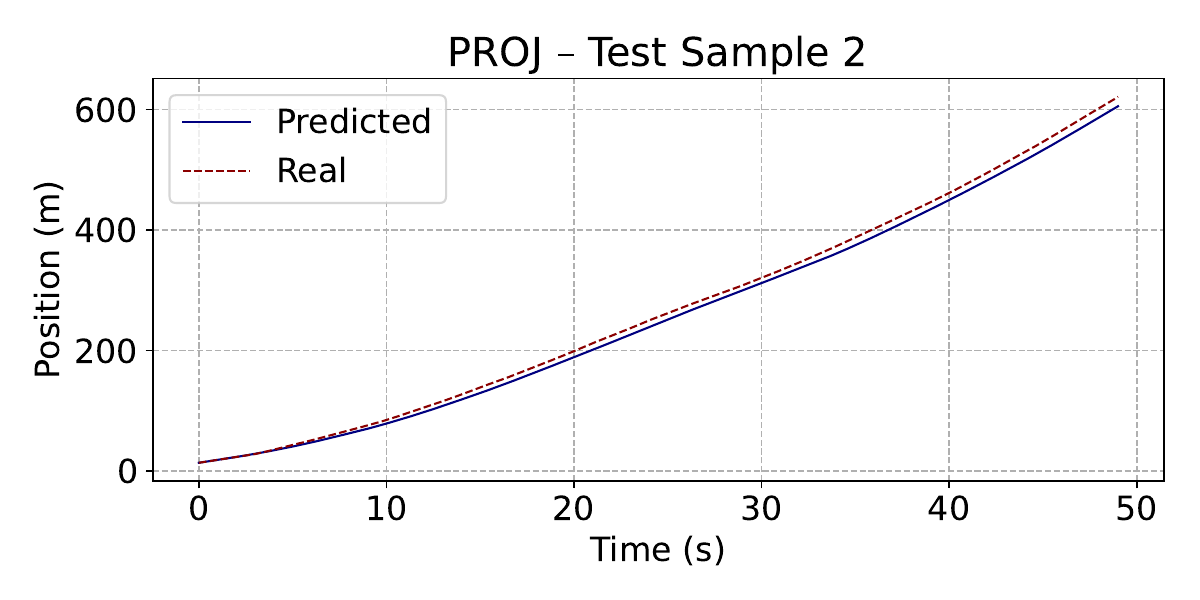}
  \end{subfigure}
  \begin{subfigure}[b]{0.48\textwidth}
    \centering
    \includegraphics[width=\textwidth]{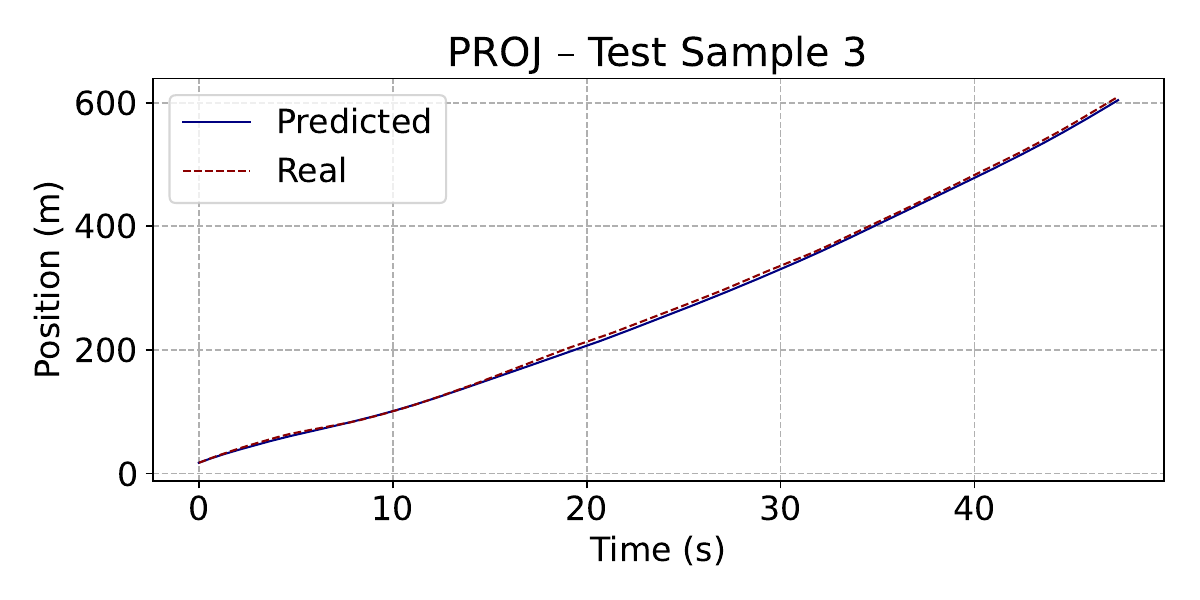}
  \end{subfigure}
  
  \begin{subfigure}[b]{0.48\textwidth}
    \centering
    \includegraphics[width=\textwidth]{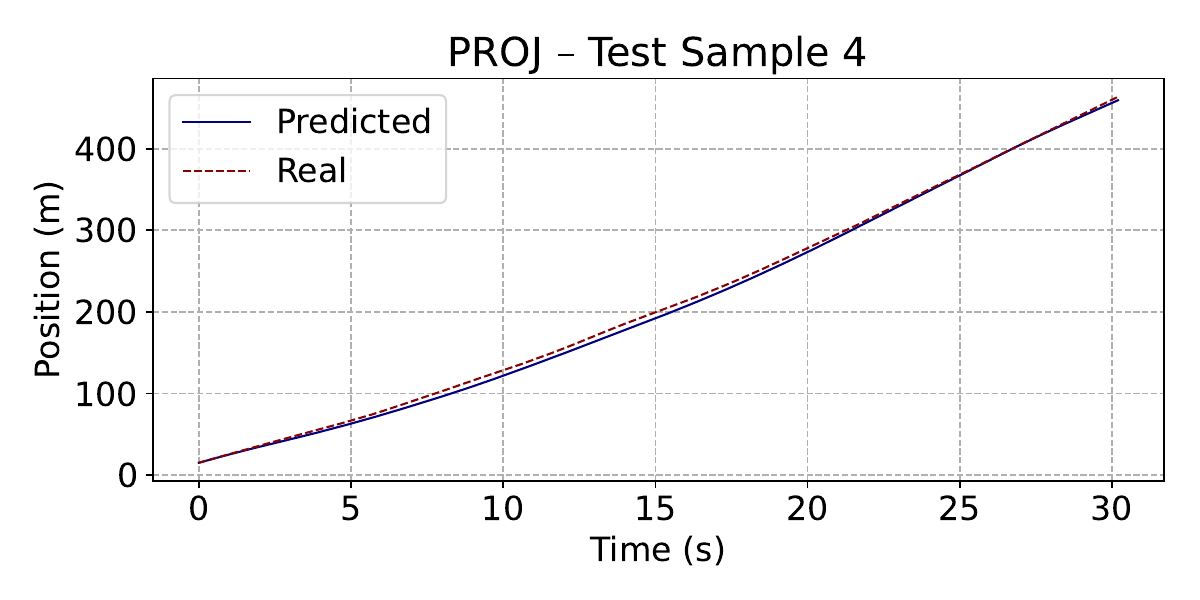}
  \end{subfigure}
  \begin{subfigure}[b]{0.48\textwidth}
    \centering
    \includegraphics[width=\textwidth]{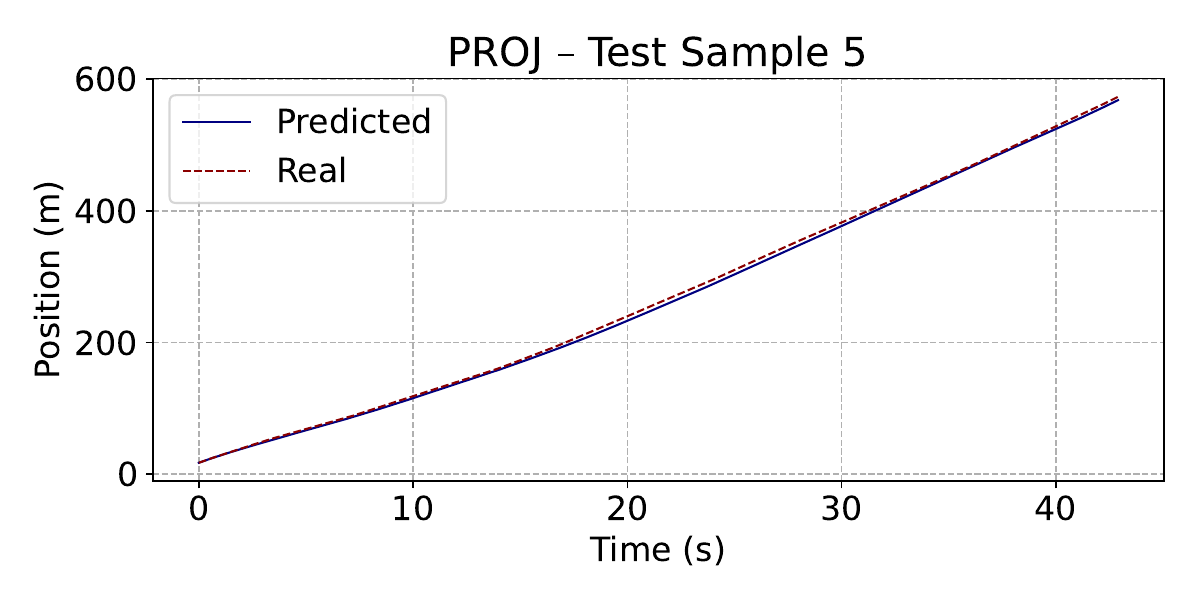}
  \end{subfigure}

  \begin{subfigure}[b]{0.48\textwidth}
    \centering
    \includegraphics[width=\textwidth]{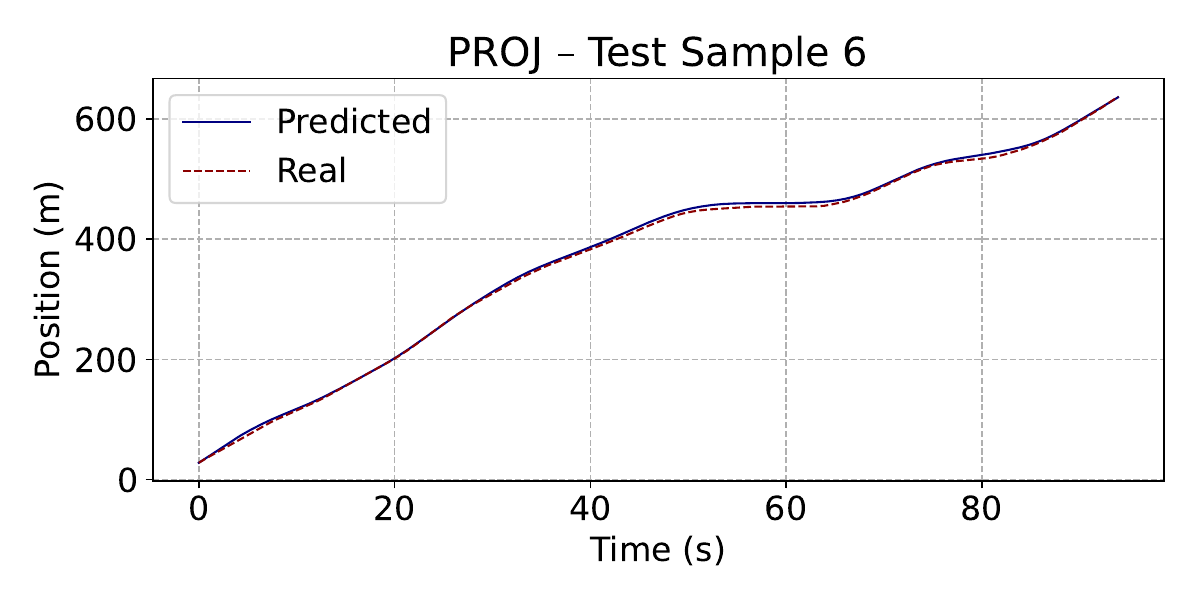}
  \end{subfigure}
  \begin{subfigure}[b]{0.48\textwidth}
    \centering
    \includegraphics[width=\textwidth]{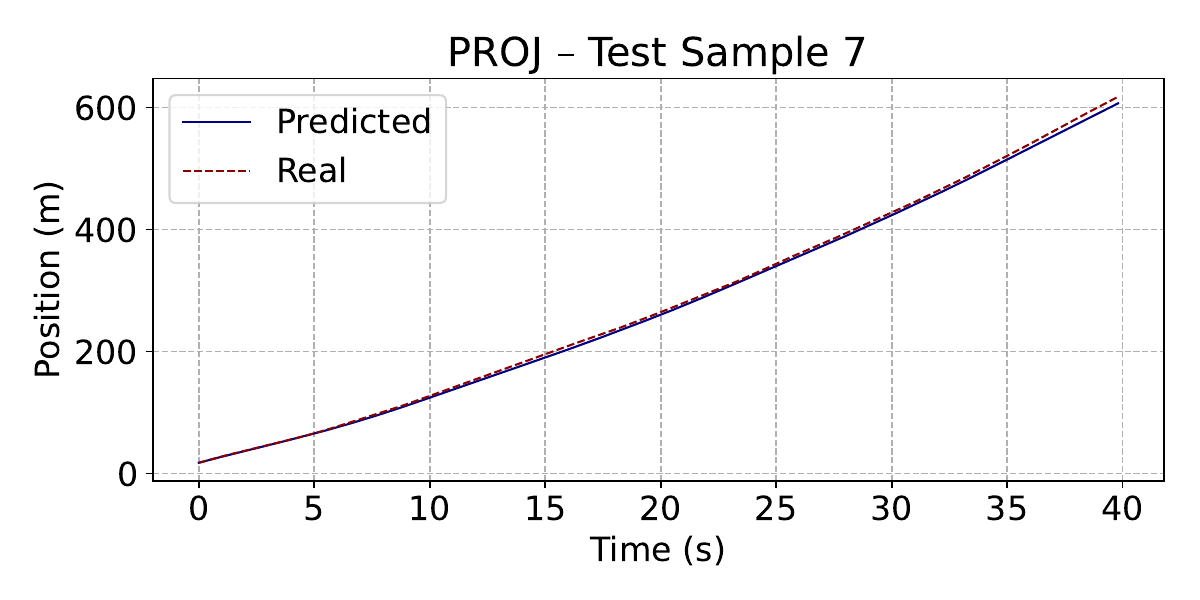}
  \end{subfigure}

  \begin{subfigure}[b]{0.48\textwidth}
    \centering
    \includegraphics[width=\textwidth]{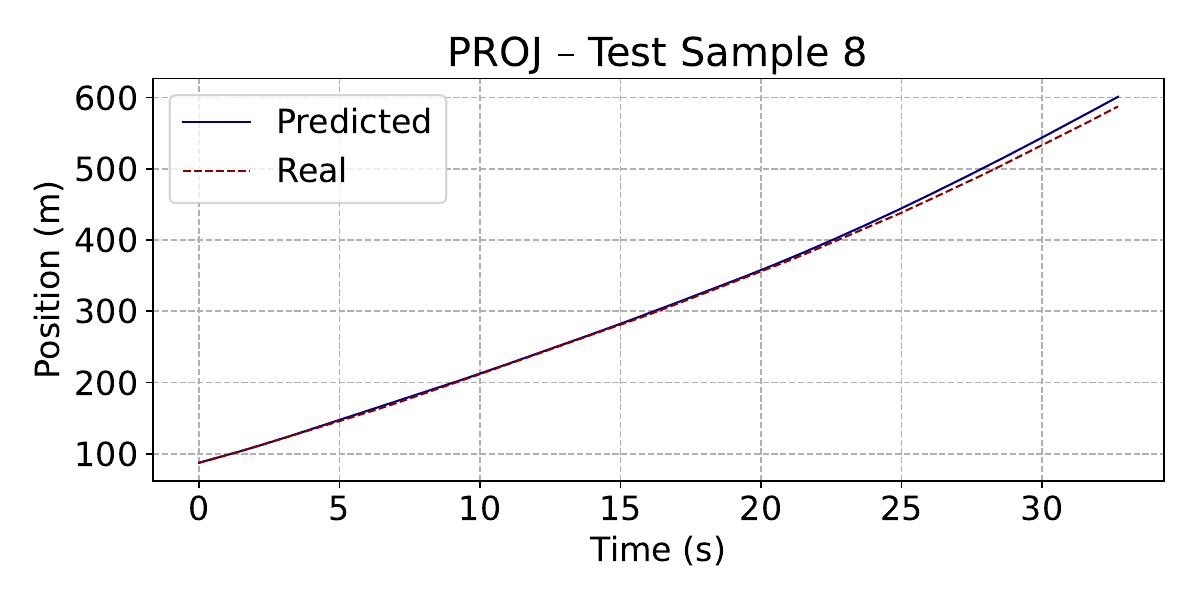}
  \end{subfigure}
  \begin{subfigure}[b]{0.48\textwidth}
    \centering
    \includegraphics[width=\textwidth]{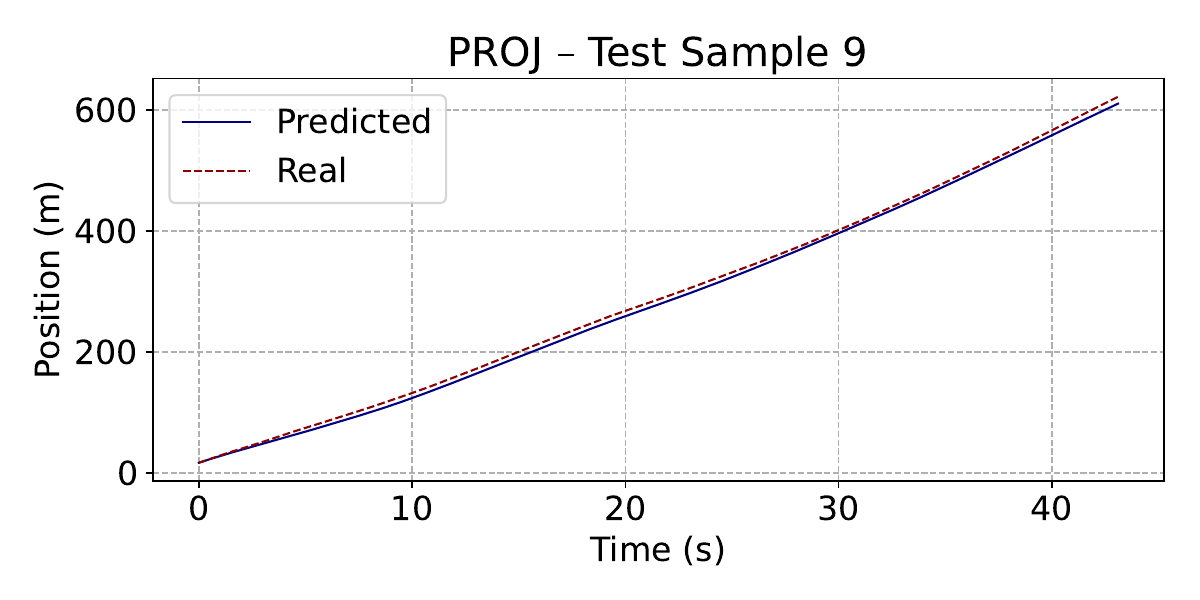}
  \end{subfigure}
  \caption{Comparison of predicted and real positions for ten Sample trajectories based on DCGD-PROJ-PICF}
  \label{fig:13}
\end{figure}
\section{Conclusion and discussion} \label{section:5}
\par In this paper, we analyze the potential limitations of the current applications of Predictive Interactive Machine Learning (PIML) in transportation, specifically regarding the total loss function. The reliance on linear scalarization introduces unrealistic assumptions about the trade-offs between different objectives, and the hyperparameter optimization process can be time-consuming. To address these issues, we reformulate the training process of PIML as a multi-objective optimization problem. We introduce several multi-gradient descent algorithms (MGDAs) to solve this problem effectively.
\par  After investigating several PIML models using real-world data, we discovered that almost all training methods based on Multi-Gradient Descent Algorithms (MGDAs) can achieve results comparable to or better than those obtained through training with a loss constructed using linear scalarization, and this is also theoretically guaranteed. 
\par In the first example, the MGDAs-based training approach can achieve a comparable performance compared to the baseline model in most cases. In the second example, all MGDAs-based training approaches bring a considerable performance improvement.  \label{sec:1} \label{line:1} From these case studies, we can conclude that using MGDAs to train the physics-informed machine learning model is preferable and beneficial over the traditional training method because it eliminates the need for hyperparameter searching that can be extremely tricky and time-consuming. Both traditional multi-gradient descent and dual cone gradient descent methods can theoretically ensure that the trained parameters reach a Pareto stationary point. This suggests that neither method has a theoretical advantage over the other, implying that each method may outperform the other in different situations. We suggest keeping both training methods based on multi-gradient descent to replace the traditional training logic.
\par Possible further directions could be to focus on three areas: (1) How to accelerate multi-gradient descent algorithms. Compared to directly using a stochastic gradient descent algorithm like Adam (\cite{kingma2014adam}) to minimize the linear scalarization loss. The training process based on the reformulated multi-objective optimization will lead to a longer training time per iteration. Especially for TMGD algorithms, in each iteration, we need to find the minimum norm point by solving a convex quadratic programming problem. (2) How to handle hard constraints in the training process. All MGDAs used in this paper are only compatible with unconstrained multi-objective optimization. What if we have to consider other constraints in PIML? Is there another way rather than modeling constraints in the loss function directly as a soft constraint term? A possible way is to extend the framework proposed in \cite{lei2025modified}, which presents a framework based the multi-gradient descent that can handle multi-objective optimization problems with convex constraints. (3) Research on multi-task learning suggests that TGMD (\cite{lin2019pareto}) may not consistently produce a well-balanced Pareto front. Instead, TGMD tends to generate a discrete and sparse representation of the Pareto front. Therefore, further studies could explore methods to create a more complete or continuous Pareto front by following the approaches outlined by \cite{lin2019pareto} and \cite{ma2020efficient}.
\section{Appendix} \label{section:6}
\subsection{Proof theorem \ref{theorem:5}} \label{ap:1}
\begin{proof}
\textbf{Proof of Statement 1:}
\par Suppose $\theta^*$ is a (local) minimizer of the scalarized total loss for some $\beta \in (0,1)$. Then
\begin{equation}\label{eq:34}
\nabla \mathcal{L}_\beta(\theta^*) \;=\; \beta \nabla \mathcal{L}_1(\theta^*) + (1 - \beta) \nabla \mathcal{L}_2(\theta^*) \;=\; 0
\end{equation}
Equivalently, $0\in\mathrm{co}\{\nabla \mathcal{L}_1(\theta^*),\nabla \mathcal{L}_2(\theta^*)\}$, hence $\theta^*$ is Pareto stationary. By \textcolor{blue}{\textbf{Theorem}}~\ref{theorem:3} and \ref{theorem:4}, MGDA identifies such points as fixed points without common strict descent. Therefore, every interior scalarized minimizer can be reached (identified) by MGDA.

\vspace{0.5em}
\textbf{Proof of Statement 2:}
\par Consider a Pareto stationary point $\hat{\theta}$ (for example, one found by MGDA) with
\begin{equation}
\nabla\mathcal{L}_1(\hat{\theta}) = 0 \qquad \nabla\mathcal{L}_2(\hat{\theta}) \neq 0
\end{equation}
For any $\beta \in (0,1)$ we have
\begin{equation}\label{eq:35}
\nabla \mathcal{L}_\beta(\hat{\theta}) = \beta \nabla \mathcal{L}_1(\hat{\theta}) + (1-\beta)\nabla \mathcal{L}_2(\hat{\theta})
= (1-\beta)\nabla \mathcal{L}_2(\hat{\theta}) \neq 0
\end{equation}
Thus $\hat{\theta}$ is not a stationary point, and therefore it is not a minimizer of any interior linear scalarization. Consequently, certain Pareto stationary points found by MGDA cannot be obtained by optimizing any interior linear scalarization.
\end{proof}

\section{Acknowledgement} 
This research is supported by the award "CAREER: Physics Regularized Machine Learning Theory: Modeling Stochastic Traffic Flow Patterns for Smart Mobility Systems (\# 2234289)" which is funded by the National Science Foundation. We would like to thank Professor Dong-Young Lim from Ulsan National Institute of Science and Technology for generously sharing the original vector versions of the figures for reproduction.
\section{CRediT}
\textbf{Yuan-Zheng Lei}: Conceptualization, Methodology, Writing - original draft. \textbf{Yaobang Gong}: Conceptualization, Writing - review \& editing. \textbf{Dianwei Chen}: Experiment design. \textbf{Yao Cheng}: Writing - review \& editing. \textbf{Xianfeng Terry Yang}: Conceptualization, Methodology and Supervision, Writing - review \& editing.
\section{Data and Code Availability} 
The source code and data are available for readers to download via the following link: \url{https://github.com/EdisonYLei/Reconstructing-PIML-through-MGDA}

\bibliography{ref}

\begin{thebibliography}{47}
\expandafter\ifx\csname natexlab\endcsname\relax\def\natexlab#1{#1}\fi
\providecommand{\url}[1]{\texttt{#1}}
\providecommand{\href}[2]{#2}
\providecommand{\path}[1]{#1}
\providecommand{\DOIprefix}{doi:}
\providecommand{\ArXivprefix}{arXiv:}
\providecommand{\URLprefix}{URL: }
\providecommand{\Pubmedprefix}{pmid:}
\providecommand{\doi}[1]{\href{http://dx.doi.org/#1}{\path{#1}}}
\providecommand{\Pubmed}[1]{\href{pmid:#1}{\path{#1}}}
\providecommand{\bibinfo}[2]{#2}
\ifx\xfnm\relax \def\xfnm[#1]{\unskip,\space#1}\fi
\bibitem[{Cui et~al.(2019)Cui, Henrickson, Ke and Wang}]{cui2019traffic}
\bibinfo{author}{Cui, Z.}, \bibinfo{author}{Henrickson, K.}, \bibinfo{author}{Ke, R.}, \bibinfo{author}{Wang, Y.}, \bibinfo{year}{2019}.
\newblock \bibinfo{title}{Traffic graph convolutional recurrent neural network: A deep learning framework for network-scale traffic learning and forecasting}.
\newblock \bibinfo{journal}{IEEE Transactions on Intelligent Transportation Systems} \bibinfo{volume}{21}, \bibinfo{pages}{4883--4894}.
\bibitem[{Cui et~al.(2018)Cui, Ke, Pu and Wang}]{cui2018deep}
\bibinfo{author}{Cui, Z.}, \bibinfo{author}{Ke, R.}, \bibinfo{author}{Pu, Z.}, \bibinfo{author}{Wang, Y.}, \bibinfo{year}{2018}.
\newblock \bibinfo{title}{Deep bidirectional and unidirectional lstm recurrent neural network for network-wide traffic speed prediction}.
\newblock \bibinfo{journal}{arXiv preprint arXiv:1801.02143} .
\bibitem[{Deb and Jain(2013)}]{deb2013evolutionary}
\bibinfo{author}{Deb, K.}, \bibinfo{author}{Jain, H.}, \bibinfo{year}{2013}.
\newblock \bibinfo{title}{An evolutionary many-objective optimization algorithm using reference-point-based nondominated sorting approach, part i: solving problems with box constraints}.
\newblock \bibinfo{journal}{IEEE transactions on evolutionary computation} \bibinfo{volume}{18}, \bibinfo{pages}{577--601}.
\bibitem[{Deb et~al.(2002)Deb, Pratap, Agarwal and Meyarivan}]{deb2002fast}
\bibinfo{author}{Deb, K.}, \bibinfo{author}{Pratap, A.}, \bibinfo{author}{Agarwal, S.}, \bibinfo{author}{Meyarivan, T.}, \bibinfo{year}{2002}.
\newblock \bibinfo{title}{A fast and elitist multiobjective genetic algorithm: Nsga-ii}.
\newblock \bibinfo{journal}{IEEE transactions on evolutionary computation} \bibinfo{volume}{6}, \bibinfo{pages}{182--197}.
\bibitem[{D{\'e}sid{\'e}ri(2012)}]{desideri2012multiple}
\bibinfo{author}{D{\'e}sid{\'e}ri, J.A.}, \bibinfo{year}{2012}.
\newblock \bibinfo{title}{Multiple-gradient descent algorithm (mgda) for multiobjective optimization}.
\newblock \bibinfo{journal}{Comptes Rendus Mathematique} \bibinfo{volume}{350}, \bibinfo{pages}{313--318}.
\bibitem[{Duan et~al.(2016)Duan, Lv, Liu and Wang}]{duan2016efficient}
\bibinfo{author}{Duan, Y.}, \bibinfo{author}{Lv, Y.}, \bibinfo{author}{Liu, Y.L.}, \bibinfo{author}{Wang, F.Y.}, \bibinfo{year}{2016}.
\newblock \bibinfo{title}{An efficient realization of deep learning for traffic data imputation}.
\newblock \bibinfo{journal}{Transportation research part C: emerging technologies} \bibinfo{volume}{72}, \bibinfo{pages}{168--181}.
\bibitem[{Fliege and Svaiter(2000)}]{fliege2000steepest}
\bibinfo{author}{Fliege, J.}, \bibinfo{author}{Svaiter, B.F.}, \bibinfo{year}{2000}.
\newblock \bibinfo{title}{Steepest descent methods for multicriteria optimization}.
\newblock \bibinfo{journal}{Mathematical methods of operations research} \bibinfo{volume}{51}, \bibinfo{pages}{479--494}.
\bibitem[{Hwang and Lim(2024)}]{hwang2024dual}
\bibinfo{author}{Hwang, Y.}, \bibinfo{author}{Lim, D.}, \bibinfo{year}{2024}.
\newblock \bibinfo{title}{Dual cone gradient descent for training physics-informed neural networks}.
\newblock \bibinfo{journal}{Advances in Neural Information Processing Systems} \bibinfo{volume}{37}, \bibinfo{pages}{98563--98595}.
\bibitem[{Jain and Deb(2013)}]{jain2013evolutionary}
\bibinfo{author}{Jain, H.}, \bibinfo{author}{Deb, K.}, \bibinfo{year}{2013}.
\newblock \bibinfo{title}{An evolutionary many-objective optimization algorithm using reference-point based nondominated sorting approach, part ii: Handling constraints and extending to an adaptive approach}.
\newblock \bibinfo{journal}{IEEE Transactions on evolutionary computation} \bibinfo{volume}{18}, \bibinfo{pages}{602--622}.
\bibitem[{Ji et~al.(2022)Ji, Wang, Jiang, Jiang and Zhang}]{ji2022stden}
\bibinfo{author}{Ji, J.}, \bibinfo{author}{Wang, J.}, \bibinfo{author}{Jiang, Z.}, \bibinfo{author}{Jiang, J.}, \bibinfo{author}{Zhang, H.}, \bibinfo{year}{2022}.
\newblock \bibinfo{title}{Stden: Towards physics-guided neural networks for traffic flow prediction}, in: \bibinfo{booktitle}{Proceedings of the AAAI conference on artificial intelligence}, pp. \bibinfo{pages}{4048--4056}.
\bibitem[{Kendall et~al.(2018)Kendall, Gal and Cipolla}]{kendall2018multi}
\bibinfo{author}{Kendall, A.}, \bibinfo{author}{Gal, Y.}, \bibinfo{author}{Cipolla, R.}, \bibinfo{year}{2018}.
\newblock \bibinfo{title}{Multi-task learning using uncertainty to weigh losses for scene geometry and semantics}, in: \bibinfo{booktitle}{Proceedings of the IEEE conference on computer vision and pattern recognition}, pp. \bibinfo{pages}{7482--7491}.
\bibitem[{Kingma and Ba(2014)}]{kingma2014adam}
\bibinfo{author}{Kingma, D.P.}, \bibinfo{author}{Ba, J.}, \bibinfo{year}{2014}.
\newblock \bibinfo{title}{Adam: A method for stochastic optimization}.
\newblock \bibinfo{journal}{arXiv preprint arXiv:1412.6980} .
\bibitem[{Krishnapriyan et~al.(2021)Krishnapriyan, Gholami, Zhe, Kirby and Mahoney}]{krishnapriyan2021characterizing}
\bibinfo{author}{Krishnapriyan, A.}, \bibinfo{author}{Gholami, A.}, \bibinfo{author}{Zhe, S.}, \bibinfo{author}{Kirby, R.}, \bibinfo{author}{Mahoney, M.W.}, \bibinfo{year}{2021}.
\newblock \bibinfo{title}{Characterizing possible failure modes in physics-informed neural networks}.
\newblock \bibinfo{journal}{Advances in neural information processing systems} \bibinfo{volume}{34}, \bibinfo{pages}{26548--26560}.
\bibitem[{Lei et~al.(2025a)Lei, Gong and Yang}]{lei2025modified}
\bibinfo{author}{Lei, E.Y.}, \bibinfo{author}{Gong, Y.}, \bibinfo{author}{Yang, X.T.}, \bibinfo{year}{2025}a.
\newblock \bibinfo{title}{A modified two-stage search framework for constrained multi-gradient descent}.
\newblock \bibinfo{journal}{arXiv preprint arXiv:2502.14104} .
\bibitem[{Lei et~al.(2025b)Lei, Gong, Chen, Cheng and Yang}]{lei2025potential}
\bibinfo{author}{Lei, Y.Z.}, \bibinfo{author}{Gong, Y.}, \bibinfo{author}{Chen, D.}, \bibinfo{author}{Cheng, Y.}, \bibinfo{author}{Yang, X.}, \bibinfo{year}{2025}b.
\newblock \bibinfo{title}{Potential failures of physics-informed machine learning in traffic flow modeling: Theoretical and experimental analysis}.
\newblock \bibinfo{journal}{preprint arXiv:2505.11491} .
\bibitem[{Lei et~al.(2024)Lei, Gong and Yang}]{lei2024unraveling}
\bibinfo{author}{Lei, Y.Z.}, \bibinfo{author}{Gong, Y.}, \bibinfo{author}{Yang, X.T.}, \bibinfo{year}{2024}.
\newblock \bibinfo{title}{Unraveling stochastic fundamental diagrams with empirical knowledge: Modeling, limitations, and future directions}.
\newblock \bibinfo{journal}{Transportation Research Part C: Emerging Technologies} \bibinfo{volume}{169}, \bibinfo{pages}{104851}.
\bibitem[{Lighthill and Whitham(1955)}]{lighthill1955kinematic}
\bibinfo{author}{Lighthill, M.J.}, \bibinfo{author}{Whitham, G.B.}, \bibinfo{year}{1955}.
\newblock \bibinfo{title}{On kinematic waves ii. a theory of traffic flow on long crowded roads}.
\newblock \bibinfo{journal}{Proceedings of the Royal Society of London. Series A. Mathematical and Physical Sciences} \bibinfo{volume}{229}, \bibinfo{pages}{317--345}.
\bibitem[{Lin et~al.(2022)Lin, Yang, Zhang and Zhang}]{lin2022pareto}
\bibinfo{author}{Lin, X.}, \bibinfo{author}{Yang, Z.}, \bibinfo{author}{Zhang, X.}, \bibinfo{author}{Zhang, Q.}, \bibinfo{year}{2022}.
\newblock \bibinfo{title}{Pareto set learning for expensive multi-objective optimization}.
\newblock \bibinfo{journal}{Advances in Neural Information Processing Systems} \bibinfo{volume}{35}, \bibinfo{pages}{19231--19247}.
\bibitem[{Lin et~al.(2019)Lin, Zhen, Li, Zhang and Kwong}]{lin2019pareto}
\bibinfo{author}{Lin, X.}, \bibinfo{author}{Zhen, H.L.}, \bibinfo{author}{Li, Z.}, \bibinfo{author}{Zhang, Q.F.}, \bibinfo{author}{Kwong, S.}, \bibinfo{year}{2019}.
\newblock \bibinfo{title}{Pareto multi-task learning}.
\newblock \bibinfo{journal}{Advances in neural information processing systems} \bibinfo{volume}{32}.
\bibitem[{Liu et~al.(2023)Liu, Jiang, Zhao and Shen}]{liu2023quantile}
\bibinfo{author}{Liu, J.}, \bibinfo{author}{Jiang, R.}, \bibinfo{author}{Zhao, J.}, \bibinfo{author}{Shen, W.}, \bibinfo{year}{2023}.
\newblock \bibinfo{title}{A quantile-regression physics-informed deep learning for car-following model}.
\newblock \bibinfo{journal}{Transportation research part C: emerging technologies} \bibinfo{volume}{154}, \bibinfo{pages}{104275}.
\bibitem[{Liu et~al.(2019)Liu, Johns and Davison}]{liu2019end}
\bibinfo{author}{Liu, S.}, \bibinfo{author}{Johns, E.}, \bibinfo{author}{Davison, A.J.}, \bibinfo{year}{2019}.
\newblock \bibinfo{title}{End-to-end multi-task learning with attention}, in: \bibinfo{booktitle}{Proceedings of the IEEE/CVF conference on computer vision and pattern recognition}, pp. \bibinfo{pages}{1871--1880}.
\bibitem[{Liu et~al.(2021)Liu, Tong and Liu}]{liu2021profiling}
\bibinfo{author}{Liu, X.}, \bibinfo{author}{Tong, X.}, \bibinfo{author}{Liu, Q.}, \bibinfo{year}{2021}.
\newblock \bibinfo{title}{Profiling pareto front with multi-objective stein variational gradient descent}.
\newblock \bibinfo{journal}{Advances in Neural Information Processing Systems} \bibinfo{volume}{34}, \bibinfo{pages}{14721--14733}.
\bibitem[{Lu et~al.(2023)Lu, Li, Wu and Zhou}]{lu2023physics}
\bibinfo{author}{Lu, J.}, \bibinfo{author}{Li, C.}, \bibinfo{author}{Wu, X.B.}, \bibinfo{author}{Zhou, X.S.}, \bibinfo{year}{2023}.
\newblock \bibinfo{title}{Physics-informed neural networks for integrated traffic state and queue profile estimation: A differentiable programming approach on layered computational graphs}.
\newblock \bibinfo{journal}{Transportation Research Part C: Emerging Technologies} \bibinfo{volume}{153}, \bibinfo{pages}{104224}.
\bibitem[{Ma et~al.(2020)Ma, Du and Matusik}]{ma2020efficient}
\bibinfo{author}{Ma, P.}, \bibinfo{author}{Du, T.}, \bibinfo{author}{Matusik, W.}, \bibinfo{year}{2020}.
\newblock \bibinfo{title}{Efficient continuous pareto exploration in multi-task learning}, in: \bibinfo{booktitle}{International Conference on Machine Learning}, \bibinfo{organization}{PMLR}. pp. \bibinfo{pages}{6522--6531}.
\bibitem[{Ma et~al.(2015)Ma, Tao, Wang, Yu and Wang}]{ma2015long}
\bibinfo{author}{Ma, X.}, \bibinfo{author}{Tao, Z.}, \bibinfo{author}{Wang, Y.}, \bibinfo{author}{Yu, H.}, \bibinfo{author}{Wang, Y.}, \bibinfo{year}{2015}.
\newblock \bibinfo{title}{Long short-term memory neural network for traffic speed prediction using remote microwave sensor data}.
\newblock \bibinfo{journal}{Transportation Research Part C: Emerging Technologies} \bibinfo{volume}{54}, \bibinfo{pages}{187--197}.
\bibitem[{Mo et~al.(2021)Mo, Shi and Di}]{mo2021physics}
\bibinfo{author}{Mo, Z.}, \bibinfo{author}{Shi, R.}, \bibinfo{author}{Di, X.}, \bibinfo{year}{2021}.
\newblock \bibinfo{title}{A physics-informed deep learning paradigm for car-following models}.
\newblock \bibinfo{journal}{Transportation research part C: emerging technologies} \bibinfo{volume}{130}, \bibinfo{pages}{103240}.
\bibitem[{Ni(2015)}]{ni2015traffic}
\bibinfo{author}{Ni, D.}, \bibinfo{year}{2015}.
\newblock \bibinfo{title}{Traffic flow theory: Characteristics, experimental methods, and numerical techniques}.
\newblock \bibinfo{publisher}{Butterworth-Heinemann}.
\bibitem[{Pereira et~al.(2022)Pereira, Lang and Kulcs{\'a}r}]{pereira2022short}
\bibinfo{author}{Pereira, M.}, \bibinfo{author}{Lang, A.}, \bibinfo{author}{Kulcs{\'a}r, B.}, \bibinfo{year}{2022}.
\newblock \bibinfo{title}{Short-term traffic prediction using physics-aware neural networks}.
\newblock \bibinfo{journal}{Transportation research part C: emerging technologies} \bibinfo{volume}{142}, \bibinfo{pages}{103772}.
\bibitem[{Polson and Sokolov(2017)}]{polson2017deep}
\bibinfo{author}{Polson, N.G.}, \bibinfo{author}{Sokolov, V.O.}, \bibinfo{year}{2017}.
\newblock \bibinfo{title}{Deep learning for short-term traffic flow prediction}.
\newblock \bibinfo{journal}{Transportation Research Part C: Emerging Technologies} \bibinfo{volume}{79}, \bibinfo{pages}{1--17}.
\bibitem[{Richards(1956)}]{richards1956shock}
\bibinfo{author}{Richards, P.I.}, \bibinfo{year}{1956}.
\newblock \bibinfo{title}{Shock waves on the highway}.
\newblock \bibinfo{journal}{Operations research} \bibinfo{volume}{4}, \bibinfo{pages}{42--51}.
\bibitem[{Sener and Koltun(2018)}]{sener2018multi}
\bibinfo{author}{Sener, O.}, \bibinfo{author}{Koltun, V.}, \bibinfo{year}{2018}.
\newblock \bibinfo{title}{Multi-task learning as multi-objective optimization}.
\newblock \bibinfo{journal}{Advances in neural information processing systems} \bibinfo{volume}{31}.
\bibitem[{Shi et~al.(2021)Shi, Mo, Huang, Di and Du}]{shi2021physics}
\bibinfo{author}{Shi, R.}, \bibinfo{author}{Mo, Z.}, \bibinfo{author}{Huang, K.}, \bibinfo{author}{Di, X.}, \bibinfo{author}{Du, Q.}, \bibinfo{year}{2021}.
\newblock \bibinfo{title}{A physics-informed deep learning paradigm for traffic state and fundamental diagram estimation}.
\newblock \bibinfo{journal}{IEEE Transactions on Intelligent Transportation Systems} \bibinfo{volume}{23}, \bibinfo{pages}{11688--11698}.
\bibitem[{Tang et~al.(2024)Tang, Jin and Ozbay}]{tang2024physics}
\bibinfo{author}{Tang, Y.}, \bibinfo{author}{Jin, L.}, \bibinfo{author}{Ozbay, K.}, \bibinfo{year}{2024}.
\newblock \bibinfo{title}{Physics-informed machine learning for calibrating macroscopic traffic flow models}.
\newblock \bibinfo{journal}{Transportation Science} \bibinfo{volume}{58}, \bibinfo{pages}{1389--1402}.
\bibitem[{Treiber et~al.(2000)Treiber, Hennecke and Helbing}]{treiber2000congested}
\bibinfo{author}{Treiber, M.}, \bibinfo{author}{Hennecke, A.}, \bibinfo{author}{Helbing, D.}, \bibinfo{year}{2000}.
\newblock \bibinfo{title}{Congested traffic states in empirical observations and microscopic simulations}.
\newblock \bibinfo{journal}{Physical review E} \bibinfo{volume}{62}, \bibinfo{pages}{1805}.
\bibitem[{U{\u{g}}urel et~al.(2024)U{\u{g}}urel, Huang and Chen}]{uugurel2024learning}
\bibinfo{author}{U{\u{g}}urel, E.}, \bibinfo{author}{Huang, S.}, \bibinfo{author}{Chen, C.}, \bibinfo{year}{2024}.
\newblock \bibinfo{title}{Learning to generate synthetic human mobility data: A physics-regularized gaussian process approach based on multiple kernel learning}.
\newblock \bibinfo{journal}{Transportation Research Part B: Methodological} \bibinfo{volume}{189}, \bibinfo{pages}{103064}.
\bibitem[{Wang et~al.(2020)Wang, Xing, Kirby and Zhe}]{wang2020physics}
\bibinfo{author}{Wang, Z.}, \bibinfo{author}{Xing, W.}, \bibinfo{author}{Kirby, R.}, \bibinfo{author}{Zhe, S.}, \bibinfo{year}{2020}.
\newblock \bibinfo{title}{Physics regularized gaussian processes}.
\newblock \bibinfo{journal}{arXiv preprint arXiv:2006.04976} .
\bibitem[{Wolfe(1976)}]{wolfe1976finding}
\bibinfo{author}{Wolfe, P.}, \bibinfo{year}{1976}.
\newblock \bibinfo{title}{Finding the nearest point in a polytope}.
\newblock \bibinfo{journal}{Mathematical Programming} \bibinfo{volume}{11}, \bibinfo{pages}{128--149}.
\bibitem[{Wu et~al.(2018)Wu, Tan, Qin, Ran and Jiang}]{wu2018hybrid}
\bibinfo{author}{Wu, Y.}, \bibinfo{author}{Tan, H.}, \bibinfo{author}{Qin, L.}, \bibinfo{author}{Ran, B.}, \bibinfo{author}{Jiang, Z.}, \bibinfo{year}{2018}.
\newblock \bibinfo{title}{A hybrid deep learning based traffic flow prediction method and its understanding}.
\newblock \bibinfo{journal}{Transportation Research Part C: Emerging Technologies} \bibinfo{volume}{90}, \bibinfo{pages}{166--180}.
\bibitem[{Xu et~al.(2023)Xu, Di, Ding, Zhu, Chen and Yang}]{xu2023agnp}
\bibinfo{author}{Xu, M.}, \bibinfo{author}{Di, Y.}, \bibinfo{author}{Ding, H.}, \bibinfo{author}{Zhu, Z.}, \bibinfo{author}{Chen, X.}, \bibinfo{author}{Yang, H.}, \bibinfo{year}{2023}.
\newblock \bibinfo{title}{Agnp: Network-wide short-term probabilistic traffic speed prediction and imputation}.
\newblock \bibinfo{journal}{Communications in Transportation Research} \bibinfo{volume}{3}, \bibinfo{pages}{100099}.
\bibitem[{Xue et~al.(2024)Xue, Ka, Feng and Ukkusuri}]{xue2024network}
\bibinfo{author}{Xue, J.}, \bibinfo{author}{Ka, E.}, \bibinfo{author}{Feng, Y.}, \bibinfo{author}{Ukkusuri, S.V.}, \bibinfo{year}{2024}.
\newblock \bibinfo{title}{Network macroscopic fundamental diagram-informed graph learning for traffic state imputation}.
\newblock \bibinfo{journal}{Transportation Research Part B: Methodological} , \bibinfo{pages}{102996}.
\bibitem[{Yang et~al.(2018)Yang, Zhu, Liu, Wu and Ran}]{yang2018novel}
\bibinfo{author}{Yang, D.}, \bibinfo{author}{Zhu, L.}, \bibinfo{author}{Liu, Y.}, \bibinfo{author}{Wu, D.}, \bibinfo{author}{Ran, B.}, \bibinfo{year}{2018}.
\newblock \bibinfo{title}{A novel car-following control model combining machine learning and kinematics models for automated vehicles}.
\newblock \bibinfo{journal}{IEEE Transactions on Intelligent Transportation Systems} \bibinfo{volume}{20}, \bibinfo{pages}{1991--2000}.
\bibitem[{Yuan et~al.(2020)Yuan, Wang and Yang}]{yuan2020modeling}
\bibinfo{author}{Yuan, Y.}, \bibinfo{author}{Wang, Q.}, \bibinfo{author}{Yang, X.T.}, \bibinfo{year}{2020}.
\newblock \bibinfo{title}{Modeling stochastic microscopic traffic behaviors: a physics regularized gaussian process approach}.
\newblock \bibinfo{journal}{arXiv preprint arXiv:2007.10109} .
\bibitem[{Yuan et~al.(2021a)Yuan, Wang and Yang}]{yuan2021traffic}
\bibinfo{author}{Yuan, Y.}, \bibinfo{author}{Wang, Q.}, \bibinfo{author}{Yang, X.T.}, \bibinfo{year}{2021}a.
\newblock \bibinfo{title}{Traffic flow modeling with gradual physics regularized learning}.
\newblock \bibinfo{journal}{IEEE Transactions on Intelligent Transportation Systems} \bibinfo{volume}{23}, \bibinfo{pages}{14649--14660}.
\bibitem[{Yuan et~al.(2021b)Yuan, Zhang, Yang and Zhe}]{yuan2021macroscopic}
\bibinfo{author}{Yuan, Y.}, \bibinfo{author}{Zhang, Z.}, \bibinfo{author}{Yang, X.T.}, \bibinfo{author}{Zhe, S.}, \bibinfo{year}{2021}b.
\newblock \bibinfo{title}{Macroscopic traffic flow modeling with physics regularized gaussian process: A new insight into machine learning applications in transportation}.
\newblock \bibinfo{journal}{Transportation Research Part B: Methodological} \bibinfo{volume}{146}, \bibinfo{pages}{88--110}.
\bibitem[{Zhang et~al.(2025)Zhang, Yuan, Li, Lu and Yang}]{zhang2025empirical}
\bibinfo{author}{Zhang, Z.}, \bibinfo{author}{Yuan, Y.}, \bibinfo{author}{Li, M.}, \bibinfo{author}{Lu, P.}, \bibinfo{author}{Yang, X.T.}, \bibinfo{year}{2025}.
\newblock \bibinfo{title}{Empirical study of the effects of physics-guided machine learning on freeway traffic flow modelling: model comparisons using field data}.
\newblock \bibinfo{journal}{Transportmetrica A: Transport Science} \bibinfo{volume}{21}, \bibinfo{pages}{2264949}.
\bibitem[{Zhou et~al.(2017)Zhou, Qu and Li}]{zhou2017recurrent}
\bibinfo{author}{Zhou, M.}, \bibinfo{author}{Qu, X.}, \bibinfo{author}{Li, X.}, \bibinfo{year}{2017}.
\newblock \bibinfo{title}{A recurrent neural network based microscopic car following model to predict traffic oscillation}.
\newblock \bibinfo{journal}{Transportation research part C: emerging technologies} \bibinfo{volume}{84}, \bibinfo{pages}{245--264}.
\bibitem[{Zhu et~al.(2023)Zhu, Xu, Ke, Yang and Chen}]{zhu2023bayesian}
\bibinfo{author}{Zhu, Z.}, \bibinfo{author}{Xu, M.}, \bibinfo{author}{Ke, J.}, \bibinfo{author}{Yang, H.}, \bibinfo{author}{Chen, X.M.}, \bibinfo{year}{2023}.
\newblock \bibinfo{title}{A bayesian clustering ensemble gaussian process model for network-wide traffic flow clustering and prediction}.
\newblock \bibinfo{journal}{Transportation Research Part C: Emerging Technologies} \bibinfo{volume}{148}, \bibinfo{pages}{104032}.

\end{thebibliography}

\end{document}